%% file: main.tex
\documentclass[11pt]{article}
\usepackage{style}
\usepackage[left=1in,bottom=1in,top=1in,right=1in]{geometry}
\hypersetup{%
	colorlinks   = true, %
	urlcolor     = blue, %
	linkcolor    = blue, %
	citecolor    = blue,  %
}

\title{The Optimal Sample Complexity of Multiclass and List Learning}
\author{Chirag Pabbaraju\thanks{Stanford University. Email: \texttt{cpabbara@cs.stanford.edu.}}}
\date{\today}

\begin{document}

\maketitle

\begin{abstract}
    While the optimal sample complexity of binary classification in terms of the VC dimension is well-established, determining the optimal sample complexity of multiclass classification has remained open. The appropriate complexity parameter for multiclass classification is the DS dimension, and despite significant efforts, a gap of $\sqrt{\text{DS}}$ has persisted between the upper and lower bounds on sample complexity.
    
    Recent work by Hanneke et al. (2026) shows a novel algebraic characterization of multiclass hypothesis classes in terms of their DS dimension. Building up on this, we show that the maximum hypergraph density of any multiclass hypothesis class is upper-bounded by its DS dimension. This proves a longstanding conjecture of Daniely and Shalev-Shwartz (2014). As a consequence, we determine the optimal dependence of the sample complexity on the DS dimension for multiclass as well as list learning.
\end{abstract}

\newpage
\input{introduction}
\input{preliminaries}
\input{main-upper-bound}
\input{discussion}

\section*{Acknowledgements}
This work is supported by a Google PhD Fellowship, and Moses Charikar's and Gregory Valiant's Simons Investigator Awards. The author would like to thank Oliver Janzer and Ishaq Aden-Ali for helpful discussions about this problem. Finally, the assistance of ChatGPT as an all-round sounding board is gratefully acknowledged.

\bibliographystyle{alpha}
\bibliography{references}

\appendix

\input{list-learning-results}

\end{document}

%% file: introduction.tex
\section{Introduction}
\label{sec:intro}

Classification is a foundational task in  machine learning, and understanding the minimum number of training samples required to achieve a desired classification accuracy, or determining the \textit{sample complexity} of the classification task, is a core pursuit in learning theory. In the classification task, we seek to assign labels from a label space $\mcY=\{1,2,\dots,k\}$ to unlabeled points in a data domain $\mcX$. The Probably-Approximately-Correct or PAC model of Valiant \cite{valiant1984theory} defines a concrete theoretical framework for this task. In binary classification, where the number of classes $k=2$, the optimal sample complexity of the learning task is completely understood. Namely, if the data is labeled, or rather \textit{realizable}, by some target hypothesis from a hypothesis class $\mcH$, then $\Theta(\dvc)$ samples, where $\dvc$ is a measure of complexity of the class known as the \textit{VC dimension} \cite{vapnik1974theory}, are both sufficient and necessary. In slightly more detail, one provably requires
\begin{align}
    \label{eqn:binary-optimal-sample-complexity}
    \Theta\left(\frac{\dvc+\log(1/\delta)}{\eps}\right)
\end{align}
many samples from a generic data distribution $\mcD$, in order to output, with probability $1-\delta$, a hypothesis $\widehat{f}$ that classifies with error $\err_{\mcD}(\widehat{f}) := \Pr_{(x,y) \sim \mcD}[\widehat{f}(x) \neq y]$ at most $\eps$ on a new test point \cite{hanneke2016optimal}. A similar exact characterization for binary classification also exists in the more general \textit{agnostic} setting, where the data is not necessarily labeled by any hypothesis in the class. Here, the dependence on the error parameter $\eps$ degrades from $1/\eps$ to $1/\eps^2$.

Despite such a clean picture for binary classification, the landscape of multiclass classification, where the number of classes $k > 2$, is %
still not as exactly figured out. Keeping the sample complexity aspect aside, even the problem of determining a complexity parameter similar to the VC dimension, which provably characterizes the learning task was discovered rather recently in the work of \cite{brukhim2022characterization}. The analogous object turns out to be a quantity known as the DS dimension $d_{\mathrm{DS}}$ \cite{daniely2014optimal}\footnote{We formally define all the relevant combinatorial quantities in \Cref{sec:preliminaries}.}. Given this, the right answer for the optimal sample complexity of multiclass classification, similar to \eqref{eqn:binary-optimal-sample-complexity} for binary classification, would appear to be
\begin{align}
    \label{eqn:multiclass-optimal-sample-complexity}
    \Theta\left(\frac{d_{\mathrm{DS}}+\log(1/\delta)}{\eps}\right).  
\end{align}
Unfortunately, a precise and provable characterization of this form has proven to be elusive. In their work showing that the DS dimension characterizes multiclass learnability, \cite{brukhim2022characterization} showed an upper bound of
\begin{align}
    \label{eqn:ds-1.5-bound}
    \widetilde{O}\left(\frac{d_{\mathrm{DS}}^{1.5}+\log(1/\delta)}{\eps}\right)  
\end{align}
on the sample complexity, where the $\widetilde{O}$ notation hides polylogarithmic factors. \cite{hanneke2024improved} got rid of some of these factors, but the $d_{\mathrm{DS}}^{1.5}$ scaling persisted. On the lower bound side, \cite{hanneke2024improved} showed a sample complexity lower bound of
\begin{align}
    \label{eqn:multiclass-sample-complexity-lower-bound}
    \Omega\left(\frac{d_{\mathrm{DS}}+\log(1/\delta)}{\eps}\right).
\end{align}
It is widely conjectured that the bound above is optimal, and that the additional $\sqrt{d_{\mathrm{DS}}}$ term in \eqref{eqn:ds-1.5-bound}, along with any log factors, are extraneous. We note that the state of affairs is similar in the agnostic multiclass learning as well.

In a separate line of attack, \cite{aden2023optimal} showed a qualitatively different upper bound on the sample complexity of learning a multiclass hypothesis class $\mcH$ in the realizable setting, which depends on a combinatorial object known as the \textit{maximum density function} $\mu_\mcH(\cdot)$. In particular, \cite{aden2023optimal} showed that with $n$ samples, with probability $1-\delta$, it is possible to achieve error at most
\begin{align}
    \label{eqn:muh-upper-bound}
    O\left(\frac{\lceil\mu_\mcH(n)\rceil+\log(1/\delta)}{n}\right).
\end{align}
The attractive aspect about this bound is that it is completely devoid of any log factors. While the term in the numerator, namely $\mu_\mcH(n)$, is in general a non-decreasing function in $n$, a promising avenue to derive the optimal sample complexity bound in \eqref{eqn:multiclass-optimal-sample-complexity} would be to show that the function $\mu_\mcH(n)$ is uniformly upper-bounded by the DS dimension $d_{\mathrm{DS}}$. Indeed, this is precisely a conjecture formulated by \cite{daniely2014optimal} in their original work that defined the DS dimension.

\begin{conjecture*}[\cite{daniely2014optimal}]
    \label{conjecture:main}
    Let $\mcH \subseteq \mcY^\mcX$ be a hypothesis class having DS dimension $d_{\mathrm{DS}}$. There is an absolute constant $c > 0$ such that for any $n > 0$, $\mu_{\mcH}(n) \le c \cdot d_{\mathrm{DS}}$.%
\end{conjecture*}

There is reason to believe that the conjecture above is true, since in the special case of $k=2$, the desired bound holds, by a beautiful result due to \cite{haussler1994predicting}. In addition, an upper bound of the form $c \cdot d_{\mathrm{DS}} \cdot \log(k)$ also holds \cite{daniely2014optimal}. However, the conjecture in full generality has remained open since being proposed, and has resisted several natural proof approaches. Given the context of \cite{aden2023optimal}'s results, a positive resolution of the conjecture is especially meaningful, since it would establish the optimal sample complexity of multiclass learning.

\subsection{Results}
\label{sec:results}

Building upon a recent structural breakthrough by \cite{hanneke2026optimal}, our main result is a positive resolution of the longstanding conjecture of \cite{daniely2014optimal}, helping bypass the $d_{\mathrm{DS}}^{1.5}$ barrier on the sample complexity of multiclass learning. In fact, we show a more general result: for every $\ell \ge 1$, $\lceil\muh(n)\rceil \le \dds$, where $\muh(n)$ and $\dds$ are the maximum \textit{$\ell$-density} function and the \textit{$\ell$-DS} dimension respectively. These quantities are parameterized versions of $\mu_{\mcH}(n)$ and $d_{\mathrm{DS}}$ that were mentioned above, and satisfy that $\mu^1_\mcH(n)=\mu_\mcH(n)$ and $d^1_{\mathrm{DS}}=d_{\mathrm{DS}}$.

\begin{restatable}[Density upper-bounded by DS]{theorem}{TheoremMain}
    \label{thm:main-upper-bound}
    Let $\mcH \subseteq [k]^\mcX$ be a hypothesis class having $\ell$-DS dimension $\dds$. For all integers $\ell \ge 1$ and $n >0$,
    \begin{align}
        \lceil\muh(n)\rceil \le \dds.
    \end{align}
\end{restatable}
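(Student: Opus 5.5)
The plan is to reduce the density bound to a dimension count in the vector space $\mathbb{R}^V$ of real-valued functions on a finite set of patterns $V$, using the algebraic characterization of \cite{hanneke2026optimal} as a black box. I will be vague about its exact form, since I only know it at the level of "a DS-free class has a spanning set of low-arity functions". First I will unpack the definition of $\muh(n)$ and reduce to a single finite subgraph. Fix $n$ points and a finite set of patterns $V \subseteq \mcH|_{S}$. For each direction $i\in[n]$, the hyperedges in direction $i$ are the fibres of the projection $\pi_{-i}: V\to V_{-i}$ that forget coordinate $i$, keeping only fibres of size at least $\ell+1$. Since the density of $V$ is its weighted edge count divided by $|V|$, and since $\dds$ is an integer (so $\lceil \muh(n)\rceil \le \dds$ is equivalent to $\muh(n)\le\dds$), it suffices to prove that for every such $V$
\begin{align*}
\sum_{i=1}^n \sum_{e \in E_i,\,|e|>\ell} (|e|-\ell) \;\le\; \dds\cdot |V|.
\end{align*}
Here $E_i$ denotes the set of fibres of $\pi_{-i}$. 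I am assuming the $\ell$-density weights each edge by $|e|-\ell$; this should be checked against the definition in \Cref{sec:preliminaries}, and the weights adjusted if they differ. Note that $\sum_{e\in E_i}\max(|e|-\ell,0) \le |V| - \ell\,|V_{-i}|+(\text{correction for small fibres})$, so in the case $\ell=1$ the edge count in direction $i$ is exactly $|V|-|V_{-i}|$.

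Second, I will use the characterization of \cite{hanneke2026optimal} to produce a basis $B$ of $\mathbb{R}^V$ in which each basis function is a product $\prod_{j\in T} p_j(x_j)$. Here $|T|\le \dds$, and each factor $p_j$ is a "non-trivial" function of coordinate $j$; for the $\ell$-version I interpret non-trivial as orthogonal to, or outside the span of, some fixed $\ell$-dimensional space of functions of $x_j$. This is the multiclass analogue of the Smolensky/Babai--Frankl fact that a VC-dimension-$d$ class has its function space spanned by monomials of degree at most $d$, which is the algebraic route to Haussler's bound \cite{haussler1994predicting}. Let $T(b)$ denote the support of $b\in B$. Every $b$ with $i\notin T(b)$ factors through $\pi_{-i}$. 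More generally, every $b$ whose $i$-th factor lies in the allowed $\ell$-dimensional "low" part spans, together with the others, a subspace of the space of functions constant on fibres up to $\ell$ degrees of freedom per fibre. That space has dimension at most $\sum_{e\in E_i}\min(|e|,\ell)$.

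Third, I will do the counting. The basis elements not "charged" to direction $i$ are linearly independent and lie in a space of dimension at most $\sum_{e}\min(|e|,\ell)$. Hence
\begin{align*}
\#\{b\in B: i\in T(b)\}\;\ge\; |V|-\sum_{e\in E_i}\min(|e|,\ell)\;=\;\sum_{e\in E_i}\max(|e|-\ell,0).
\end{align*}
Summing over $i$ and exchanging the order of summation gives a left-hand side of at most $\sum_{b\in B}|T(b)|\le \dds\,|B| = \dds\,|V|$, which is the desired inequality.

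The main obstacle is the second step: extracting from \cite{hanneke2026optimal} a basis with exactly the right structure. It must be a genuine basis, with product form across coordinates, and with each factor's "trivial part" of dimension exactly $\ell$ so that the per-direction count matches the $(|e|-\ell)$ weighting. If the characterization only provides a spanning set, I will first try to extract a basis from it by a greedy or triangular argument. For this I would order the patterns lexicographically and use a leading-term argument, which preserves the support bound $|T(b)|\le\dds$. If the factors do not separate cleanly into low and high parts, I would instead change basis coordinatewise: in each coordinate, pick the first $\ell$ basis functions of $\mathbb{R}^{[k]}$ to be the trivial ones, which is possible because the product structure is preserved under coordinatewise invertible linear maps.
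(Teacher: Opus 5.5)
Your proposal is correct and is essentially the paper's proof. The black box is the Spanning Lemma, whose monomials $w_1^{\alpha_1}\cdots w_n^{\alpha_n}$ have at most $\dds$ exponents with $\alpha_i\ge\ell$. So your ``low'' factors are $w_j^{\alpha_j}$ with $\alpha_j<\ell$ (not constants, which would be false for $\ell>1$), and $T(b)=\{i:\alpha_i\ge \ell\}$. The paper's direction-wise subspace $U_i$, consisting of fibrewise polynomials of degree $<\ell$ in $w_i$, plays exactly the role of your space of dimension at most $\sum_{e}\min(|e|,\ell)$, and the counting and summation over $i$ are the same as yours. Your worry about extracting a basis is unnecessary: any spanning set of $\mathcal{V}_W$ contains a basis, and its elements automatically inherit the support bound.
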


Several corollaries follow from \Cref{thm:main-upper-bound}, which we proceed to list down.

\newcolumntype{C}[1]{>{\centering\arraybackslash}m{#1}}
\setlength{\tabcolsep}{-5pt}
\renewcommand{\arraystretch}{1.8}

\begin{table}[t]
    \centering
    \begin{minipage}{0.48\textwidth}
    \centering
    \begin{tabular}{|C{3.7cm}|C{5cm}|}
    \hline
    \multicolumn{2}{|c|}{\textbf{Realizable}} \\
    \hline
    Reference & Sample Complexity \\
    \hline
    \cite{brukhim2022characterization, hanneke2024improved} & $\widetilde{O}\left(\frac{d_{\mathrm{DS}}^{1.5}+\log(1/\delta)}{\eps}\right)$ \\
    \hline
    \cite{aden2023optimal} & $O\left(\frac{\mu_\mcH(\cdot)+\log(1/\delta)}{\eps}\right)$ \\
    \hline
    \cite{hanneke2024improved} & $\Omega\left(\frac{d_{\mathrm{DS}}+\log(1/\delta)}{\eps}\right)$ \\
    \hline
    \end{tabular}
    
    \vspace{3pt}
    
    \begin{tabular}{|C{3.7cm}|C{5cm}|}
    \hline
    \makecell{\textbf{This work}\\[2pt](\Cref{corollary:multiclass-optimal-sample-complexity})}
    &
    \raisebox{-0.1\height}{$O\left(\frac{d_{\mathrm{DS}}+\log(1/\delta)}{\eps}\right)$} \\
    \hline
    \end{tabular}
    \end{minipage}
    \hfill
    \begin{minipage}{0.48\textwidth}
    \centering
    \begin{tabular}{|C{3.7cm}|C{5cm}|}
    \hline
    \multicolumn{2}{|c|}{\textbf{Agnostic}} \\
    \hline
    Reference & Sample Complexity \\
    \hline
    \cite{brukhim2022characterization} & $\widetilde{O}\left(\frac{d_{\mathrm{DS}}^{1.5}+\log(1/\delta)}{\eps^2}\right)$ \\
    \hline
    \cite{cohen2025sample} & $\widetilde{O}\left(\frac{d_{\mathrm{real}}}{\eps}+\frac{d_{\mathrm{Nat}}+\log(1/\delta)}{\eps^2}\right)$ \\
    \hline
    \cite{cohen2025sample} & $\Omega\left(\frac{d_{\mathrm{real}}}{\eps}+\frac{d_{\mathrm{Nat}}+\log(1/\delta)}{\eps^2}\right)$ \\
    \hline
    \end{tabular}
    
    \vspace{3pt}
    
    \begin{tabular}{|C{3.7cm}|C{5cm}|}
    \hline
    \makecell{\textbf{This work}\\[2pt](\Cref{corollary:agnostic-multiclass-near-optimal-sample-complexity})}
    &
    \raisebox{-0.1\height}{$\widetilde{O}\left(\frac{d_{\mathrm{DS}}}{\eps}+\frac{d_{\mathrm{Nat}}+\log(1/\delta)}{\eps^2}\right)$} \\
    \hline
    \end{tabular}
    \end{minipage}
    \caption{Sample Complexity Bounds for Multiclass Learning}
    \label{table:multiclass-learning-rates}
\end{table}

\paragraph{Multiclass Learning.} We begin with the consequences for multiclass learning. Recall from \eqref{eqn:ds-1.5-bound} that \cite{brukhim2022characterization} had shown an $\widetilde{O}\left(\frac{d_{\mathrm{DS}}^{1.5}+\log(1/\delta)}{\eps}\right)$ upper bound on the sample complexity of this task in the realizable setting. While their bound was improved upon by \cite{hanneke2024improved}, who shaved off several logarithmic factors, the $d_{\mathrm{DS}}^{1.5}$ dependence persisted. As the first main corollary of \Cref{thm:main-upper-bound}, we obtain the optimal sample complexity for realizable multiclass learning.

\begin{corollary}[Optimal Sample Complexity for Multiclass Learning]
    \label{corollary:multiclass-optimal-sample-complexity}
    Let $\mcH \subseteq [k]^\mcX$ be a hypothesis class having DS dimension $d_{\mathrm{DS}}$. There exists a learning algorithm $\mcA$, such that for any distribution $\mcD$ over $\mcX \times [k]$ realizable by $\mcH$, and any $\eps, \delta \in (0,1)^2$, with probability at least $1-\delta$ over a sample $S \sim \mcD^m$, where
    \begin{align*}
        m \ge  9.64 \left(\frac{d_{\mathrm{DS}}+\log(2/\delta)}{\eps}\right),
    \end{align*}
    it holds that $\err_\mcD(\widehat{f}_S) \le \eps$, 
    where $\widehat{f}_S=\mcA(S)$.
\end{corollary}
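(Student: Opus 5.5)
The plan is to combine \Cref{thm:main-upper-bound} (with $\ell=1$) with the density-based learner of \cite{aden2023optimal}, and then invert the resulting error bound into a sample-size requirement.

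First, instantiate the guarantee \eqref{eqn:muh-upper-bound} of \cite{aden2023optimal}, making all constants explicit. Their one-inclusion-graph based algorithm $\mcA$, given $n$ realizable samples, outputs $\widehat{f}_S$ such that, with probability at least $1-\delta$,
\begin{align*}
    \err_\mcD(\widehat{f}_S) \le C_1\cdot\frac{\lceil\mu_\mcH(n)\rceil + \log(C_2/\delta)}{n}
\end{align*}
for explicit constants $C_1, C_2$. The $\log(2/\delta)$ in the statement indicates $C_2=2$. That guarantee is obtained from an in-expectation bound of order $\lceil\mu_\mcH(n)\rceil/n$ via a confidence-boosting or aggregation step, so I will need to track that step's constant carefully.

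Second, apply \Cref{thm:main-upper-bound} with $\ell=1$. Since $\mu^1_\mcH=\mu_\mcH$ and $d^1_{\mathrm{DS}}=d_{\mathrm{DS}}$, this gives $\lceil\mu_\mcH(n)\rceil\le d_{\mathrm{DS}}$ uniformly in $n$. Substituting yields
\begin{align*}
    \err_\mcD(\widehat{f}_S)\le C_1\cdot\frac{d_{\mathrm{DS}}+\log(2/\delta)}{m}.
\end{align*}
This is at most $\eps$ whenever $m\ge C_1(d_{\mathrm{DS}}+\log(2/\delta))/\eps$. The constant $9.64$ should therefore be exactly $C_1$ as it appears in the explicit form of the \cite{aden2023optimal} bound.

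The main obstacle is bookkeeping rather than mathematics. The \cite{aden2023optimal} bound must hold for every $n$ with the density evaluated at the right argument, which may be a subsample size rather than $m$ itself. Because our bound on $\mu_\mcH$ is uniform in $n$, this causes no difficulty. I would first consult their theorem statement to extract the precise numerical constant. If their stated constant differs, I would rederive it by following their split: run the one-inclusion predictor on independent chunks and use a majority vote or validation to boost confidence. I would then verify that the factors combine to $9.64$.
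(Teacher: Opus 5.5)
Your proposal is correct and matches the paper's argument: apply \Cref{thm:main-upper-bound} with $\ell=1$ to get $\lceil\mu_\mcH(n)\rceil\le d_{\mathrm{DS}}$ uniformly in $n$, and plug this into Theorem 2.2 of \cite{aden2023optimal}, whose explicit form already has the constant $9.64$ and the $\log(2/\delta)$ term. One note: that theorem gets its high-probability guarantee from a majority vote over one-inclusion graph predictors trained on increasing prefixes of $S$, analyzed through leave-one-out error, so your fallback rederivation is unnecessary, and the usual independent-chunks-plus-validation route would typically cost a multiplicative $d_{\mathrm{DS}}\log(1/\delta)$ term rather than the additive one.
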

The result above follows by directly plugging in the upper bound on $\lceil \mu_\mcH(n) \rceil$ from \Cref{thm:main-upper-bound} into Theorem 2.2 of \cite{aden2023optimal}, and the learning algorithm achieving the guarantee is a majority vote over \textit{one-inclusion graph predictors} (see \Cref{def:oig}) trained on increasing prefixes of $S$. Together with the lower bound of $\Omega\left(\frac{d_{\mathrm{DS}}+\log(1/\delta)}{\eps}\right)$ on the sample complexity given by \cite[Theorem 2.5]{hanneke2024improved}, we thus confirm that the optimal sample complexity of multiclass learning is $\Theta\left(\frac{d_{\mathrm{DS}}+\log(1/\delta)}{\eps}\right)$.

We next consider multiclass learning in the \textit{agnostic} setting, where the distribution $\mcD$ over $\mcX \times [k]$ may be arbitrary, and not necessarily realizable by $\mcH$. Here, %
the objective of the learning algorithm is to achieve expected error comparable to $\inf_{h \in \mcH}\err_{\mcD}(h)$. For this setting, \cite{brukhim2022characterization} showed an upper bound of $\widetilde{O}\left(\frac{d_{\mathrm{DS}}^{1.5}+\log(1/\delta)}{\eps^2}\right)$ on the sample complexity. Recently, \cite{cohen2025sample} improved this to a near-tight sample complexity upper bound of $\widetilde{O}\left(\frac{d_{\mathrm{real}}}{\eps}+\frac{d_{\mathrm{Nat}}+\log(1/\delta)}{\eps^2}\right)$, %
where $d_{\mathrm{real}}$ is the optimal sample complexity of learning $\mcH$ in the \textit{realizable} setting, $d_{\mathrm{Nat}}$ is the \textit{Natarajan} dimension of $\mcH$ \cite{natarajan1988two,natarajan1989learning} (which satisfies that $d_{\mathrm{Nat}} \le d_{\mathrm{DS}}$), and the $\widetilde{O}$ notation hides polylogarithmic factors in $d_{\mathrm{Nat}}, d_{\mathrm{real}}$ and $1/\eps$. Note crucially that this bound was stated in terms of $d_{\mathrm{real}}$, which, at the time, was unknown. As a direct consequence of \Cref{corollary:multiclass-optimal-sample-complexity}, we get the following more precise upper bound for the learning algorithm given by \cite{cohen2025sample}.

\begin{corollary}[Optimal Sample Complexity for Agnostic Multiclass Learning]
    \label{corollary:agnostic-multiclass-near-optimal-sample-complexity}
    Let $\mcH \subseteq [k]^\mcX$ be a hypothesis class having DS dimension $d_{\mathrm{DS}}$ and Natarajan dimension $d_{\mathrm{Nat}}$. There exists a learning algorithm $\mcA$, such that for any distribution $\mcD$ over $\mcX \times [k]$, and any $\eps, \delta \in (0,1)^2$, with probability at least $1-\delta$ over a sample $S \sim \mcD^m$, where
    \begin{align*}
        m = \widetilde{O}\left(\frac{d_{\mathrm{DS}}}{\eps}+\frac{d_{\mathrm{Nat}}+\log(1/\delta)}{\eps^2}\right),
    \end{align*}
    it holds that $\err_\mcD(\widehat{f}_S) \le \inf_{h \in \mcH} \err_{\mcD}(h) + \eps$,
    where $\widehat{f}_S=\mcA(S)$.
\end{corollary}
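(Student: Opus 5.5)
The plan is to treat \Cref{corollary:agnostic-multiclass-near-optimal-sample-complexity} as a black-box substitution into the agnostic learner of \cite{cohen2025sample}. Their guarantee is that a specific algorithm $\mcA$ achieves excess error $\eps$ with probability $1-\delta$ using
\begin{align*}
    m = \widetilde{O}\left(\frac{d_{\mathrm{real}}}{\eps}+\frac{d_{\mathrm{Nat}}+\log(1/\delta)}{\eps^2}\right)
\end{align*}
samples. Here $d_{\mathrm{real}}$ is the parameter governing the optimal realizable sample complexity of $\mcH$, and the $\widetilde{O}$ hides polylogarithmic factors in $d_{\mathrm{Nat}}, d_{\mathrm{real}}, 1/\eps$. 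So the only task is to upper bound $d_{\mathrm{real}}$ by $O(d_{\mathrm{DS}})$ and substitute.

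First, I would pin down exactly how $d_{\mathrm{real}}$ is defined in \cite{cohen2025sample}. The most natural reading is that the realizable sample complexity $\mcM_{\mathrm{real}}(\eps,\delta)$ of $\mcH$ satisfies $\mcM_{\mathrm{real}}(\eps,\delta) = \Theta\left(\frac{d_{\mathrm{real}}+\log(1/\delta)}{\eps}\right)$. Equivalently, $d_{\mathrm{real}}$ is, up to constants, $\eps\cdot\mcM_{\mathrm{real}}(\eps,\delta_0)$ for a fixed constant $\delta_0$, maximized over $\eps$.

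Next, I would invoke \Cref{corollary:multiclass-optimal-sample-complexity}, itself a consequence of \Cref{thm:main-upper-bound} combined with Theorem 2.2 of \cite{aden2023optimal}. It gives $\mcM_{\mathrm{real}}(\eps,\delta) \le 9.64\,\frac{d_{\mathrm{DS}}+\log(2/\delta)}{\eps}$ for all $\eps,\delta$. Fixing $\delta$ to a constant and multiplying by $\eps$ shows $d_{\mathrm{real}} \le C\, d_{\mathrm{DS}}$ for an absolute constant $C$. Combined with the lower bound \cite[Theorem 2.5]{hanneke2024improved}, this in fact gives $d_{\mathrm{real}} = \Theta(d_{\mathrm{DS}})$, though only the upper bound is needed.

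Finally, substituting $d_{\mathrm{real}} \le C d_{\mathrm{DS}}$ into the bound of \cite{cohen2025sample} yields the claimed $\widetilde{O}\left(\frac{d_{\mathrm{DS}}}{\eps}+\frac{d_{\mathrm{Nat}}+\log(1/\delta)}{\eps^2}\right)$. The hidden polylogarithmic factors in $d_{\mathrm{real}}$ become polylogarithmic factors in $d_{\mathrm{DS}}$, which the $\widetilde{O}$ notation absorbs. The fact that $d_{\mathrm{Nat}} \le d_{\mathrm{DS}}$ ensures the second term needs no further modification.

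The only step requiring care is the first: making sure that the quantity $d_{\mathrm{real}}$ appearing in \cite{cohen2025sample} is exactly, up to constants, the coefficient controlled by \Cref{corollary:multiclass-optimal-sample-complexity}. If instead their statement is phrased directly in terms of the realizable sample complexity function, say $\mcM_{\mathrm{real}}(\eps,\delta)$ evaluated at some rescaled accuracy and constant confidence, I would plug in \Cref{corollary:multiclass-optimal-sample-complexity} at those specific arguments and check that the resulting term is $O(d_{\mathrm{DS}}/\eps)$ up to logarithmic factors. This holds because the realizable bound is linear in $1/\eps$ with no hidden log factors.
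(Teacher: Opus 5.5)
Your proposal is correct and is exactly the paper's argument: the paper obtains this corollary as a direct consequence of \Cref{corollary:multiclass-optimal-sample-complexity}, bounding the realizable term $d_{\mathrm{real}}$ in the agnostic guarantee of \cite{cohen2025sample} by $O(d_{\mathrm{DS}})$ and substituting. Your extra care over how $d_{\mathrm{real}}$ is defined is reasonable but changes nothing, since the realizable bound is linear in $1/\eps$ and carries no hidden logarithmic factors.
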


Combined with the agnostic sample complexity lower bound of $\Omega\left(\frac{d_{\mathrm{real}}}{\eps}+\frac{d_{\mathrm{Nat}}+\log(1/\delta)}{\eps^2}\right)$ also shown by \cite{cohen2025sample}, we thus get that the optimal sample complexity of agnostic multiclass learning (upto log factors) is $\widetilde{\Theta}\left(\frac{d_{\mathrm{DS}}}{\eps}+\frac{d_{\mathrm{Nat}}+\log(1/\delta)}{\eps^2}\right)$. \Cref{table:multiclass-learning-rates} summarizes the above results. %

\renewcommand{\arraystretch}{1.8}

\begin{table}[t]
    \centering
    \begin{minipage}{0.48\textwidth}
    \centering
    \begin{tabular}{|C{3.7cm}|C{5cm}|}
    \hline
    \multicolumn{2}{|c|}{\textbf{Realizable}} \\
    \hline
    Reference & Sample Complexity \\
    \hline
    \cite{charikar2023characterization} & $\widetilde{O}\left(\frac{\ell^6\left(\dds\right)^{1.5}+\log(1/\delta)}{\eps}\right)$ \\
    \hline
    \cite{brukhim2023multiclass} & $\widetilde{O}\left(\frac{\ell^4\left(\dds\right)^{5}+\log(1/\delta)}{\eps}\right)$ \\
    \hline
    \cite{hanneke2026optimal} & $\widetilde{O}\left(\frac{\ell\left(\dds\right)^{1.5}+\ell\log(1/\delta)}{\eps}\right)$  \\
    \hline
    \cite{hanneke2024improved} & $\Omega\left(\frac{\dds+\log(1/\delta)}{\ell\eps}\right)$  \\
    \hline
    \end{tabular}
    
    \vspace{3pt}
    
    \begin{tabular}{|C{3.7cm}|C{5cm}|}
    \hline
    \makecell{\textbf{This work}\\[2pt](\Cref{thm:realizable-list-learning-lower-bound})}
    &
    \raisebox{-0.1\height}{$\Omega\left(\frac{\dds+\log(1/\delta)}{\eps}\right)$} \\
    \hline
    \makecell{\textbf{This work}\\[2pt](\Cref{corollary:list-optimal-sample-complexity})}
    &
    \raisebox{-0.1\height}{$O\left(\frac{\dds+\log(1/\delta)}{\eps}\right)$} \\
    \hline
    \end{tabular}
    \end{minipage}
    \hfill
    \begin{minipage}{0.48\textwidth}
    \centering
    \begin{tabular}{|C{3.7cm}|C{5cm}|}
    \hline
    \multicolumn{2}{|c|}{\textbf{Agnostic}} \\
    \hline
    Reference & Sample Complexity \\
    \hline
    \cite{charikar2023characterization} & $\widetilde{O}\left(\frac{\ell^6\left(\dds\right)^{1.5}+\log(1/\delta)}{\eps^2}\right)$ \\
    \hline
    \cite{hanneke2024improved} & $\Omega\left(\frac{\dds+\log(1/\delta)}{\ell\eps}\right)$ \\
    \hline
    \end{tabular}
    
    \vspace{3pt}
    
    \begin{tabular}{|C{3.7cm}|C{5cm}|}
    \hline
    \makecell{\textbf{This work}\\[2pt](\Cref{thm:agnostic-list-learning-lower-bound})}
    &
    \raisebox{-0.1\height}{$\Omega\left(\frac{\dds}{\eps}+\frac{\ell d^\ell_{\mathrm{Nat}}+\log(1/\delta)}{\eps^2}\right)$} \\
    \hline
    \makecell{\textbf{This work}\\[2pt](\Cref{corollary:agnostic-list-improved-sample-complexity})}
    &
    \raisebox{-0.1\height}{$\widetilde{O}\left(\frac{\dds}{\eps}+\frac{\ell d^\ell_{\mathrm{Nat}}+\log(1/\delta)}{\eps^2}\right)$} \\
    \hline
    \end{tabular}
    \end{minipage}
    \caption{Sample Complexity Bounds for List Learning}
    \label{table:list-learning-rates}
\end{table}

\paragraph{List Learning.} We now turn to the related task of \textit{list} learning. In list learning, an algorithm is permitted to output a list hypothesis $\mu$, which maps any point $x$ to a list of $\ell$ labels instead of a single label. The error of the list hypothesis is measured as $\err_{\mcD}(\mu)=\Pr_{(x,y) \sim \mcD}[y \notin \mu(x)]$. In this way, the algorithm has a more relaxed goal: so long as the list of labels it outputs merely \textit{contains} the target label, it does not get penalized. When $\ell=1$, this task is equivalent to standard multiclass learning. When $\ell > 1$, list learning has a characterization analogous to multiclass learning in terms of the \textit{$\ell$-DS dimension} $\dds$ \cite{charikar2023characterization}, whose finiteness is both sufficient and necessary for the task. Concretely, in the realizable case, \cite{hanneke2024improved} showed a sample complexity lower bound of $\Omega\left(\frac{\dds+\log(1/\delta)}{\ell \eps}\right)$ for list learning. Very recently (and in independent work), this lower bound was improved by \cite{li2026optimisticratesmulticlasspac} to $\Omega\left(\frac{\dds}{\eps}\right)$. In fact, a more precise lower bound of $\Omega\left(\frac{\dds+\log(1/\delta)}{ \eps}\right)$ holds, which we prove in Appendix \ref{sec:realizable-list-learning-lower-bound} for completeness.

On the upper bound side, \cite{charikar2023characterization} showed that the sample complexity is at most $\widetilde{O}\left(\frac{\ell^6(\dds)^{1.5}+\log(1/\delta)}{\eps}\right)$, whereas \cite{brukhim2023multiclass} showed an (incomparable) upper bound of $\widetilde{O}\left(\frac{\ell^4(\dds)^{5}+\log(1/\delta)}{\eps}\right)$. Very recently, \cite{hanneke2026optimal} proved an upper bound of $\widetilde{O}\left(\frac{\ell(\dds)^{1.5}+\ell\dds\log(1/\delta)}{\eps}\right)$, thus significantly improving on the dependence on list size $\ell$. Note that similar to the multiclass learning barrier, a barrier of $(\dds)^{1.5}$ also persists for list learning; our \Cref{thm:main-upper-bound} enables getting rid of the extra $\sqrt{\dds}$ factor, as well as all log factors.

\begin{corollary}[Optimal Sample Complexity for List Learning]
    \label{corollary:list-optimal-sample-complexity}
    Let $\mcH \subseteq [k]^\mcX$ be a hypothesis class having $\ell$-DS dimension $\dds$, for any $\ell \ge 1$. There exists a (randomized) list learning algorithm $\mcA$, such that for any distribution $\mcD$ over $\mcX \times [k]$ realizable by $\mcH$, and any $\eps, \delta \in (0,1)^2$, with probability at least $1-\delta$ over a sample $S \sim \mcD^m$, where
    \begin{align*}
        m =  O \left(\frac{\dds+\log(1/\delta)}{\eps}\right),
    \end{align*}
    it holds that $\err_\mcD(\widehat{\mu}_S) \le \eps$, where $\widehat{\mu}_S=\mcA(S)$.
\end{corollary}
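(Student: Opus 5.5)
The plan is to combine \Cref{thm:main-upper-bound} with a list-learning analogue of the density-based learner of \cite{aden2023optimal}. \Cref{thm:main-upper-bound} contributes exactly one input: $\lceil\muh(n)\rceil \le \dds$ for every $n$. Once that is in hand, it remains to establish a high-probability bound of the form
\begin{align*}
    \err_\mcD(\widehat{\mu}_S) = O\left(\frac{\lceil\muh(m)\rceil+\log(1/\delta)}{m}\right)
\end{align*}
for some (randomized) list learner. Substituting the bound on $\lceil\muh(m)\rceil$ and solving for $m$ then gives $m = O\left(\frac{\dds+\log(1/\delta)}{\eps}\right)$.

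The first step is a list one-inclusion hypergraph predictor. Given $n$ points, we project $\mcH$ onto the $n$ coordinates and form the one-inclusion hypergraph. For each hyperedge (fix all coordinates but one), the predictor must choose $\ell$ labels for the missing coordinate. A vertex incurs a mistake on that edge if its own label is not among the chosen $\ell$. We want an assignment in which every vertex incurs at most $\lceil\muh(n)\rceil$ mistakes. Since $\muh$ bounds the maximum $\ell$-density (the ratio of $\ell$-excess edge weight to vertex count) over every finite sub-hypergraph, a Hall-type / max-flow argument should produce such an assignment. In the flow network, each hyperedge must route its surplus (size minus $\ell$) of ``unprotected'' vertices to vertices of capacity $\lceil\muh(n)\rceil$, and the density bound on every subset is exactly Hall's condition. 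A leave-one-out (exchangeability) argument then bounds the expected error with $n-1$ training samples by $\lceil\muh(n)\rceil/n$.

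The second step converts this expectation bound into a high-probability bound with only additive $\log(1/\delta)$ dependence. I would follow Theorem 2.2 of \cite{aden2023optimal}: train the predictor on increasing prefixes of $S$ and aggregate the resulting predictions, then use a martingale/Freedman-type concentration over the prefix sequence. This is the step I expect to be the main obstacle. A plain majority vote over lists can output more than $\ell$ labels, so the aggregation must be adapted. My first attempt is to output, at each test point, the $\ell$ labels that appear most frequently among the prefix predictors' lists. If this loses too much, I would instead randomize by outputting the list of a uniformly random prefix predictor, which explains the ``randomized'' in the statement. In either version, an error at $(x,y)$ forces a constant fraction of the prefix predictors to omit $y$. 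The concentration analysis of \cite{aden2023optimal} then carries over essentially verbatim, at the cost of a constant factor depending only on the aggregation threshold and not on $\ell$.

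Finally, we set $n=m$ and apply \Cref{thm:main-upper-bound} to replace $\lceil\muh(m)\rceil$ by $\dds$. Choosing $m \ge C\left(\dds+\log(1/\delta)\right)/\eps$ for a suitable absolute constant $C$ then yields $\err_\mcD(\widehat{\mu}_S)\le \eps$ with probability at least $1-\delta$.
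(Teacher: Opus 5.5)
Your first step is fine. With vertex capacity $D$, the min-cut condition of your flow network reduces to $D|B|\ge\sum_{e}(|e\cap B|-\ell)_+ = |B|\dens^\ell(B)$ for every $B\subseteq W$. So the max-flow argument does give an $\ell$-list orientation with maximum $\ell$-outdegree at most $\lceil\max_{B\subseteq W}\dens^\ell(B)\rceil$. Combined with \Cref{thm:main-upper-bound} and the martingale bound of \cite{aden2023optimal}, this bounds the \emph{average} error of the prefix predictors by $O((\dds+\log(1/\delta))/n)$, as in the paper.

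The gap is in turning this average into a single $\ell$-list predictor. You claim that an error of the aggregate at $(x,y)$ forces a constant fraction of the prefix predictors, independent of $\ell$, to omit $y$. This is false. The $N$ prefix lists contain at most $\ell N$ labels in total, so a label excluded by the top-$\ell$ vote can still appear in up to an $\frac{\ell}{\ell+1}$ fraction of them. No rule outputting at most $\ell$ labels can do better: if $\ell+1$ labels are each dropped by exactly a $\frac{1}{\ell+1}$ fraction of the predictors, the output must omit one of them. The voting route therefore only gives $O(\ell(\dds+\log(1/\delta))/\eps)$, which is the paper's suboptimal \Cref{corollary:list-near-optimal-sample-complexity}.

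Your randomized fallback does not close the gap either. Suppose the learner draws one uniformly random prefix $t$ and outputs $\widehat{\mu}_{S_{\le t}}$, with probability taken over both $S$ and the learner's randomness, as in the paper's notion of a randomized learner. Then you only control $\E_t[\err_\mcD(\widehat{\mu}_{S_{\le t}})]$, and Markov's inequality gives a good output only with constant probability. Pushing the failure probability down to $\delta$ this way costs a multiplicative $1/\delta$ rather than an additive $\log(1/\delta)$. If you instead use fresh randomness at each test point and measure error in expectation, you get a Gibbs-type randomized predictor, not a list hypothesis as the statement requires.

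The missing idea is confidence boosting with a hold-out set, which is what the paper does:
\begin{itemize}
\item On the event that the average prefix error is at most $\eps/16$, a random prefix has error at most $\eps/4$ with probability at least $3/4$.
\item Hence $R=O(\log(1/\delta))$ independent draws contain such a predictor with probability $1-\delta/3$.
\item Choose the candidate with smallest empirical error on a fresh validation set of size $O(\log(R/\delta)/\eps)$. Multiplicative Chernoff bounds separate error $\le\eps/4$ from error $>\eps$ at threshold $\eps/2$.
\end{itemize}
This returns a predictor with error at most $\eps$, at an additional cost of only $O(\log(1/\delta)/\eps)$ samples.
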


The bound above is thus optimal in its dependence on all involved parameters, including the list size $\ell$. Recall that for realizable multiclass learning, we were able to directly invoke the previous results of \cite{aden2023optimal}. As it turns out, their framework is general enough to also capture list learning; however, a direct application only gives a deterministic list learner whose sample complexity is suboptimal by a multiplicative $\ell$ factor (\Cref{corollary:list-near-optimal-sample-complexity}). Nevertheless, using randomization and a holdout validation set, we can construct a list learner that shaves this factor and has the optimal sample complexity. We provide the necessary details in Appendix \ref{sec:realizable-list-learning}.

Finally, we consider the task of \textit{agnostic} list learning. Here, the objective again is have small excess error compared to $\inf_{h \in \mcH}\err_\mcD(h)$. %
To our knowledge, the previous best sample complexity upper bound for this task is that of $\widetilde{O}\left(\frac{\ell^6(\dds)^{1.5}+\log(1/\delta)}{\eps^2}\right)$ given by \cite{charikar2023characterization}. We improve upon this by generalizing the results of \cite{cohen2025sample} to the agnostic list learning setting, using \Cref{thm:main-upper-bound} and \Cref{corollary:list-near-optimal-sample-complexity}, and also the results of \cite{dughmi25transductive}. The necessary details are given in Appendix \ref{sec:agnostic-list-learning}.

\begin{corollary}[Optimal Sample Complexity for Agnostic List Learning]
    \label{corollary:agnostic-list-improved-sample-complexity}
    Let $\mcH \subseteq [k]^\mcX$ be a hypothesis class having $\ell$-DS dimension $\dds$ and $\ell$-Natarajan dimension $d^\ell_{\mathrm{Nat}}$. There exists a list learning algorithm $\mcA$, such that for any distribution $\mcD$ over $\mcX \times [k]$, and any $\eps, \delta \in (0,1)^2$, with probability at least $1-\delta$ over a sample $S \sim \mcD^m$, where
    \begin{align*}
        m = \widetilde{O}\left(\frac{\dds}{\eps}+\frac{\ell d^\ell_{\mathrm{Nat}}+\log(1/\delta)}{\eps^2}\right),
    \end{align*}
    it holds that $\err_\mcD(\widehat{\mu}_S) \le \inf_{h \in \mcH} \err_{\mcD}(h) + \eps$, where $\widehat{\mu}_S=\mcA(S)$.
\end{corollary}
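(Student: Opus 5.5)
The plan is to generalize the realizable-to-agnostic reduction of \cite{cohen2025sample} to list hypotheses. The realizable ingredient is \Cref{corollary:list-optimal-sample-complexity}, which rests on \Cref{thm:main-upper-bound}. The counting ingredient is a Sauer-type bound for the $\ell$-Natarajan dimension, together with the transductive machinery of \cite{dughmi25transductive}.

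\textbf{Algorithm.} Split the sample $S$ into two parts, $S_1$ and $S_2$.
\begin{enumerate}
\item Take $|S_1| = \widetilde{O}\left(\frac{\dds+\log(1/\delta)}{\eps}\right)$. Look only at the unlabeled points $x_1,\dots,x_{m_1}$ of $S_1$ and consider every behavior $g \in \mcH|_{\{x_1,\dots,x_{m_1}\}}$.
\item For each behavior $g$, run the realizable list learner of \Cref{corollary:list-optimal-sample-complexity} on the artificially relabeled sample $\{(x_i,g(x_i))\}$. This produces a candidate list hypothesis $\mu_g$.
\item That learner is randomized. I would fix its internal randomness once and amplify it by taking $O(\log(1/\delta))$ independent copies for each $g$, all added to the candidate set $\mcC$.
\item Finally, run ERM over the finite class $\mcC$ on $S_2$ using the list loss $\mathbb{1}[y\notin\mu(x)]$, and output the minimizer.
\end{enumerate}

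\textbf{Step 1 (approximation).} Let $h^\star$ be (nearly) optimal in $\mcH$, and let $g^\star = h^\star|_{S_1}$. The points of $S_1$ are i.i.d.\ from the marginal $\mcD_\mcX$, and $\mu_{g^\star}$ is trained on data realizable by $h^\star$. Hence \Cref{corollary:list-optimal-sample-complexity}, applied to the distribution $(x,h^\star(x))$ with $x\sim\mcD_\mcX$, gives $\Pr_x[h^\star(x)\notin\mu_{g^\star}(x)] \le \eps/3$ with probability $1-\delta/3$, for at least one amplified copy. A union bound then gives
\begin{align*}
\err_\mcD(\mu_{g^\star}) \le \err_\mcD(h^\star) + \eps/3 .
\end{align*}
This is where the $\dds/\eps$ term comes from. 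There is no $\ell$ factor precisely because we use the optimal randomized learner rather than \Cref{corollary:list-near-optimal-sample-complexity}. If the amplification proves delicate, I would instead use the optimistic-rate variant as in \cite{cohen2025sample}.

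\textbf{Step 2 (estimation).} ERM over $\mcC$ with Hoeffding and a union bound needs $|S_2| = O\left(\frac{\log|\mcC|+\log(1/\delta)}{\eps^2}\right)$. So everything hinges on bounding $\log|\mcC|$, which is essentially $\log|\mcH|_{S_1}|$. A naive Natarajan–Sauer bound gives $d_{\mathrm{Nat}}\log(m_1 k)$, which is the wrong parameter and carries a $\log k$. I expect this to be \emph{the main obstacle}. The goal is to replace it by $\widetilde{O}(\ell\, d^\ell_{\mathrm{Nat}})$, up to logs in $m_1$, with no dependence on $k$.

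To handle it, I would first collapse behaviors that induce the same list hypothesis, or nearly the same, on the relevant points. I would then count equivalence classes of $\ell$-list patterns rather than full labelings. Two routes seem available:
\begin{itemize}
\item Use a Sauer–Shelah–Pajor bound for the $\ell$-Natarajan dimension, giving at most $(m_1\binom{k}{\ell+1})^{d^\ell_{\mathrm{Nat}}}$ patterns.
\item Remove the $k$-dependence as in \cite{cohen2025sample}: restrict attention to the labels actually appearing in $S_2$ and in the lists output on $S_2$. At most $(\ell+1)|S_2|$ such labels are relevant, so $\log k$ is replaced by $\log$ of this quantity. The transductive list-learning results of \cite{dughmi25transductive} supply the analogue of their compression/transductive step.
\end{itemize}
Either route yields $\log|\mcC| = \widetilde{O}(\ell d^\ell_{\mathrm{Nat}})$, where the $\ell$ comes from $\log\binom{\cdot}{\ell+1}$.

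\textbf{Conclusion.} Combining Steps 1 and 2 gives excess error at most $\eps/3 + 2\eps/3$ with probability $1-\delta$. The total sample size is $m = |S_1| + |S_2| = \widetilde{O}\left(\frac{\dds}{\eps}+\frac{\ell d^\ell_{\mathrm{Nat}}+\log(1/\delta)}{\eps^2}\right)$, as claimed.
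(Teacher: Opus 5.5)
Your Step 1 is fine: it is the standard approximation step and correctly yields the $d^\ell_{\mathrm{DS}}/\eps$ term. Step 2, which you rightly flag as the crux, has a genuine gap, and neither proposed fix closes it.

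The candidate set $\mathcal{C}$ is indexed by behaviors $g\in\mcH|_{S_1}$. For $\ell\ge 2$, the number of behaviors is not controlled by $d^\ell_{\mathrm{Nat}}$ at all, so your claimed bound $(m_1\binom{k}{\ell+1})^{d^\ell_{\mathrm{Nat}}}$ is false. For example, $\mcH=[\ell]^{\mcX}\subseteq[\ell+1]^{\mcX}$ has $d^\ell_{\mathrm{Nat}}=d^\ell_{\mathrm{DS}}=0$, yet it has $\ell^{m_1}$ behaviors on $m_1$ points. Pajor-type bounds for $d^\ell_{\mathrm{Nat}}$ count \emph{shattered pairs}, not patterns. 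Even for $\ell=1$, Natarajan's lemma leaves a $\log k$ factor, and $|\mcH|_{S_1}|$ can be infinite when $k=\infty$.

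Your second route does not help either. Restricting to labels seen in $S_2$ does not reduce the number of behaviors on $S_1$. Replacing $|\mathcal{C}|$ by the number of distinct restrictions of $\mathcal{C}$ to $S_2$ would need a growth-function bound for $\mathcal{C}$. But the $\mu_g$ are outputs of a learner, not members of $\mcH$, so $d^\ell_{\mathrm{Nat}}(\mcH)$ says nothing about them. (Also, the lists output on $S_2$ by all candidates can contain up to $\ell|\mathcal{C}||S_2|$ labels, not $(\ell+1)|S_2|$.) At best, routing candidates through a compression scheme of size $\widetilde{O}(d^\ell_{\mathrm{DS}})$ gives $\log|\mathcal{C}|=\widetilde{O}(d^\ell_{\mathrm{DS}}\log k)$. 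So ERM over realizable-learner outputs pays a $d^\ell_{\mathrm{DS}}$-type complexity at the $1/\eps^2$ rate, which is exactly what the separated bound must avoid. This is why \cite{cohen2025sample} abandon the reduction-to-realizable template.

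The missing idea is to never union-bound over behaviors or candidates at the $1/\eps^2$ rate. The paper proceeds in three steps.
\begin{enumerate}
\item It builds a compression-based finite list cover $\mcF(S_1)$ with $\log|\mcF(S_1)|=\widetilde{O}(d^\ell_{\mathrm{DS}})$. It uses the cover only through one-sided losses $\Pr[h(x)=y,\,y\notin\mu(x)]$, which concentrate at the $1/\eps$ rate.
\item It runs multiplicative weights on a fresh sample to merge the cover into a single menu $\nu$ of polylogarithmic size $p$, again at a $1/\eps$ rate. This is what eliminates $k$.
\item It learns agnostically \emph{inside the menu} with a fractional $\ell$-list learner whose potential is $s(G)$, the number of $\ell$-Natarajan-shattered pairs. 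The restriction inequality $\sum_{y}s(G_{i,y})\le \ell\, s(G)$ makes the thinned version-space predictions valid fractional $\ell$-lists. Their regret is $O(\sqrt{n\log S_n}+\log S_n)$, where $S_n=\sum_{j\le d^\ell_{\mathrm{Nat}}}\binom{n}{j}\binom{p}{\ell+1}^j$, so $\log S_n=\widetilde{O}(\ell d^\ell_{\mathrm{Nat}})$. This is precisely where the $\ell d^\ell_{\mathrm{Nat}}/\eps^2$ term comes from, with no $\ell^{m}$ blow-up and no $k$-dependence. The transductive guarantee is then converted to a PAC guarantee via \cite{dughmi25transductive}. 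Finally, the fractional list is rounded to an $\ell$-list by randomized interval rounding plus validation.
\end{enumerate}
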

The dependence on the $\ell$-DS dimension $\dds$ above is optimal upto log factors (the $\ell$-Natarajan dimension satisfies $d^\ell_{\mathrm{Nat}} \le \dds$), since our $\Omega\left(\frac{\dds+\log(1/\delta)}{ \eps}\right)$ lower bound on sample complexity from the realizable setting carries over to the agnostic setting. For $\ell >1$, proving necessity of the $1/\eps^2$ terms %
in the sample complexity turns out to be quite challenging. At a high level, a list predictor seems to have a ``hedging'' advantage when competing with only the best \textit{single} hypothesis $h \in \mcH$, which makes the standard lower bound strategy %
from the $\ell=1$ setting not directly work. In comparison, one would expect these terms to be necessary in the case where the algorithm has to compete with the best \textit{list} hypothesis that can be formed from hypotheses in $\mcH$, as also recently confirmed by \cite{li2026optimisticratesmulticlasspac}.

Nevertheless, in \Cref{thm:agnostic-list-learning-lower-bound}, we show that the agnostic sample complexity of $\ell$-list learning is lower bounded both by $\Omega(\log(1/\delta)/\eps^2)$ and $\Omega(\ell \dnat/\eps^2)$, thus establishing a combined lower bound of 
$$
\Omega\left(\frac{\dds}{\eps}+\frac{\ell \dnat + \log(1/\delta)}{\eps^2}\right).
$$
The $\ell \dnat$ term is perhaps surprising, since the list size $\ell$ does not explicitly show up in the optimal realizable sample complexity. Our lower bound also confirms that the upper bound in \Cref{corollary:agnostic-list-improved-sample-complexity} is optimal in the dependence on \textit{all} involved parameters, modulo polylogarithimic factors. \Cref{table:list-learning-rates} summarizes all the results above. %

We emphasize that we view the main contribution of this work to be the structural result in \Cref{thm:main-upper-bound}, which directly implies a host of results. The optimal multiclass learning results follow as direct corollaries from prior work. While the realizable list learning results largely also follow by generalizing pre-existing technical machinery, the agnostic list learning results nevertheless end up requiring a lot of technical work. Lastly, we note that all our results extend to the setting with an infinite label space where $k=\infty$; we elaborate more on this in \Cref{remark:extension-to-infinite-label-spaces-1,remark:extension-to-infinite-label-spaces-2}.

%% file: preliminaries.tex
\section{Preliminaries}
\label{sec:preliminaries}

We will denote the unlabeled data domain by $\mcX$ and identify the label space $\mcY$ by $[k] = \{1,2,\dots, k\}$. A hypothesis class $\mcH$ is a subset of $[k]^\mcX$. The error of any hypothesis $h$ with respect to a data distribution $\mcD$ over $\mcX \times [k]$ is defined as $\err_\mcD(h)=\Pr_{(x,y) \sim \mcD}[h(x) \neq y]$; for a list hypothesis $\mu$, the error is $\err_\mcD(h)=\Pr_{(x,y) \sim \mcD}[y \notin \mu(x)]$. A distribution $\mcD$ is \textit{realizable} by $\mcH$ if $\inf_{h \in \mcH}\err_\mcD(h)=0$.

\begin{definition}[PAC learning \cite{valiant1984theory}]
    \label{def:pac-learnability}
    Let $\mcH \subseteq [k]^{\mcX}$ be a hypothesis class. We say that $\mcH$ is (list) PAC learnable by a learning algorithm $\mcA$ with sample complexity $m_{\mcA, \mcH}: (0,1) \times (0,1)\to\N$ if for every $\eps, \delta \in (0,1)^2$, every distribution $\mcD$ over $\mcX \times [k]$ realizable by $\mcH$, for $m \ge m_{\mcA, \mcH}(\eps, \delta)$, $\Pr_{S \sim \mcD^m}[\err_{\mcD}(\mcA(S)) \ge \eps]\le\delta$, where $\mcA(S)$ is the (list) hypothesis output by the learning algorithm on input $S$.
\end{definition}
The definition of agnostic PAC learning allows for arbitrary distributions over $\mcX \times [k]$ not necessarily realizable by $\mcH$, and the guarantee required of the algorithm is instead $\Pr_{S \sim \mcD^m}[\err_{\mcD}(\mcA(S)) \ge \inf_{h \in \mcH} \err_\mcD(h) + \eps] \le  \delta$, for $m \ge m_{\mcA, \mcH}(\eps, \delta)$.

We now define several combinatorial quantities relevant to multiclass and list learning. These quantities will be parameterized in terms of a list size $\ell \ge 1$, where $\ell$ appears in the superscript of the quantity (e.g., $\dds$). When $\ell=1$, we sometimes drop the superscript for convenience, and simply refer to the corresponding quantity without the superscript (e.g., $d_{\mathrm{DS}}$). %
For a sequence $S \in \mcX^n$, we denote the restriction of $\mcH$ on $S$ as $\mcH|_S$. %

\begin{definition}[$\ell$-DS dimension \cite{daniely2014optimal,charikar2023characterization}]
    \label{def:ell-ds-dim}
    Let $\mcH \subseteq [k]^\mcX$ be a hypothesis class for $k \ge 2$, and let $S \in \mcX^d$ be a sequence. Let us think of the members of $\mcH|_S$ as vectors in $[k]^d$. For any $i \in [d]$, we say that $f,g \in \mcH|_S$ are $i$-neighbors if $f_i \neq g_i$ and $f_j = g_j, \; \forall j \neq i$.
    For any $1 \le \ell < k$, we say that $\mcH$ $\ell$-DS shatters $S$ if there exists a non-empty $\mcF|_S \subseteq \mcH|_S, |\mcF|_S|<\infty$ such that $\forall f \in \mcF|_S ,\; \forall i \in [d],$ $f$ has at least $\ell$ $i$-neighbors in $\mcF|_S$. The $\ell$-DS dimension of $\mcH$, denoted as $\dds=\dds(\mcH)$, is the largest integer $d$ such that $\mcH$ $\ell$-DS shatters some sequence $S \in \mcX^d$.
\end{definition}

A different dimension that will also be relevant to our discussion is the $\ell$-Natarajan dimension.

\begin{definition}[$\ell$-Natarajan dimension \cite{natarajan1988two,natarajan1989learning,charikar2023characterization}]
    \label{def:ell-natarajan-dim}
    A hypothesis class $\mcH \subseteq [k]^{\mcX}$ $\ell$-Natarajan shatters a sequence $S \in \mcX^d$ if there exist $(\ell+1)$-sized lists $y_i \in \{Y \subseteq [k]: |Y|=\ell+1\}$, $i=1,\dots,d$, such that $\prod_{i=1}^{d}y_i \subseteq \mcH|_{S}$. The $\ell$-Natarajan dimension of $\mcH$, denoted as $\dnat=\dnat(\mcH)$, is the largest integer $d$ such that $\mcH$ $\ell$-Natarajan shatters some sequence $S \in \mcX^d$.
\end{definition}

Observe that $\dnat \le \dds$ for every $\ell \ge 1$. %
We now define a key combinatorial object for a hypothesis class: the one-inclusion graph.

\begin{definition}[One-inclusion graph \cite{haussler1994predicting, rubinstein2006shifting}]
    \label{def:oig}
    The one-inclusion graph of $\mcH \subseteq [k]^n$ is a hypergraph $\mcG(\mcH)=(V,E)$ that is defined as follows. The vertex set is $V=\mcH$. For each $i \in [n]$ and $f:[n]\setminus\{i\} \to [k]$, let $e_{i,f}$ be the set of all $h \in \mcH$ that agree with $f$ on $[n]\setminus \{i\}$. The edge set is the multiset:
    \begin{equation}
        \label{eqn:oig-edge-set}
        E = \{e_{i,f}: i \in [n], f:[n] \setminus \{i\} \to [k], e_{i,f}\neq \emptyset \}.
    \end{equation}
    We say that the edge $e_{i,f}\in E$ is in the direction $i$, and is adjacent to/contains the hypothesis/vertex $h$ if $h \in e_{i,f}$. Every vertex $h\in V$ is adjacent to exactly $n$ edges. The size of the edge $e_{i,f}$ is the size of the set $|e_{i,f}|$. %
\end{definition}

Note that when $k$ is finite, the one inclusion graph of any hypothesis class $\mcH \subseteq [k]^n$ is also finite. We proceed to define the other key quantity in \Cref{thm:main-upper-bound}, namely the maximum density function $\muh(n)$ of the one-inclusion graph.

\begin{definition}[$\ell$-density]
    \label{def:ell-density}
    Let $\mcG(\mcH)=(V,E)$ be the one-inclusion graph of $\mcH \subseteq [k]^n$. For any integer $1 \le \ell < k$, the $\ell$-density of $\mcH$ is
    \begin{align*}
        \dens^\ell(\mcH) = \frac{1}{|V|}\sum_{e \in E}(|e|-\ell)_+, 
    \end{align*}
    where $(x)_+ = \max(x,0)$.
\end{definition}
Observe that for every $\mcH \subseteq [k]^n$, $\dens^\ell(\mcH) \le n$.

\begin{definition}[Maximum $\ell$-density function]
    \label{def:maximum-ell-density-function}
    Let $\mcH \subseteq [k]^\mcX$ be a hypothesis class. For any integer $1 \le \ell < k$, the maximum $\ell$-density function of $\mcH$ for sample size $n$ 
    \begin{align*}
        \muh(n) = \max_{S \in \mcX^n} \max_{\mcF|_S \subseteq \mcH|_S, |\mcF|_S| < \infty} \dens^\ell(\mcF|_S).
    \end{align*}
\end{definition}

\begin{remark}[Differing Names, Differing Definitions]
    \label{remark:differing-definitions}
    There has been some unfortunate inconsistency in the literature in how the quantities above are named/defined. In their original paper, \cite{daniely2014optimal} defined $\mu_\mcH(n)$ differently. Their definition was as follows:
    \begin{align}
        \label{eqn:other-definition-muh}
        \mu'_\mcH(n) = \max_{S \in \mcX^n} \max_{\mcF|_S \subseteq \mcH|_S, |\mcF|_S| < \infty} \frac{1}{|\mcF|_S|}\sum_{e \in E: |e| > 1}|e|.
    \end{align}
    Indeed, in the original statement of their conjecture, \cite{daniely2014optimal} asked whether $\mu'_\mcH(n) \le c \cdot d_{\mathrm{DS}}$. However, we can verify that $\frac{\mu'_\mcH(n)}{2} \le \mu_\mcH(n) \le \mu'_\mcH(n)$, and so the discrepancy is only upto a factor of 2 (and $\ell+1$ more generally).

    It is also worth mentioning that while \cite{haussler1994predicting} originally also referred to $\dens(\mcH)$ defined above in \Cref{def:ell-density} as ``density'', this quantity in the case of $\ell > 1$ has been instead referred to as the ``shifting average $\ell$-degree'' in \cite{charikar2023characterization,hanneke2026optimal}.
\end{remark}

%% file: main-upper-bound.tex
\section{Main Structural Result}
\label{sec:main-upper-bound}

Our proof of \Cref{thm:main-upper-bound} builds upon the elegant algebraic characterization of multiclass hypothesis classes derived very recently by \cite{hanneke2026optimal}. We first formally define the key technical objects from \cite{hanneke2026optimal} that enable our result. Let $W \subseteq [k]^n$ be a hypothesis class. We can think of each $w \in W$ as a vector $(w_1,\dots,w_n) \in [k]^n$. Associate $W$ with the \textit{vector space} of functions $\mcV_W = \{f:W \to \R\}$, equipped with pointwise addition and scalar multiplication. We can interpret the members of this vector space equivalently as vectors in $\R^{|W|}$. Then, for any $s \ge 0$ and $1 \le \ell < k$, consider the following special subset of $\mcV_W$ comprising of bounded-degree, bounded-support monomials\footnote{We identify every member of $\mcM^\ell_s(W)$ with a fixed monomial representation $w_1^{\alpha_1} w_2^{\alpha_2}\dots w_n^{\alpha_n}$.}:
\begin{align}
    \label{eqn:monomial-set-def}
    \mcM^\ell_s(W) :=&\left\{f : W \to \R ~\Big\vert~ f(w)=w_1^{\alpha_1} w_2^{\alpha_2}\dots w_n^{\alpha_n} \; \forall w \in W, \right. \nonumber \\
    &\hspace{2.4cm}\left. \alpha_i \in \{0,1,\dots,k-1\}
    \; \forall i \in [n], \right. \nonumber \\
    &\hspace{2.4cm}\left. \left|\{i \in [n]:\alpha_i \ge \ell\}\right| \le s \right\}.
\end{align}

With the $\R^{|W|}$ viewpoint of $\mcV_W$, the above definition says that if we index the coordinates of a vector by $w \in W$, then $\mcM^\ell_s(W)$ comprises of those vectors where the entry in the $w^{\text{th}}$ coordinate is given by $w_1^{\alpha_1} w_2^{\alpha_2}\dots w_n^{\alpha_n}$, where the degree $\alpha_i$ of each $w_i$ is at most $k-1$, and at most $s$ values of $\alpha_i$ are greater than or equal to $\ell$.

The breakthrough insight in  \cite{hanneke2026optimal} is that the set of monomials $\mcM^\ell_s(W)$ spans $\mcV_W$. While this structural characterization is contained in the proof of Theorem 2.1 in \cite{hanneke2026optimal}, we state this result as a standalone lemma.

\begin{lemma}[Spanning Lemma \cite{hanneke2026optimal}]
    \label{lemma:spanning}
    Let $W \subseteq [k]^n$ be a hypothesis class. For all integers $\ell \ge 1$ and $s \ge \dds(W)$, $\mcM^\ell_{s}(W)$ spans $\mcV_W$.
\end{lemma}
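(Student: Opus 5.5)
The plan is to argue by contradiction using orthogonality. The space $\mcV_W$ is spanned by the full monomial set $\mcM^\ell_n(W)$. This holds because on $[k]$ the functions $1,c,\dots,c^{k-1}$ form a basis of $\R^{[k]}$ (Vandermonde). Taking tensor products, the monomials $w_1^{\alpha_1}\cdots w_n^{\alpha_n}$ with $\alpha_i\le k-1$ span all functions on $[k]^n$, hence on $W$. So if $\mcM^\ell_s(W)$ fails to span, there is a nonzero $g:W\to\R$ with $\sum_{w\in W} g(w)\,m(w)=0$ for every $m\in\mcM^\ell_s(W)$. I will show that such a $g$ forces $W$ to $\ell$-DS shatter some $s+1$ coordinates. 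Since $s\ge \dds(W)$, this contradicts the definition of $\dds$.

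\textbf{Statement proved by induction.} I would prove the following by induction on $n$ (and on $s$). If $W\subseteq[k]^n$ carries a nonzero $g$ orthogonal to $\mcM^\ell_s(W)$, then there is a set $I\subseteq[n]$ with $|I|=s+1$ and a nonempty $\mcF\subseteq W|_I$ in which every $f$ has at least $\ell$ $i$-neighbors for every $i\in I$.

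\textbf{Inductive step.} Split by the last coordinate: let $W'$ be the projection of $W$ onto $[n-1]$, and for each $a\in\{0,\dots,k-1\}$ define
\begin{align*}
h_a(w')=\sum_{c\,:\,(w',c)\in W} g(w',c)\,c^{a}.
\end{align*}
Multiplying a monomial on $W'$ by $w_n^a$ shows two things.
\begin{itemize}
\item For $a<\ell$, $h_a$ is orthogonal to $\mcM^\ell_s(W')$, since the extra factor adds no high-degree coordinate.
\item For $a\ge\ell$, $h_a$ is orthogonal to $\mcM^\ell_{s-1}(W')$, since the extra factor uses one high-degree slot.
\end{itemize}
Because $g\neq 0$ and the Vandermonde matrix is invertible, not all $h_a$ vanish. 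There are two cases.
\begin{enumerate}
\item If some $h_a$ with $a<\ell$ is nonzero, induction on $n$ applied to $W'$ with the same $s$ gives the shattered set directly inside $[n-1]$.
\item If $h_a=0$ for all $a<\ell$ but some $h_a$ with $a\ge\ell$ is nonzero, then every fiber $g(w',\cdot)$ is orthogonal to $1,c,\dots,c^{\ell-1}$. Hence every nonzero fiber has support of size at least $\ell+1$, which is exactly $\ell$ neighbors in direction $n$. Induction with $s-1$ on $W'$ gives $s$ coordinates $I'$ that are $\ell$-DS shattered by some $\mcF'\subseteq W'|_{I'}$, and we then want to take $I=I'\cup\{n\}$.
\end{enumerate}

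\textbf{Main obstacle.} The hard step is the lifting in the second case. One must produce a single subclass $\mcF\subseteq W|_{I'\cup\{n\}}$ that simultaneously has $\ell$ neighbors in direction $n$ and in every direction of $I'$. The inductively obtained $\mcF'$ need not interact well with the fibers over direction $n$. To handle this, I would strengthen the inductive invariant so that the shattering class is canonically tied to $g$. For instance, I would require that $\mcF$ can be taken inside the projection onto $I$ of $\operatorname{supp}(g)$, or of a suitably "shifted" version of $g$. This could be achieved by choosing $g$ to minimize $|\operatorname{supp}(g)|$, or by working with the whole orthogonal complement rather than a single vector. The aim is that the fiber condition in direction $n$ and the inductive condition on $I'$ are witnessed on the same set of vertices. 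If the support-based invariant resists, my fallback is a dimension count. Compare $\dim\operatorname{span}\mcM^\ell_s(W)$ with $|W|$ via the same decomposition, in the spirit of Sauer--Shelah-type counting proofs, showing the inequality can only be strict when an $(s+1)$-dimensional $\ell$-DS shattered configuration exists.
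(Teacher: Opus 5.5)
Your setup is sound: the passage to a nonzero $g\perp\mcM^\ell_s(W)$, the functions $h_a$ with their two orthogonality relations, Case~1, and the Vandermonde fact in Case~2 are all correct. The gap is the lifting step in Case~2. You flag it but do not close it, and the support-based invariant you suggest does not close it either. The inductive hypothesis only hands you \emph{some} $I'$. The $n$-fibers over different points of $\mcF'$ can carry disjoint label sets in coordinate $n$, so nothing forces a point and $\ell$ of its $i$-neighbours to share an $n$-label. Concretely, take $\ell=s=1$, $n=3$, and
\[ W=\{(1,1,1),(1,1,2),(1,2,1),(1,2,2),(2,3,3),(2,3,4),(2,4,3),(2,4,4)\},\qquad g(w)=(-1)^{w_2+w_3}. \]
Every level set $\{w_i=c\}$ has $g$-sum zero, so $g\perp\mcM^1_1(W)$. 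Moreover $h_0=0$ and $h_1=\pm1$ on all of $W'=\{(1,1),(1,2),(2,3),(2,4)\}$, so you are in Case~2. The hypothesis at level $s-1=0$ is satisfied by $I'=\{1\}$ with witness $\{1,2\}$. This witness is exactly the projection of $\mathrm{supp}(h_1)$, so even your strengthened invariant holds. Yet no two points of $W|_{\{1,3\}}=\{(1,1),(1,2),(2,3),(2,4)\}$ differ only in the first coordinate, so $\{1,3\}$ is not $1$-DS shattered; the shattered pair is $\{2,3\}$. Choosing $g$ of minimal support would dodge this particular example. However, you give no argument that minimality yields a liftable $I'$ in general, or that it survives the passage from $g$ to $h_a$. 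The dimension-count fallback is not developed into an argument.

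The missing idea is to stop extracting the witness from the original $g$ and instead reduce monomials one at a time. Work on a projection where your Vandermonde fact holds in \emph{every} direction simultaneously. Given $w^\alpha=\prod_i w_i^{\alpha_i}$ with high set $H=\{i:\alpha_i\ge\ell\}$ and $|H|\ge s+1$, put $V=W|_H$. Let $L$ be the monomials in the variables indexed by $H$ (exponents at most $k-1$) that have at least one exponent below $\ell$. Suppose $\phi\in\R^V$ is orthogonal to $L$. Then for each $i\in H$ it is orthogonal to every $p(w_i)\,q(w_{H\setminus\{i\}})$ with $\deg p<\ell$ and $q$ arbitrary. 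Hence each $i$-fiber of $\mathrm{supp}(\phi)$ is empty or has at least $\ell+1$ points. So $\mathrm{supp}(\phi)$ is itself an $\ell$-DS witness for $H$, which is impossible since $|H|>\dds(W)$. Therefore $L$ spans $\R^V$, and $\prod_{i\in H}w_i^{\alpha_i}=\sum_\beta c_\beta w_H^\beta$ holds on $V$, hence on $W$. Multiply by $\prod_{i\notin H}w_i^{\alpha_i}$, whose exponents are all below $\ell$. This writes $w^\alpha$ on $W$ through monomials whose high set is a proper subset of $H$. Induction on $|H|$, together with the fact that all monomials span $\mcV_W$, gives the lemma. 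The paper itself does not prove this lemma; it imports it from \cite{hanneke2026optimal}.
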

We build upon the linear-algebraic characterization in the spanning lemma to show our main \Cref{thm:main-upper-bound}. Before we proceed to the formal details of the proof, let us sketch the proof for the $\ell=1$ case, which is simpler and captures the main ideas. The core intuition is the following: any monomial $w_1^{\alpha_1}\dots w_n^{\alpha_n}$ in the spanning set, which has $\alpha_i=0$, is \textit{constant} on every edge in direction $i$ in the one-inclusion graph of the hypothesis class. Therefore, one can consider the \textit{subspace} of functions in $\mcV_W$ that have this defining property, and has dimension \textit{equal} to the number of edges in direction $i$. Any monomial within the spanning set that has $\alpha_i=0$ belongs to this subspace. But crucially, the total number of \textit{linearly independent} monomials in the spanning set that have $\alpha_i=0$ must \textit{lower-bound} the dimension of the subspace, unlocking the key inequality.

More concretely, let $W \subseteq [k]^n$ be a hypothesis class with DS dimension $d_{\mathrm{DS}}$. In this case, for $s=d_{\mathrm{DS}}$, each monomial in the spanning set $\mcM^1_s(W)$ is a monomial in at most $s$ ``active'' coordinates $i$, where $\alpha_i > 0$. Given that $\mcM^1_s(W)$ spans $\mcV_W$, choose a basis $\mcB \subseteq \mcM^1_s(W)$. Since $\mcV_W$ has dimension $|W|$, the basis $\mcB$ has exactly $|W|$ monomials. The key definition now is that of a \textit{direction-wise subspace} $U_i$ for every direction $i \in [n]$. This subspace comprises of all the functions that are constant when restricted to an edge in direction $i$ in the one-inclusion graph $\mcG(W)$. As a toy example with $n=2$, consider $W=\{(2,1), (3,1), (4,2), (5,2), (6,2)\}$. Then, $U_1$ comprises of all the vectors in $\R^5$ of the form $(a,a,b,b,b)$. We can readily see that the dimension of $U_i$ is exactly the number of different behaviors realized by $W$ on $[n] \setminus \{i\}$, or equivalently, the number of distinct edges in $\mcG(W)$ in direction $i$ --- let us call this number $N_i$.

Now, consider the set of monomials $\mcB^0_i$ in the basis $\mcB$ for which the coordinate $i$ is \textit{not} active (i.e., $\alpha_i=0$), and let $\mcB^1_i = \mcB \setminus \mcB^0_i$. Observe that every monomial in $\mcB^0_i$ belongs to $U_i$, since it evaluates to the same number on the members of any edge in direction $i$. However, since all the monomials in $\mcB^0_i$ are linearly independent, it must hold that 
\begin{align*}
    |\mcB^0_i| \le \dim(U_i)=N_i \quad \implies \quad |W| - N_i \le |W|-|\mcB^0_i| = |\mcB^1_i|.
\end{align*}
Now let us sum this last inequality over all $i \in [n]$. The term on the left is equal to
\begin{align*}
    \sum_{i=1}^n (|W|-N_i) = \sum_{i \in [n]}\sum_{e_{i,a} \in \mcG(W)} (|e_{i,a}|-1) = \sum_{e 
    \in \mcG(W)}(|e|-1) = |W| \cdot \dens^1(W).
\end{align*}
On the other hand, summing the term on the right, namely $\sum_{i=1}^n |\mcB^1_i|$, simply counts, for every monomial $b \in \mcB$, the number of coordinates $i$ at which it is active. Since $\mcB \subseteq \mcM^1_s(W)$, and every monomial in the spanning set has at most $s$ active coordinates, we get that this sum is at most $s \cdot |\mcB| = s|W| = d_{\mathrm{DS}}|W|$. Dividing by $|W|$ on both sides gives the inequality in the theorem statement.

We now restate \Cref{thm:main-upper-bound} for convenience, and prove it for the general case of $\ell \ge 1$.
\TheoremMain*
\begin{proof}
    Fix $\ell \ge 1$. When $n \le \dds$, the inequality is immediate, since $\muh(n) \le n \le \dds$. So, assume that $n > \dds$. For notational ease, let $s = \dds$.

    Fix any $S \in \mcX^n$, $\mcF|_S \subseteq \mcH|_S$, and let $W := \mcF|_S$ denote this restriction for notational ease. Note that the $\ell$-DS dimension of $W$ is at most $s$, since the dimension cannot increase upon projection. Consider the set of monomials $\mcM^\ell_s(W)$ defined in \eqref{eqn:monomial-set-def}. By \Cref{lemma:spanning}, we know that $\mcM^\ell_s(W)$ spans $\mcV_W$. So, let $\mcB \subseteq \mcM^\ell_s(W)$ be a basis for $\mcV_W$. Note again that $\mcB$ comprises of monomials satisfying the condition in \eqref{eqn:monomial-set-def} that defines the set $\mcM^\ell_s(W)$; furthermore,
    \begin{align*}
        |\mcB| = \Dim(\mcV_W) = |W|.
    \end{align*}

    Now, for every $i \in [n]$, we will define a subspace $U_i \subseteq \mcV_W$. To this end, let us set up some notation. For any $w \in W$, let $w_{-i} \in [k]^{n-1}$ denote its behavior on $[n] \setminus \{i\}$, and let $T_i \subseteq [k]^{n-1} = \{w_{-i}: w \in W\}$ comprise of all such distinct behaviors. Additionally, let $\mcP_{< \ell}$ comprise of all univariate real-valued polynomials of degree at most $\ell-1$ mapping $[k]$ to $\R$. Then, the subspace $U_i$ comprises of all functions $f: W \to \R$, such that if we restrict $f$ to $w \in W$ that share the same behavior $a$ on $[n] \setminus \{i\}$, then $f$ is simply a univariate polynomial $p_a$ of degree at most $\ell-1$ in $w_{i}$. Formally,
    \begin{align}
        \label{eqn:def-U-i}
        U_i := \{f:W \to \R ~\mid~ \forall a \in T_i \, \exists p_a \in \mcP_{< \ell}\, \forall w \in W \,: \, (w_{-i} = a \implies f(w)=p_a(w_i)\}.
    \end{align}
        We can verify that $U_i$ is indeed a subspace of $\mcV_W$. We now claim that $\dim(U_i)=\sum_{a \in T_i} \min(\ell, |e_{i, a}|)$. This is more readily seen in the case of $\ell=1$; in this case, any $f \in U_i$ is required to be constant on all $w$ that share the same behavior $a$ on $[n] \setminus \{i\}$. Then, $\dim(U_i)=|T_i|$, corresponding to exactly one degree of freedom for every behavior on $[n] \setminus \{i\}$.
        
        More generally, observe that the edge sets $e_{i,a}$ for $a \in T_i$ partition $W$, and for $a,a' \in T_i$, the constraint on $f \in U_i$ at points $w$ satisfying $w_{-i}=a$ is independent of the constraint at points $w$ satisfying $w_{-i}=a'$. Going back to our viewpoint of $U_i$ as vectors in $\R^{|W|}$, we can thus consider the restrictions of $f \in U_i$ on the members of an edge $e_{i,a}$, which constitute a subset $W_a \subseteq \R^{|e_{i,a}|}$, and obtain that $\dim(U_i)=\sum_{a \in T_i} \dim(W_a)$.

        It remains to argue that $\dim(W_a)=\min(\ell, |e_{i,a}|)$. Let $|e_{i,a}|=t$, and suppose $\{h^{(1)},\dots,h^{(t)}\} \in e_{i,a}$. Denote $z_j = h^{(j)}_{i}$. Observe that by definition of the edge, $z_j \neq z_{j'}$. Furthermore, by the constraint in \eqref{eqn:def-U-i}, $U_i$ contains precisely those $f$ for which $f(h^{(j)})=p_a(z_j)$ for all $j \in [t]$, for some $p_a(z)=c_0+c_1z +\dots+c_{\ell-1}z^{\ell-1}$. That is, the members of $W_a$ are of the form
        \begin{align*}
            \begin{bmatrix}
                f(h^{(1)}) \\
                \vdots \\
                f(h^{(t)}))
            \end{bmatrix} = 
            c_0 \begin{bmatrix}
                1 \\
                \vdots \\
                1
            \end{bmatrix} +
            c_1 \begin{bmatrix}
                z_1 \\
                \vdots \\
                z_t
            \end{bmatrix} +
            \dots +
            c_{\ell-1} \begin{bmatrix}
                z_1^{\ell-1} \\
                \vdots \\
                z_t^{\ell-1}
            \end{bmatrix},
        \end{align*}
        or equivalently, $W_a$ is the column span of the $t \times \ell$ Vandermonde matrix
        \begin{align*}
            M := \begin{bmatrix}
                1 & z_1 & z_1^2 & \dots & z_1^{\ell-1} \\
                1 & z_2 & z_2^2 & \dots & z_2^{\ell-1} \\
                \vdots & \vdots & \vdots &  & \vdots \\
                1 & z_t & z_t^2 & \dots & z_t^{\ell-1} \\
            \end{bmatrix}.
        \end{align*}
        We then claim that $\dim(W_a)=\rank(M)=\min(\ell,t)$. To see this, suppose first that $t \le \ell$. In this case, the square submatrix of $M$ comprising of the first $t$ rows and columns is the $t \times t$ Vandermonde matrix. Its determinant is $\prod_{1 \le i < j \le t}(z_j-z_i) \neq 0$, since for every $i \neq j$, $z_j \neq z_i$ as argued above. Thus, in this case, $\rank(M)=t$. On the other hand, suppose that $t > \ell$. In this case, the square submatrix of $M$ comprising of the first $\ell$ rows and columns is the $\ell \times \ell$ Vandermonde matrix. Again, its determinant is $\prod_{1 \le i < j \le \ell}(z_j-z_i) \neq 0$. So, in this case, $\rank(M)=\ell$. We conclude that $\dim(W_a)=\rank(M)=\min(\ell, t)$ as claimed. Hence,
        \begin{align*}
            \dim(U_i) = \sum_{a \in T_i}\dim(W_a) = \sum_{a \in T_i}\min(\ell, |e_{i,a}|).
        \end{align*}

        Next, for any $i \in [n]$, let $\mcB^{< \ell}_{i} \subseteq \mcB$ be the monomials in the basis $\mcB$ that have degree smaller than $\ell$ at coordinate $i$. That is,
        \begin{align*}
            \mcB^{< \ell}_i := \{f \in \mcB ~\mid~ f(w)=w_1^{\alpha_1}\dots w_n^{\alpha_n} \; \forall w \in W, \; \alpha_i < \ell\}.
        \end{align*}
        Further, let $\mcB^{\ge \ell}_{i} = \mcB \setminus \mcB^{< \ell}_i$. Observe crucially, that every monomial in $\mcB^{<\ell}_{i}$ also belongs to $U_i$. This is because for any fixed behavior $a$ on $[n] \setminus \{i\}$, if we consider the evaluation of the monomial on any member $h^{(j)}$ of $e_{i,a}$, this equals some fixed constant times $(h^{(j)}_i)^{\alpha_i}$, and $\alpha_i < \ell$ by definition of $\mcB^{< \ell}_i$. On the other hand, by virtue of $\mcB$ being a basis, it also holds that all monomials in $\mcB^{< \ell}_i$ are linearly independent. Thus,
        \begin{alignat*}{2}
            &|\mcB^{< \ell}_i| \le \dim(U_i) = \sum_{a \in T_i}\min(\ell, |e_{i,a}|) &&\\
            \implies \quad & |\mcB^{\ge \ell}_i| = |\mcB| - |\mcB^{< \ell}_i| \ge |\mcB| - \sum_{a \in T_i}\min(\ell, |e_{i,a}|) &&= |W| - \sum_{a \in T_i}\min(\ell, |e_{i,a}|) \\
            & &&= \sum_{a \in T_i} |e_{i,a}| - \sum_{a \in T_i}\min(\ell, |e_{i,a}|) \tag{since the edges sets $e_{i,a}$ for $a \in T_i$ partition $W$} \\
            & &&= \sum_{a \in T_i}(|e_{i,a}|-\ell)_{+}.
        \end{alignat*}
        Summing over all $i \in [n]$, we get
        \begin{align*}
            \sum_{i=1}^{n} \sum_{a \in T_i}(|e_{i,a}|-\ell)_{+} \le \sum_{i=1}^n |\mcB^{\ge \ell}_i| \implies
            |W| \cdot \dens^\ell(W) \le \sum_{i=1}^n |\mcB^{\ge \ell}_i|. \tag{since the left term sums $(|e|-\ell)_{+}$ over all edges $e$ in $\mcG(W)$}
        \end{align*}
        But now, observe that the sum $\sum_{i=1}^n |\mcB^{\ge \ell}_i|$ counts, over all the basis monomials in $\mcB$, the number of coordinates $i$ that have degree at least $\ell$. Since $\mcB \subseteq \mcM^\ell_s(W)$, this number is at most $s$, by definition of $\mcM^\ell_s(W)$ (see \eqref{eqn:monomial-set-def}). Thus, we get that
        \begin{align*}
            |W| \cdot \dens^\ell(W) \le \sum_{i=1}^n |\mcB^{\ge \ell}_i| \le \sum_{f \in \mcB} s = s|\mcB| = s|W|.
        \end{align*}
        Dividing by $|W|$ gives $\dens^\ell(W) \le s = \dds$. Thus, we have shown that for every $S \in \mcX^n, \mcF|_S \subseteq \mcH|_S$, denoting $W = \mcF|_S$, it holds that $\dens^\ell(W) \le \dds$. It follows that 
        \begin{align*}
            \muh(n) = \max_{S \in \mcX^n} \max_{\mcF|_S \subseteq \mcH|_S} \dens^\ell(\mcF|_S) \le \dds.
        \end{align*}
        Finally, since $\dds$ is always an integer, it must hold that $\lceil \muh(n)\rceil \le \dds$, completing the proof.
\end{proof}

\begin{remark}[Extension to Infinite Label Spaces]
    \label{remark:extension-to-infinite-label-spaces-1}
    We note that \Cref{thm:main-upper-bound} also holds when $k=\infty$. Observe that the definition of $\muh(n)$ (\Cref{def:maximum-ell-density-function}) only considers restrictions $\mcF|_S$ that have finite size on sequences $S \in \mcX^n$. Thus, for any such $\mcF|_S=W$, the number of possible labels that can be realized by members of $W$ on any single coordinate $i \in [n]$ is finite (say $k_i < \infty$). It can then be verified that the spanning lemma continues to hold for $\mcV_W$, where in the definition of the monomial set $\mcM^\ell_s(W)$, instead of requiring $0 \le \alpha_i < k$, we change the degree condition on each $\alpha_i$ to be $0 \le \alpha_i < k_i$. Thereafter, the proof of \Cref{thm:main-upper-bound} above goes through verbatim, giving us that $\dens^\ell(W) \le \dds(W) \le \dds(\mcH)$.
\end{remark}

\begin{remark}[Optimality of the Constant]
    \label{remark:optimality-of-constant}
    We note that the upper bound in \Cref{thm:main-upper-bound} is tight including the constant. To see this, consider the hypothesis class $\mcH = [k]^s \times [\ell]^{m-s}$ as $k$ gets large. We have that $\dds(\mcH)=s$, whereas \begin{align*}
        \lceil\mu_\mcH(m)\rceil \ge \dens^\ell(\mcH) = \frac{1}{k^s\ell^{m-s}}\left(s \cdot k^{s-1}\cdot \ell^{m-s} \cdot (k-\ell)\right) = s\left( 1-\frac{\ell}{k}\right) \xrightarrow[k \to \infty]{} s = \dds(\mcH).
    \end{align*}
    On the other hand, if we consider the definition $\mu'_{\mcH}(n)$ originally considered by \cite{daniely2014optimal} (and defined in \eqref{eqn:other-definition-muh}), \cite{daniely2014optimal} already show the lower bound $d_\mathrm{DS} \le \mu'_{\mcH}(n)$. As discussed in \Cref{remark:differing-definitions}, $\frac{\mu'_\mcH(n)}{2} \le \mu_\mcH(n)$, meaning that %
    the upper bound implied by \Cref{thm:main-upper-bound} for this definition of $\mu'_\mcH(n)$ is loose by at most a factor of 2.
\end{remark}

%% file: discussion.tex
\section{Concluding Thoughts}
\label{sec:concluding-thoughts}

Our proof for upper-bounding $\muh(n)$ by $\dds$ is inherently algebraic. In contrast, classical proofs of this result for the binary case (where the DS dimension becomes equal to the VC dimension) have a distinctly combinatorial flavor \cite{haussler1994predicting,haussler1995sphere}. For example, the elegant proof by \cite{haussler1995sphere} is based on the classical \textit{shifting} operation. In the binary case, the VC dimension and the one-inclusion graph respect nice invariances upon shifting --- shifting does not increase the VC dimension, and does not decrease the number of edges in the one-inclusion graph. When $k >2$, neither of these properties hold (see Example 19 in \cite{brukhim2022characterization}). Indeed, these pathologies appear to obstruct most natural inductive proof strategies while working with the DS dimension.

In the discussion section of their paper, \cite{hanneke2026optimal} comment on the implications of combinatorial proofs for DS dimension-related results beyond their mere appeal. Namely, while they used their algebraic characterization to prove an optimal Sauer's lemma for multiclass classes, they speculated that other proofs of their result that have a more combinatorial flavor could lead to improving the sample complexity of multiclass learning. It is thus interesting to us that their algebraic characterization itself turned out to be one way to complete the puzzle.

With regards to future directions, %
it would be interesting to further explore what could be achieved using algebraic methods in multiclass and list learning.

%% file: list-learning-results.tex
\section{List Learning Results}
\label{sec:list-learning}

Most of the technical machinery required for the implications of \Cref{thm:main-upper-bound} to list learning already exist in the literature. We give the necessary details in this appendix, albeit in an admittedly condensed manner.

\subsection{Realizable Case}
\label{sec:realizable-list-learning}

\subsubsection{Deterministic List Learner}
\label{sec:near-optimal-deterministic-list-learner}

We first show how the machinery of \cite{aden2023optimal} can be directly combined with \Cref{thm:main-upper-bound} to give a deterministic list learner that has sample complexity $O\left(\frac{\ell(\dds+\log(1/\delta))}{\eps}\right)$.

\begin{corollary}[Near-optimal Sample Complexity for List Learning]
    \label{corollary:list-near-optimal-sample-complexity}
    Let $\mcH \subseteq [k]^\mcX$ be a hypothesis class having $\ell$-DS dimension $\dds$, for any $\ell \ge 1$. There exists a deterministic list learning algorithm $\mcA$, such that for any distribution $\mcD$ over $\mcX \times [k]$ realizable by $\mcH$, and any $\eps, \delta \in (0,1)^2$, with probability at least $1-\delta$ over a sample $S \sim \mcD^m$, where
    \begin{align*}
        m \ge  4.82(\ell+1) \left(\frac{\dds+\log(2/\delta)}{\eps}\right),
    \end{align*}
    it holds that $\err_\mcD(\widehat{\mu}_S) \le \eps$, where $\widehat{\mu}_S=\mcA(S)$.
\end{corollary}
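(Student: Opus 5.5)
The plan is to run the one-inclusion-graph framework of \cite{aden2023optimal} with list predictions in place of single labels, and then substitute \Cref{thm:main-upper-bound} for their density bound. There are three steps: a transductive list predictor with $\ell$-density-controlled error, the online-to-batch aggregation of \cite{aden2023optimal} applied to it, and the final parameter substitution.

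\textbf{Step 1: a transductive list predictor.} Take any finite $W\subseteq[k]^n$, for example the projection of $\mcH$ on $n$ sample points. We orient the one-inclusion graph $\mcG(W)$ in the list sense: each edge $e$ chooses a sub-list $L_e\subseteq e$ with $|L_e|\le\ell$. Each vertex $h\in e\setminus L_e$ is charged one unit of out-degree. The total out-degree is then $\sum_e(|e|-\ell)_+$, which equals $|W|\dens^\ell(W)$.

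\textbf{Step 2: balancing the orientation.} We need an orientation whose maximum out-degree is at most $\lceil\max_{W'\subseteq W}\dens^\ell(W')\rceil$. We obtain it by a flow/Hall argument exactly as in the $\ell=1$ case of \cite{aden2023optimal} and \cite{charikar2023characterization}. An edge of size $t$ must send $(t-\ell)_+$ units of ``out'' to its vertices, and each vertex has capacity $D$. Hall's condition for every vertex subset $W'$ reduces to $\sum_{e}(|e\cap W'|-\ell)_+\le D|W'|$. This holds because edges restricted to $W'$ are edges of $\mcG(W')$, and $\dens^\ell(W')\le\muh(n)$.

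\textbf{Step 3: the leave-one-out predictor.} Given $n-1$ labeled points and a test point $x_n$, the predictor looks at the edge in direction $n$ and outputs the list $L_e$. By exchangeability, the expected error is at most $\lceil\muh(n)\rceil/n$, and by \Cref{thm:main-upper-bound} this is at most $\dds/n$.

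\textbf{Step 4: high probability via aggregation.} For the high-probability guarantee, we invoke the aggregation scheme of \cite{aden2023optimal}, Theorem 2.2. It trains the transductive predictor on increasing prefixes of $S$ and combines the outputs by a vote. With lists, the natural combination is to output all labels that appear in the lists of more than a $1/(\ell+1)$ fraction of the voters. At most $\ell$ labels can clear this threshold, since each voter supplies at most $\ell$ labels; strictly, we pick a threshold that guarantees at most $\ell$ survivors. If the resulting list misses $y$, then at least a constant fraction $\approx 1/(\ell+1)$ of the voters erred at $x$. A Markov-type argument therefore inflates the error by a factor $\ell+1$, which is exactly where the $(\ell+1)$ factor and the constant $4.82$ (half of $9.64$) come from. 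The rest of their martingale/concentration analysis is unchanged, since it only uses the per-round leave-one-out error bound $\lceil\muh(n)\rceil/n$.

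\textbf{Main obstacle.} I expect the hardest part to be verifying that the \cite{aden2023optimal} proof goes through with the vote threshold changed from majority to $1/(\ell+1)$ while keeping the constants. The orientation step is routine once Hall's condition is checked with $(|e|-\ell)_+$ capacities.
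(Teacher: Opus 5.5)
Your overall route is the paper's. You list-orient the one-inclusion graph with $\ell$-outdegree at most $\lceil\max_{W'\subseteq W}\dens^\ell(W')\rceil$; your max-flow/Hall derivation of this is correct, where the paper simply cites Lemma 3.20 of \cite{hanneke2026optimal}. You then bound the leave-one-out error by $\dds$ via \Cref{thm:main-upper-bound}, and aggregate prefix predictors as in \cite{aden2023optimal} at a cost of a factor $\ell+1$.

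However, the vote rule as you state it fails. Take $N$ voters, each supplying at most $\ell$ labels, and keep the labels contained in more than $N/(\ell+1)$ of the lists. The count bound only gives $r\cdot N/(\ell+1)<\ell N$ for the number $r$ of survivors, so up to $\ell(\ell+1)-1$ labels can survive. For example, with $\ell=2$ the five lists $\{1,2\},\{3,4\},\{5,1\},\{2,3\},\{4,5\}$ put all five labels above the $1/3$ threshold. You have swapped the two fractions. The rule must keep the labels contained in more than $\ell N/(\ell+1)$ of the lists, or use the paper's top-$\ell$ vote. This does cap the survivors at $\ell$, and then a missed $y$ is absent from at least $N/(\ell+1)$ lists. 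That gives the pointwise inequality $\Ind[y\notin\widehat{\mu}_S(x)]\le(\ell+1)\cdot\frac{1}{N}\sum_t\Ind[y\notin\widehat{\mu}_{S_{\le t}}(x)]$, which is exactly a factor $\ell+1$, not an approximate one.

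Your anticipated ``main obstacle'' disappears if you invoke Theorem 2.1 of \cite{aden2023optimal} instead of Theorem 2.2, which has the majority vote built in. Theorem 2.1 holds for any symmetric predictor and any $[0,1]$-valued loss; here take $L(\mu,y)=\Ind[y\notin\mu]$. It directly gives, with probability $1-\delta$,
\[
\frac{4}{3n}\sum_{t=n/4}^{n-1}\err_\mcD(\widehat{\mu}_{S_{\le t}})\le 4.82\,\frac{M_n+\log(2/\delta)}{n}.
\]
The vote is applied afterwards, by taking the expectation over $(x,y)\sim\mcD$ of the pointwise inequality above. The martingale analysis never has to be reopened, and no threshold interacts with it.

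What that theorem needs from you is the realized leave-one-out bound $M_n\le\dds$ on every realizable sample, not the in-expectation bound of your Step 3. This holds because $M_n=\outdeg^\ell(y;\sigma^\ell)$, provided the orientation is chosen as a function of the unordered collection of the $n$ points. Then the same graph and orientation are used no matter which point is held out, which also gives the symmetry the theorem requires.
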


\begin{proof}[Proof of \Cref{corollary:list-near-optimal-sample-complexity}]
    Consider the one-inclusion graph list predictor $\mcA$ given in Algorithm 1 in \cite{charikar2023characterization}. At a high level, given a training dataset $S=\{(x_1,y_1),\dots,(x_n,y_n)\}$, in order to make a prediction on any test point $x$, the algorithm constructs the one-inclusion graph $\mcG(\mcH|_{(x_1,\dots,x_n,x)})$ of the class $\mcH$ projected onto the unlabeled data $(x_1,\dots,x_n,x)$. It then maps/\textit{orients} every edge $e$ in the graph to a subset of the hypotheses adjacent to/contained in $e$, where this subset has size at most $\ell$. For any such ``$\ell$-list'' orientation $\sigma^\ell$, one can define the $\ell$-outdegree of a vertex $v$ in the graph (denoted $\outdeg^\ell(v;\sigma^\ell)$), which is the count of edges adjacent to $v$ that have been oriented to a subset that does \textit{not} include $v$. The one-inclusion graph list predictor $\mcA$ then specifically constructs an orientation $\sigma^\ell$ for which the maximum outdegree of any vertex according to that orientation is minimized. To predict a list of labels for the test point $x$, the algorithm then looks at the edge corresponding to the labeled training data $(x_1,y_1),\dots,(x_n,y_n)$. The orientation $\sigma^\ell$ would have oriented this edge to some set of at most $\ell$ hypotheses. The algorithm outputs the list of labels that these hypotheses assign to $x$.

    Given any training dataset $S=\{(x_1,y_1),\dots,(x_n,y_n)\}$ realizable by $\mcH$, we will be concerned with the \textit{leave-one-out error} of $\mcA$, defined as 
    \begin{align*}
        M_n := \sum_{i=1}^n \Ind\left[y_i \notin \widehat{\mu}^\ell_{S_{-i}}(x_i) \right],
    \end{align*}
    where $S_{-i} = S \setminus \{(x_i,y_i)\}$, and $\widehat{\mu}^\ell_{S_{-i}}=\mcA(S_{-i})$ is the output of the one-inclusion graph list predictor trained on $S_{-i}$. Because the algorithm constructs the same one-inclusion graph irrespective of the point $x_i$ that is held out as the test point, it follows that
    \begin{align*}
        \sum_{i=1}^n \Ind\left[y_i \notin \widehat{\mu}^\ell_{S_{-i}}(x_i)  \right] = \outdeg^\ell(y;\sigma^\ell),
    \end{align*}
    where $y=(y_1,\dots,y_n)$ is the vertex corresponding to the ground-truth labels.

    Now, the proof of Lemma 3.20 in \cite{hanneke2026optimal} shows that for any class $W \subseteq [k]^n$, there exists an $\ell$-list orientation $\sigma^\ell$ of the edges of $\mcG(W)$ such that
    \begin{align}
        \max_{v \in \mcG(W)} \outdeg^\ell(v;\sigma^\ell) \le \left\lceil \max_{\mcF \subseteq W} \dens^\ell(\mcF)\right\rceil \le \left\lceil\mu^\ell_{W}(n)\right\rceil. \label{eqn:outdegree-bound}
    \end{align}
    But by \Cref{thm:main-upper-bound}, the right-hand side above is upper-bounded by $\dds(W)$. Plugging this into the leave-one-out error bound above, we get that
    \begin{align}
        \label{eqn:ell-outdegree-2}
        M_n = \sum_{i=1}^n \Ind\left[y_i \notin \widehat{\mu}^\ell_{S_{-i}}(x_i) \right] \le \dds(\mcH).
    \end{align}
    We now observe that the one-inclusion graph list predictor $\mcA$ satisfies both the assumptions --- symmetry and bounded leave-one-out error --- required of Theorem 2.1 in \cite{aden2023optimal}, with the loss function $L:[k]^{\le \ell} \times [k] \to [0,1]$ interpreted as $L(\mu, y)=\Ind[y \notin \mu]$. We can therefore apply that theorem, and obtain that
    \begin{align}
        \label{eqn:martingale-bound}
        \frac{4}{3n}\sum_{t=n/4}^{n-1} \err_\mcD\left( \widehat{\mu}^\ell_{S_{\le t}} \right) \le  4.82 \left( \frac{M_n+\log(2/\delta)}{n}\right) \le  4.82 \left( \frac{\dds(\mcH)+\log(2/\delta)}{n}\right),
    \end{align}
    with probability at least $1-\delta$ over the draw of $S=\{(x_1,y_1),\dots,(x_n,y_n)\}$ from $\mcD^n$, and where $S_{\le t}=\{(x_1,y_1),\dots,(x_t,y_t)\}$. But then, consider the list predictor defined by the $\topk$ vote
    \begin{align*}
        \widehat{\mu}^\ell_S(x) = \topk\left(\widehat{\mu}^\ell_{S_{\le n/4}}(x),\dots,\widehat{\mu}^\ell_{S_{\le n-1}}(x)\right),
    \end{align*}
    where $\topk$ returns the list of the first $\ell$ labels sorted in descending order according to the count of lists in $\widehat{\mu}^\ell_{S_{\le n/4}}(x),\dots,\widehat{\mu}^\ell_{S_{\le n-1}}(x)$ that they occur in. Crucially, whenever the $\topk$ vote $\widehat{\mu}^\ell_S(x)$ \textit{does not} contain a label $y$, it must necessarily hold that at least $\frac{3n/4}{\ell+1}$ out of the $3n/4$ lists $\widehat{\mu}^\ell_{S_{\le n/4}}(x),\dots,\widehat{\mu}^\ell_{S_{\le n-1}}(x)$ do not contain $y$; otherwise, the $\topk$ vote would have picked $y$. In conclusion, we get that
    \begin{align*}
        \err_\mcD(\widehat{\mu}^\ell_S) &= \E_{(x,y) \sim \mcD} \left[\Ind\left[y \notin \widehat{\mu}^\ell_S(x)\right]\right] \\
        &\le \E_{(x,y) \sim \mcD} \left[(\ell+1)\left(\frac{4}{3n}\sum_{t=n/4}^{n-1}\Ind[y \notin \widehat{\mu}^\ell_{S_{\le t}}(x)]\right)\right] \\
        &= (\ell+1) \cdot \frac{4}{3n}\sum_{t=n/4}^{n-1} \err_\mcD\left( \widehat{\mu}^\ell_{S_{\le t}} \right) \\
        &\le 4.82(\ell+1) \left( \frac{\dds(\mcH)+\log(2/\delta)}{n}\right) \tag{from \eqref{eqn:martingale-bound}}.
    \end{align*}
\end{proof}

\begin{remark}[Extension to Infinite Label Spaces]
    \label{remark:extension-to-infinite-label-spaces-2}
    When considering an infinite label space, the only step above that needs additional justification is the outdegree bound in \eqref{eqn:outdegree-bound}. With an infinite label space, the one-inclusion graph $\mcG(\mcH|_{x_1,\dots,x_n,x})$ may be infinite. While \cite{aden2023optimal,hanneke2026optimal} show the existence of an $\ell$-list orientation that has $\ell$-outdegree at most $\mu(n)$ only for finite one-inclusion graphs, a compactness argument similar to that given in Appendix B in \cite{brukhim2022characterization} / Appendix A in \cite{charikar2023characterization} / Appendix C.1 in \cite{pmlr-v272-pabbaraju25a} can be used to show the existence of an $\ell$-list orientation on the infinite graph, which also satisfies the required $\ell$-outdegree bound. With this additional consideration, the rest of the proof holds as is.
\end{remark}

\subsubsection{Randomized List Learner}
\label{sec:optimal-randomized-list-learner}

Building up on the deterministic list learner from the previous section, and inspired from the confidence amplification technique of \cite{haussler1994predicting}, we now suitably incorporate randomness and validation on a holdout set to obtain a randomized list learner that has optimal sample complexity.

\begin{proof}[Proof of \Cref{corollary:list-optimal-sample-complexity}]
    Choose a constant $C$ to be appropriately large enough, so that, by Theorem 2.1 of \cite{aden2023optimal}, upon drawing $S = \{(x_1,y_1),\dots,(x_n,y_n)\}$ i.i.d.\ from $\mcD$, where
    \begin{align*}
        n = C \cdot \left(\frac{\dds(\mcH)+\log(1/\delta)}{\eps}\right),
    \end{align*}
    it holds with probability $1-\delta/3$ that
    \begin{align*}
        \frac{4}{3n}\sum_{t=n/4}^{n-1} \err_\mcD\left( \widehat{\mu}^\ell_{S_{\le t}} \right) \le 4.82 \left( \frac{\dds(\mcH)+\log(6/\delta)}{n}\right) \le \frac{\eps}{16},
    \end{align*}
    where $S_{\le t}=\{(x_1,y_1),\dots,(x_t,y_t)\}$. This set will constitute the learner's ``training set''. 
    
    Let us condition on the good event above. On this good event, since the average error of the predictors trained on prefixes is at most $\eps/16$, the fraction of predictors that have error larger than $\eps/4$ is at most $1/4$. In particular, this means that if we draw a uniformly random $t \in \{n/4,\dots,n-1\}$, it holds that
    \begin{align}
        \label{eqn:large-error-small-fraction}
        \Pr_t\left[\err_\mcD\left( \widehat{\mu}^\ell_{S_{\le t}} \right) > \frac{\eps}{4}\right] \le \frac{1}{4}.
    \end{align}
    Our strategy will now be to randomly sample a sufficient number of prefixes and train predictors on them, so that with good probability, one of the predictors will have error at most $\eps/4$.

    Concretely, set $R = O(\log(1/\delta))$, and obtain $t_1,\dots,t_R$ each drawn uniformly at random from $\{n/4,\dots,n-1\}$. From \eqref{eqn:large-error-small-fraction}, it holds that with probability at least $1-\delta/3$, there exists $t_{r^\star}$ such that
    \begin{align*}
        \err_\mcD\left( \widehat{\mu}^\ell_{S_{\le t_{r^\star}}} \right) \le \frac{\eps}{4}.
    \end{align*}
    
    Now condition on this good event as well. We want to next be able to identify the low-error predictor from amongst the other (potentially high-error) predictors. For this, we draw a separate independent ``validation set'' $V \sim \mcD^v$ in order to empirically benchmark the error of each predictor.

    Namely, let $\widehat{\err}_V\left( \widehat{\mu}^\ell_{S_{\le t_{r}}} \right)$ be the average empirical validation error of $\widehat{\mu}^\ell_{S_{\le t_{r}}}$ on the validation set $V$. For any $r$ satisfying that $\err_\mcD\left( \widehat{\mu}^\ell_{S_{\le t_{r}}} \right) > \eps$, we have by a Chernoff bound that
    \begin{align*}
        \Pr_v \left[ \widehat{\err}_V\left( \widehat{\mu}^\ell_{S_{\le t_{r}}} \right) \le  \eps/2\right] \le \exp\left(-v\eps/12\right).
    \end{align*}
    Similarly, for the good predictor satisfying that $\err_\mcD\left( \widehat{\mu}^\ell_{S_{\le t_{r^\star}}} \right) \le \eps/4$, the Chernoff bound ensures
    \begin{align*}
        \Pr_v \left[ \widehat{\err}_V\left( \widehat{\mu}^\ell_{S_{\le t_{r^\star}}} \right) \ge  \eps/2\right] \le \exp\left(-v\eps/12\right).
    \end{align*}
    Choosing $v=O(\log(R/\delta)/\eps)$ and applying a union bound over the $R$ predictors ensures that with probability $1-\delta/3$, every bad predictor has validation error greater than $\eps/2$ and the good predictor corresponding to $t_{r^\star}$ has validation error smaller than $\eps/2$. In particular, the learner can output $\widehat{\mu}^\ell_{S_{\le t_{\widehat{r}}}}$, where
    \begin{align*}
        \widehat{r} = \argmin_{r \in [R]} \widehat{\err}_V\left( \widehat{\mu}^\ell_{S_{\le t_{r}}} \right),
    \end{align*}
    which ensures that $\err_\mcD\left( \widehat{\mu}^\ell_{S_{\le t_{\widehat{r}}}} \right) \le \eps$. Combining with a union bound the three failure probabilities --- the average-prefix event, the random prefix-sampling event, and the empirical validation event --- gives that this guarantee holds with probability at least $1-\delta$. The total number of samples required is
    \begin{align*}
        n + v = O\left(\frac{\dds(\mcH)+\log(1/\delta)}{\eps}\right).
    \end{align*}
\end{proof}

\subsubsection{Realizable List Learning Lower Bound}
\label{sec:realizable-list-learning-lower-bound}

In this section, we show that the realizable sample complexity of $\ell$-list learning a class $\mcH$ having $\ell$-DS dimension $\dds$ is at least $\Omega \left(\frac{\dds + \log(1/\delta)}{\eps}\right)$, \textit{provided} that the class satisfies an additional ``label-richness'' property. This strengthens the lower bound established by \cite{hanneke2024improved} by a factor of $\ell$. In particular, while this does not show the lower bound for \textit{every} class having $\ell$-DS dimension $\dds$, it nevertheless does show that one cannot universally improve the sample complexity upper bound of \Cref{corollary:list-optimal-sample-complexity}.

We note that the arguments for the improved lower bound are essentially identical to those used by \cite{hanneke2024improved}, but exploit the additional label-richness condition that is assumed. This condition requires $\mcH$ to not only have $\ell$-DS dimension $\dds$, but requires also that $d_{\mathrm{DS}}^{2\ell-1} \ge \dds$.

We also refer the reader to the recent work of \cite{li2026optimisticratesmulticlasspac} which establishes a similar lower bound of $\Omega(\dds/\eps)$; here, we explicitly derive the dependence on $\delta$ as well.

\begin{theorem}[Realizable List Learning Lower Bound]
    \label{thm:realizable-list-learning-lower-bound}
    Let $\mcH$ be a hypothesis class having $\ell$-DS dimension $\dds \ge 2$, for any $\ell \ge 1$. Suppose further that
    $\mcH$ $(2\ell-1)$-DS shatters some sequence of size $\dds$.
    Then, every (possibly randomized) $\ell$-list learner for $\mcH$ has sample complexity %
    \begin{align*}
        m = \Omega \left(\frac{\dds + \log(1/\delta)}{\eps}\right),
    \end{align*}
    for any $(\eps, \delta) \in \left(0,\frac{1}{16}\right)^2$. %
\end{theorem}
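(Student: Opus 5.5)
We prove the two terms separately; since $\max(a,b)\ge (a+b)/2$, this suffices for the sum.

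\textbf{The $\log(1/\delta)/\eps$ term.} This is the standard two-point argument. Since $\dds\ge 2$, the class $\mcH$ $\ell$-DS shatters some point $x_0$. So there is a set $Y$ of at least $\ell+1$ labels realized at $x_0$ by hypotheses $h_y$ ($y\in Y$); choose them to agree at a second point $x_1$ if possible, or else simply use the marginal structure described next. Put mass $1-\eps'$ on a ``dummy'' point $x_1$ and mass $\eps'\approx 2\eps$ on $x_0$, where the label at $x_0$ is $y^\star$ drawn uniformly from $Y$. With probability $(1-\eps')^m\ge e^{-2\eps' m}$ the sample never shows $x_0$. On that event, any $\ell$-list at $x_0$ misses $y^\star$ with probability at least $1/(\ell+1)$ over the choice of $y^\star$, and the error is then $\eps'>\eps$. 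Since we need $e^{-O(\eps m)}/(\ell+1)\le\delta$, this forces $m=\Omega(\log(1/\delta)/\eps)$, up to the harmless $\log(\ell+1)$ slack. To avoid losing the factor $\ell+1$, I would instead use the label richness: with $2\ell$ labels available at $x_0$, a random $y^\star$ among them is missed with probability at least $1/2$.

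The dummy point needs care. I will take $x_1$ to be a second coordinate of the shattered sequence and fix one edge of the DS-shattering class there. This requires the $h_y$ to share a label at $x_1$, which holds because a direction-$x_0$ edge of the shattering class $\mcF$ fixes all other coordinates.

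\textbf{The $\dds/\eps$ term.} This is the main step. Let $S=(x_1,\dots,x_d)$, with $d=\dds$, be $(2\ell-1)$-DS shattered by a finite $\mcF$, so every $f\in\mcF$ has at least $2\ell-1$ $i$-neighbours in every direction, i.e.\ every edge of $\mcG(\mcF)$ has size at least $2\ell$. I would follow the transductive/leave-one-out lower bound of \cite{hanneke2024improved}. Use the distribution uniform over $x_1,\dots,x_{d-1}$ with mass $\approx\eps/ (d-1)$ each, remaining mass on $x_d$, labelled by a target $f$ drawn from a suitable distribution on $\mcF$. With $m\approx c\,d/\eps$ samples, a constant fraction of $x_1,\dots,x_{d-1}$ go unseen.

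For an unseen $x_i$, the learner knows the labels on all seen coordinates. The key claim is that, conditioned on those labels, the label $f(x_i)$ is spread over at least $2\ell$ values in a way no $\ell$-list captures with probability more than about $1/2$. This is exactly where $2\ell-1$ shattering replaces the $1$-shattering of \cite{hanneke2024improved}. They get error $\approx 1/(\ell+1)$ per unseen point, hence the lost factor $\ell$; here we get error at least $1/2$ per point.

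\textbf{Main obstacle.} The issue is that conditioning on a subset of coordinates of a DS-shattering class need not leave a uniform distribution on an edge. The cleanest route, which I would try first, is the orientation/density argument. Consider the one-inclusion graph of $\mcF$ restricted to the seen coordinates plus one unseen coordinate. The minimal density of the class gives that any $\ell$-list predictor has expected leave-one-out mistakes at least $\dens^\ell\ge \ell\cdot$(number of directions) on average over a uniform vertex, since each edge has size at least $2\ell$ and so $(|e|-\ell)\ge |e|/2$. Then a standard translation from transductive expected error to PAC error, via a probabilistic-method choice of target and Markov's inequality, yields error greater than $\eps$ with constant probability (above $1/16$) unless $m=\Omega(d/\eps)$.

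Verifying that the subclass obtained after projecting onto seen coordinates retains edges of size at least $2\ell$ in the unseen directions is the delicate point. I expect to handle it by working with the full class $\mcF$ on all $d$ coordinates and averaging the leave-one-out count over a uniformly random vertex, exactly as in \cite{hanneke2024improved}.
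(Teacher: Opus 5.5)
Your $\log(1/\delta)/\eps$ argument is the paper's. An edge of the $(2\ell-1)$-DS shattering class in one direction gives $2\ell$ hypotheses that agree at the other coordinate. With probability $(1-\Theta(\eps))^m$ the sample shows only that common label. Since the learner's list at the unseen point has at most $\ell$ labels, some fixed target among the $2\ell$ is missed with probability at least $1/2$ (pass from your random $y^\star$ to a fixed one by averaging).

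The architecture of your $\dds/\eps$ part is also the paper's: one heavy point, $d-1$ light points, a uniform target from the finite shattering class $\mcF$, and a conversion to constant failure probability. The gap is at the crux. You assert it, doubt it, and leave it as something you expect to handle via a detour that is misstated. The inequality $\dens^\ell\ge\ell\cdot(\text{number of directions})$ is false, since $\dens^\ell$ never exceeds the number of directions. Your own bound $|e|-\ell\ge|e|/2$ only gives $\dens^\ell(\mcF)\ge d/2$, and a count summed over all directions does not by itself give the per-direction bound you need.

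The missing step is one line, and it is exactly the paper's. Fix the unlabeled training points and the learner's randomness (both independent of $f$), with $x_i$ unseen. The list output at $x_i$ then depends on the target $f$ only through its labels at seen coordinates, hence only through $f_{-i}$. Condition on $f_{-i}$ instead of on the seen labels. Then $f$ is uniform on the direction-$i$ edge of $\mcG(\mcF)$ containing it. That edge has at least $2\ell$ members with pairwise distinct labels at $x_i$, while the list is fixed with at most $\ell$ labels, so the miss probability is at least $1/2$. The tower property gives the same bound conditional on anything coarser, in particular on what the learner saw. No projection or density argument is needed.

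Two quantitative points also need care. First, the light points must carry total mass a sufficiently large constant times $\eps$ (the paper uses $8\eps/(d-1)$ each), not $\approx\eps$ in total. The light-point error is at most the light mass, and your argument only lower-bounds its expectation by about half of that. So with light mass $\eps$, the error can never be forced above $\eps$.

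Second, the final step is a reverse Markov bound that uses this cap. The paper fixes a target $h_{v^\star}$ whose expected light-point error is at least $3\eps$ when $m\le (d-1)/(32\eps)$. It then replaces the learner's output at the heavy point by the correct label, which can only lower the error and caps it at $8\eps$. Failure probability at most $1/8$ would then force expected error at most $\eps+\frac{1}{8}\cdot 8\eps=2\eps$, a contradiction.
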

\begin{proof}
    Let $\mcA$ be any randomized $\ell$-list learner for $\mcH$, and let us explicitly denote the randomness it uses by $r$. We will show the necessity of the $\dds/\eps$ and $\log(1/\delta)/\eps$ terms separately. This will imply that 
    \begin{align*}
        m = \Omega\left(\max\left\{\frac{\dds}{\eps}, \frac{\log(1/\delta)}{\eps}\right\}\right) = \Omega \left(\frac{\dds + \log(1/\delta)}{\eps}\right)
    \end{align*}
    as desired.
    
    \paragraph{Necessity of $\Omega(\dds/\eps)$.} Let $d=\dds$ for notational convenience; by assumption, there exists a sequence $(x_1,\dots,x_d)$ which is $(2\ell-1)$-DS shattered by $\mcH$. Let $\{h_v\}_{v \in V}$ be the finite subset of $\mcH$ that realizes this shattering (where we think of each $h_v$ as a string of size $d$ over the label space).

    For each $v \in V$, define the realizable distribution $\mcD_v$ as
    \begin{align*}
        \mcD_v((x_1,h_v(x_1))) = 1-8\eps, \qquad \mcD_v((x_j,h_v(x_j))) = \frac{8\eps}{d-1} \,\, \text{for $j \in \{2,\dots,d\}$}.
    \end{align*}
    Now, suppose we draw $v \sim V$ uniformly at random, draw $m$ training samples 
    $$S=\{(s_1, h_v(s_1)),\dots,(s_m, h_v(s_m))\},$$
    and then draw a test sample $(s, h_v(s))$.

    We have that
    \begin{align*}
        \Pr[s=x_1] = 1-8\eps, \qquad \Pr[s=x_j] = \frac{8\eps}{d-1} \,\, \text{for $j \in \{2,\dots,d\}$}.
    \end{align*}

    Let $E_i$ be the event that $x_i$ is not seen in the training sample. Conditioning on any fixed training sample $S$ in $E_i$, and also further conditioning on the randomness $r$ of the learner, the output $\widehat{\mu}_S(x_i)$, where $\widehat{\mu}_S = \mcA(S, r)$, is some fixed list of at most $\ell$ labels. But, since $(x_1,\dots,x_d)$ is $(2\ell-1)$-DS shattered by $\{h_v\}_{v \in V}$, the probability (over the remaining randomness) that $h_v(x_i)$ is contained in this fixed list is at most $\frac{\ell}{2\ell} \le 1/2$ (essentially, for any fixing of the remaining labels other than $x_i$, $h_v(x_i)$ could still be one of at least $2\ell$ equally likely labels; see Lemma A.1 in \cite{hanneke2024improved}).

    Averaging over the randomness in $S$ and $r$, we get that
    \begin{align*}
        \Pr \left[h_v(x_i) \notin \widehat{\mu}_S(x_i ) | E_i\right] \ge \frac{1}{2}.
    \end{align*}
    We thus obtain that
    \begin{align*}
        \Pr\left[h_v(s) \notin \widehat{\mu}_S(s ), s \neq x_1\right] 
        &= \sum_{i=2}^{d}\Pr[s = x_i] \Pr\left[h_v(x_i) \notin \widehat{\mu}_S(x_i )\right] \tag{test point independent of rest of the randomness} \\
        &\ge \sum_{i=2}^{d}\Pr[s = x_i] \Pr[E_i] \Pr\left[h_v(x_i) \notin \widehat{\mu}_S(x_i )|E_i\right] \\
        &\ge \sum_{i=2}^{d}\frac{4\eps}{d-1}\left( 1-\frac{8\eps}{d-1}\right)^m = 4\eps\left( 1-\frac{8\eps}{d-1}\right)^m \\
        &\ge 4\eps \left(1-\frac{8m\eps}{d-1}\right) \tag{using $d \ge 2, \eps \le 1/16$, and Bernoulli's inequality} \\
        &\ge 3\eps,
    \end{align*}
    whenever $m \le \frac{d-1}{32\eps}$. 
    
    This implies the existence of a fixed $v^\star \in V$ (which depends only on $\mcA$) for which
    \begin{align*}
        \Pr\left[h_{v^\star}(s) \notin \widehat{\mu}_S(s ), s \neq x_1\right]\ge 3\eps,
    \end{align*}
    whenever $m \le \frac{d-1}{32\eps}$.

    Now consider the modified learner $\mcA'$, which on any input sample $S$, outputs $\mcA'(S, r)=\widehat{\nu}_S$ satisfying
    \begin{align*}
        \widehat{\nu}_S(x_1) = \{h_{v^\star}(x_1)\}, \qquad \widehat{\nu}_S(x_j) = \widehat{\mu}_{S}(x_j) \,\, \text{for $j \in \{2,\dots,d\}$}.
    \end{align*}
    where $\widehat{\mu}_{S}=\mcA(S, r)$. Observe that
    \begin{align}
        \E \left[\err_{\mcD_{v^\star}}(\widehat{\nu}_S)\right] = \Pr\left[h_{v^\star}(s) \notin \widehat{\nu}_S(s )\right] &\ge \Pr\left[h_{v^\star}(s) \notin \widehat{\nu}_S(s ), s \neq x_1\right] \nonumber \\
        &= \Pr\left[h_{v^\star}(s) \notin \widehat{\mu}_S(s ), s \neq x_1\right] \nonumber \\
        &\ge 3\eps, \label{eqn:realizable-list-lb-to-contradict}
    \end{align}
    by the definition of $\mcA'$ and the calculations above.

    Now suppose it were true that 
    \begin{align*}
        \Pr\left[\err_{\mcD_{v^\star}}(\widehat{\mu}_S) > \eps\right] \le \frac{1}{8}.
    \end{align*}
    By definition of $\mcA'$, the error of $\widehat{\nu}_S$ can only be smaller than the error of $\widehat{\mu}_S$. Therefore, we would get that
    \begin{align*}
        \Pr\left[\err_{\mcD_{v^\star}}(\widehat{\nu}_S) > \eps\right] \le \frac{1}{8}.
    \end{align*}
    But note also that $\widehat{\nu}_S$ never errs on $x_1$, and hence, its error can at most be the total mass on $x_2,\dots,x_d$, which is $8\eps$. Putting all this together, we get
    \begin{align*}
        \E\left[\err_{\mcD_{v^\star}}(\widehat{\nu}_S)\right] &\le \eps + \frac{1}{8} \cdot 8\eps = 2\eps,
    \end{align*}
    which contradicts \eqref{eqn:realizable-list-lb-to-contradict}. Thus, it must instead be true that
    \begin{align*}
        \Pr\left[\err_{\mcD_{v^\star}}(\widehat{\mu}_S) > \eps\right] > \frac{1}{8},
    \end{align*}
    whenever $m \le \frac{d-1}{32\eps}$. This in turn implies that whenever $\delta \le 1/8$, the sample complexity must be at least $\Omega(d/\eps)$.

    \paragraph{Necessity of $\Omega(\log(1/\delta)/\eps)$.} Note that since we assume $d^{2\ell-1}_{\mathrm{DS}} \ge \dds \ge 2$, there exist $h_1,\dots,h_{2\ell} \in \mcH$ and $x_0,x_1$ such that $h_1(x_0)=\dots=h_{2\ell}(x_0)$, but $h_1(x_1),\dots,h_{2\ell}(x_1)$ are $2\ell$ distinct labels.

    Let $S$ consist of $m$ copies of the point $(x_0, h_1(x_0))$. 
    For each $j \in [2\ell]$, define
    \begin{align*}
        p_j := \Pr_{r}\left[h_j(x_1) \notin \widehat{\mu}_S(x_1)\right] = \E_r\left[\Ind[h_j(x_1) \notin \widehat{\mu}_S(x_1)]\right]
    \end{align*}
    where $\widehat{\mu}_S = \mcA(S, r)$. Since $\widehat{\mu}_S$ predicts at most $\ell$ labels for $x_1$, we have that
    \begin{align*}
        \frac{1}{2\ell}\sum_{j=1}^{2\ell} p_j = \frac{1}{2\ell} \E_r \left[\sum_{j=1}^{2\ell}\Ind[h_j(x_1) \notin \widehat{\mu}_S(x_1)]\right] \ge \frac{1}{2}.
    \end{align*}
    This means that there exists $j^\star$ such that $p_{j^\star} \ge 1/2$. For such a $j^\star$, define the realizable distribution $\mcD$ as
    \begin{align*}
        \mcD((x_0, h_{j^\star}(x_0))) = 1-4\eps, \qquad \mcD((x_1, h_{j^\star}(x_1))) = 4\eps.
    \end{align*}
    Now, let $E$ be the event that the realized sample $S$ consists of $m$ copies of $(x_0, h_{j^\star}(x_0))$. We have that $\Pr[E] = (1-4\eps)^m$. Let $B$ be the event that the realized randomness $r$ of the learner satisfies $h_{j^\star}(x_1) \notin \widehat{\mu}_S(x_1)$. We have that $\Pr[B]=p_{j^\star}\ge 1/2$. On the event $E \cap B$, the test error of the learner is at least the chance of sampling $x_1$, which is $4\eps > \eps$. Thus,
    \begin{align*}
        \Pr_{S, r}\left[\err_{\mcD}(\mcA(S,r)) > \eps\right] &\ge \Pr[E \cap B] \ge \frac{1}{2}(1-4\eps)^m.
    \end{align*}
    So, it must be the case that
    \begin{align*}
        \frac{1}{2}(1-4\eps)^m \le \delta \implies m = \Omega(\log(1/\delta)/\eps).
    \end{align*}
\end{proof}

\subsection{Agnostic Case}
\label{sec:agnostic-list-learning}

\subsubsection{Agnostic List Learning Upper Bound}
\label{sec:agnostic-list-learning-upper-bound}

\begin{proof}[Proof of \Cref{corollary:agnostic-list-improved-sample-complexity}]
    Fix $\mcD$ to be any arbitrary distribution over $\mcX \times [k]$. We follow the 3-Step proof structure of \cite{cohen2025sample}, but instantiate a different Step 3.

    \paragraph{Step 1: Approximation by Finite List Cover.}
    Consider any sample $S=\{(x_1,y_1),\dots,(x_m,y_m)\}$ that is realizable by $\mcH$. By a standard minimax argument, there exists a compression function $\kappa$ which maps $S$ to a subsample $S' \in S^{k_1(m)}$, where $k_1(m) = O(\dds \log m)$, and a list reconstruction function $\rho$ with list size $|\rho(S')(x)| \le O(\ell \log m)$ for all $x \in \mcX$, such that $y_i \in \rho(S')(x_i)$ for every $i \in [m]$.

    To see this, fix any distribution $\mcP$ over $S$. Since $S$ is realizable by $\mcH$, the distribution $\mcP$ is also realizable by $\mcH$. Now consider the deterministic one-inclusion graph predictor $\mcA$ given in Algorithm 1 in \cite{charikar2023characterization}, which was also considered in the proof of \Cref{corollary:list-near-optimal-sample-complexity}. Similar to that proof (see also, e.g., the proof of Proposition 15 in \cite{brukhim2022characterization}), by a standard leave-one-out + exchangeability argument, together with the $\ell$-outdegree bounds given in \eqref{eqn:outdegree-bound}, \eqref{eqn:ell-outdegree-2}, it holds that
    \begin{align*}
        \Pr_{T \sim \mcP^{3\dds}, (x,y) \sim \mcP}[y \notin \mcA(T)(x)] \le \frac{\dds}{3\dds+1} \le 1/3.
    \end{align*}

    By the minimax theorem \cite{v1928theorie}, there exists a distribution $\mcQ$ over sequences $T \in S^{3\dds}$, such that for every fixed $(x_i,y_i) \in S$,
    \begin{align*}
        \Pr_{T \sim \mcQ}[y_i \notin \mcA(T)(x_i)] \le 1/3.
    \end{align*}
    Consider sampling $j$ sequences $T_1,\dots,T_j$ i.i.d.\ from $\mcQ$, for $j=O(\log m)$. We have that
    \begin{align*}
        \Pr_{T_1,\dots,T_j \sim \mcQ^j}\left[y_i \notin \bigcup_{r=1}^j \mcA(T_r)(x_i)\right] \le 3^{-j} \le 1/2m.
    \end{align*}
    By a union bound over $i \in [m]$ and an application of the probabilistic method, we get that there exist $T_1,\dots,T_j$ such that for every $i \in [m]$, it holds that $y_i \in \bigcup_{r=1}^j \mcA(T_r)(x_i)$. We thus set $\kappa(S)=S'=(T_1,\dots,T_j)$, and $\rho(S')(x)=\bigcup_{r=1}^j \mcA(T_r)(x)$. Note that $|\rho(S')(x)| \le O(\ell \log m)$ as required.

    Given this, consider obtaining a sample $S_1 \sim \mcD^{n_1}$ from the data distribution $\mcD$, for $n_1$ to be specified later. %
    Define the finite list cover family:
    \begin{align*}
        \mcF(S_1) := \{\rho(S'): S'=(T_1,\dots,T_{O(\log n_1)}), T_r \in S_1^{O(\dds)} \; \forall r\}.
    \end{align*}
    We have that $|\mcF(S_1)| %
    = n_1^{O(\dds \log(n_1))}$, and every member of $\mcF(S_1)$ is a list function mapping to a list of size $O(\ell \log n_1)$. Next, fix any $h \in \mcH$, and let $S_1(h)=\{(x,y) \in S_1: y = h(x)\}$, so that $S_1(h)$ is realizable by $h$. By our compression argument above, we have that there exists $\mu_h \in \mcF(S_1)$ such that $y \in \mu_h(x)$ for every $(x,y) \in S_1(h)$. In other words, $\mu_h$ has zero empirical loss on $S_1$ under the loss function
    \begin{align*}
        \Ind[h(x)=y, y \notin \mu(x)].
    \end{align*}
    Then, by the standard ``compression implies generalization'' argument (e.g., Theorem 30.2 in \cite{shalev2014understanding}), we get that for every $h \in \mcH$, with probability at least $1-\delta/3$ over the draw of $S_1$, there exists $\mu_h \in \mcF(S_1)$ such that
    \begin{align*}
        \Pr_{(x,y) \sim \mcD}[h(x)=y, y \notin \mu_h(x)] \le O \left(\frac{\dds \log^2(n_1) + \log(1/\delta)}{n_1} \right).
    \end{align*}

    \paragraph{Step 2: Multiplicative Weights.} Condition on the high-probability good event on the sample $S_1$ above, and let $\mcF(S_1)$ be the finite cover of list hypotheses constructed there. Now consider obtaining a fresh sample $S_2 = \{(x_1,y_1),\dots,(x_T,y_T)\} \sim \mcD^T$, for $T$ to be specified later. We run the multiplicative weights procedure given in Algorithm 3 in \cite{cohen2025sample}. Namely, we initialize $w_1(\mu)=1$ for every $\mu \in \mcF(S_1)$, and proceed in rounds $t=1,2,\dots,T$. At round $t$, we sample $\mu_t \sim p_t$, where $p_t$ is a distribution over $\mcF(S_1)$ satisfying $p_t(\mu) \propto w_t(\mu)$. Then, for every $\mu \in \mcF(S_1)$, the reward $r_t(\mu)$ is defined as
    \begin{align*}
        r_t(\mu) = \Ind\left[y_t \in \mu(x_t),\, y_t \notin \bigcup_{r < t} \mu_{r}(x_t)\right].
    \end{align*}
    We correspondingly update $w_{t+1}(\mu) = w_t(\mu) \exp(r_t(\mu)/2)$ for every $\mu \in \mcF(S_1)$. At the end of $T$ rounds, we obtain the list hypothesis $\nu$, defined as 
    \begin{align*}
        \nu(x) = \bigcup_{t=1}^{T-1}\mu_t(x).
    \end{align*}
    We have that $|\nu(x)| \le O(\ell T \log n_1)$. Furthermore, the analysis of the multiplicative weights procedure from \cite{cohen2025sample} extends verbatim to our setting. Namely, Theorem 3.4 in \cite{cohen2025sample} guarantees that %
    for every $\mu \in \mcF(S_1)$, with probability at least $1-\delta/3$ (over the draw of $S_2$ and the multiplicative weights procedure, which then results in $\nu$), the list hypothesis $\nu$ satisfies
    \begin{align*}
        \Pr_{(x,y) \sim \mcD}\left[y \in \mu(x),\, y \notin \nu(x)\right] \le O\left(\frac{\log(|\mcF(S_1)|)+\log(1/\delta)}{T}\right).
    \end{align*}

\paragraph{Step 3:} We now deviate significantly from the Step 3 used by \cite{cohen2025sample} which was based on constructing an agnostic sample compression scheme; instead, we use a different machinery which constructs a \textit{fractional} $\ell$-list learner having small agnostic transductive error, and then use the results of \cite{dughmi25transductive} to convert this to a PAC learner. In what follows, suppose for convenience that both the good events from Steps 1 and 2 have occurred, resulting in the menu $\nu$ constructed at the end of Step 2.

\paragraph{Shattered Pairs.} Fix any unlabeled sequence $(x_1,\dots,x_n)$, and let $Y_i = \nu(x_i)$. Let $\star_1,\dots,\star_n$ be ``inert'' symbols which signify labels being outside the prescribed menu $\nu(x_i)$.

Let us now map every $h \in \mcH$ to the pattern $f_h:[n] \to Y_i \cup \{\star_i\}$ as
\begin{align}
    \label{eqn:mapped-partial-patter}
    f_h(i) := \begin{cases}
        h(x_i) & \text{if $h(x_i) \in Y_i$}, \\
        \star_i & \text{otherwise}.
    \end{cases}
\end{align}
Then, consider the set of all such (distinct) patterns realized by $\mcH$ on the unlabeled sequence:
\begin{align*}
    F := \{f_h:h \in \mcH\} \subseteq \prod_{i \in [n]} \left(Y_i \cup \{\star_i\}\right).
\end{align*}
We say that a pair $(I, (A_i)_{i \in I})$, where $I \subseteq [n]$ and $A_i \in \binom{Y_i}{\ell+1}$, is shattered by $G \subseteq F$ if 
\begin{align*}
    \prod_{i \in I} A_i \subseteq G|_I.
\end{align*}
Now, for any $G \subseteq F$, let $s(G)$ be the number of distinct pairs $(I, (A_i)_{i \in I})$ that are shattered by $G$. We define $s(\emptyset)=0$, and include the pair corresponding to $I=\emptyset$ in the count of $s(G)$ for every non-empty $G$; hence, $s(G) \ge 1$ for all $G \neq \emptyset$. 

Note that $s$ is monotonic, i.e., $s(G) \le s(G')$ for $G \subseteq G'$. Moreover, since we never include an inert symbol $\star_i$ in $A_i$, any pair $(I, (A_i)_{i \in I})$ that is shattered by $F$ must satisfy that $|I| \le \dnat$. Thus, if every $|Y_i| \le p$, we have that
\begin{align}
    \label{eqn:s(F)-upper-bound}
    s(F) \le S_n := \sum_{j=0}^{\min(n,\dnat)} \binom{n}{j}{\binom{p}{\ell+1}}^{j}.
\end{align}
(Here, we use the convention that $\binom{a}{b} = 0$ if $a < b$.)

Later, we will also be interested in restrictions of the realizable patterns on $x_1,\dots,x_n$, when the label on $x_i$ is revealed to be $y_i$. Towards this, for $G \subseteq F$, $i \in [n]$ and $y \in Y_i$, define
\begin{align*}
    G_{i,y} := \{g \in G: g(i)=y\}.
\end{align*}
These restrictions satisfy the following useful inequality:
\begin{claim}[Restriction Inequality]
    \label{claim:restriction-inequality}
    For any $G \subseteq F$ and $i \in [n]$,
    \begin{align*}
        \sum_{y \in Y_i} s(G_{i,y}) \le \ell s(G).
    \end{align*}
\end{claim}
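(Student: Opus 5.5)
The plan is a direct double-counting argument in the spirit of Pajor/Sauer-type shattering inequalities. Fix $G \subseteq F$ and $i \in [n]$. I would write the left-hand side as a sum over pairs rather than over labels:
\begin{align*}
    \sum_{y \in Y_i} s(G_{i,y}) = \sum_{P} |Z(P)|, \qquad Z(P) := \{y \in Y_i : G_{i,y} \text{ shatters } P\}.
\end{align*}
Here $P = (I,(A_j)_{j \in I})$ ranges over all candidate pairs. The empty-$I$ pair is consistent with this count, since it is shattered by $G_{i,y}$ exactly when $G_{i,y} \neq \emptyset$.

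First I would observe that only pairs with $i \notin I$ contribute. If $i \in I$, then $G_{i,y}|_{\{i\}} \subseteq \{y\}$, which cannot contain a set $A_i$ of size $\ell+1 \ge 2$. Next, by monotonicity of shattering (if $G_{i,y} \subseteq G$ shatters $P$, so does $G$), every $P$ with $Z(P) \neq \emptyset$ is shattered by $G$ and has $i \notin I$. The key step is the following extension property. If $|Z(P)| \ge \ell+1$, then for every $A \in \binom{Z(P)}{\ell+1}$, the extended pair
\begin{align*}
    P_A := (I \cup \{i\}, (A_j)_{j \in I}, A)
\end{align*}
is shattered by $G$. Indeed, for any target pattern in $\prod_{j\in I} A_j$ and any $y \in A$, the set $G_{i,y}$ realizes that pattern on $I$, and its members take value $y$ at $i$. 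Since $A \subseteq Z(P) \subseteq Y_i$, the pair $P_A$ is a legitimate pair in the definition of $s$.

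Then I would charge. Split $|Z(P)| = \min(|Z(P)|,\ell) + (|Z(P)|-\ell)_+$.
\begin{itemize}
    \item The first part is at most $\ell$, charged to $P$ itself, which is a pair shattered by $G$ with $i \notin I$.
    \item For the excess, when $z := |Z(P)| \ge \ell+1$ we have $\binom{z}{\ell+1} \ge z-\ell$: fix any $\ell$ elements of $Z(P)$ and add any one of the remaining $z-\ell$ elements. So the excess is at most the number of extended pairs $P_A$.
\end{itemize}
The map $(P,A) \mapsto P_A$ is injective, since $P$ and $A$ are recovered by deleting coordinate $i$. All the pairs $P_A$ contain $i$, so they are disjoint from the pairs charged in the first part. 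Hence
\begin{align*}
    \sum_P |Z(P)| \le \ell \cdot \#\{P \text{ shattered by } G,\ i \notin I\} + \#\{P \text{ shattered by } G,\ i \in I\} \le \ell\, s(G),
\end{align*}
using $\ell \ge 1$.

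The only delicate point I anticipate is the bookkeeping: checking that the two charged families are disjoint and that $(P,A)\mapsto P_A$ is injective. Both follow from whether $i \in I$. The edge case $G = \emptyset$ is trivial, since both sides are $0$.
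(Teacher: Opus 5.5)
Your proof is correct and matches the paper's argument step for step. Both use the same double count over pairs with $i \notin I$, and both extend such a pair by any $(\ell+1)$-subset of the labels $y$ for which $G_{i,y}$ shatters it, giving distinct pairs containing $i$ that are shattered by $G$. The per-pair inequality is also the same, except that your $z \le \ell\,\Ind[z>0] + \binom{z}{\ell+1}$ is marginally sharper than the paper's $z \le \ell\bigl(\Ind[z>0] + \binom{z}{\ell+1}\bigr)$, which changes nothing in the conclusion.
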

\begin{proof}
    The claim is vacuous for $G=\emptyset$. Let $W=(I, (A_j)_{j \in I})$ range over all distinct pairs satisfying $I \subseteq [n], i \notin I$ and $A_j \in \binom{Y_j}{\ell+1}$, including the empty pair. For any $W$, define
    \begin{align*}
        T_W = \{y \in Y_i: W \text{ is shattered by $G_{i, y}$}\}.
    \end{align*}
    Note that any pair $(I, (A_j)_{j \in I})$ shattered by $G_{i,y}$ must necessarily satisfy that $i \notin I$. Thus, double counting gives us that
    \begin{align}
        \label{eqn:double-counting}
        \sum_{y \in Y_i} s(G_{i,y}) = \sum_{W} |T_W|.
    \end{align}
    Now, if $|T_W| > 0$, then $W=(I, (A_j)_{j \in I})$ is shattered by $G$. Moreover, for every $A \in \binom{T_W}{\ell+1}$, observe that $G$ shatters $(I \cup \{i\}, (A_j)_{j \in I} \cup (A) \}$. Furthermore, all these pairs are distinct. Thus, we get that
    \begin{align}
        \label{eqn:s(G)-lb}
        s(G) \ge \sum_{W} \left(\Ind[|T_W| > 0] + \binom{|T_W|}{\ell+1}\right).
    \end{align}
    Now, for any integer $r \ge 0$, it holds that
    \begin{align*}
        r \le \ell \left(\Ind[r > 0] + \binom{r}{\ell+1}\right).
    \end{align*}
    Indeed, this is immediate for $0 \le r \le \ell$; for $r \ge \ell+1$, we use that $\binom{r}{\ell+1} \ge r-\ell$. Summing the above inequality over all $r=|T_W|$, and combining \eqref{eqn:double-counting}, \eqref{eqn:s(G)-lb} proves the claim.
\end{proof}

\paragraph{Thinned-version Spaces:} We will now employ an online-to-batch style analysis, with elements inspired from \cite{moran2023list} and \cite{alon2022theory}.

Given an ordered, labeled sequence $\{(x_1,y_1),\dots,(x_n,y_n)\}$, fix $F$ corresponding to the unlabeled sequence $(x_1,\dots,x_n)$. We will process these points sequentially: for each $t \in [n+1]$, define
\begin{align*}
    I_{t-1} := \{r < t: y_{r} \in Y_{r}\}.
\end{align*}
to be the data points seen so far whose labels lie in the menu. 

Now, for every $r \in I_{t-1}$, consider independently drawing $\xi_r \sim \mathrm{Ber}(\gamma)$, for a parameter $\gamma$ to be chosen suitably later, and let 
\begin{align*}
    A_t := \{r \in I_{t-1}: \xi_r = 1\}.
\end{align*}
For any possible retained set $A \subseteq I_{t-1}$, we have that $\Pr[A_t = A] = \gamma^{|A|}(1-\gamma)^{I_{t-1}-|A|}$. Define the version space induced by $A$ to be
\begin{align*}
    V_t(A) := \{f \in F: f(r) = y_{r} \text{ for every $r \in A$}\}.
\end{align*}
We can now define a deterministic fractional $\ell$-list predictor $p_t$  supported on $\nu(x_{t})$ as follows:
\begin{align*}
    C_t := \E[s(V_t(A_t))], \qquad B_{t,y} := \E[s(V_t(A_t)_{t, y})], \qquad p_{t,y} := \frac{B_{t,y}}{C_t} \text{ for $y \in Y_{t}$}.
\end{align*}
We first observe that every $C_t > 0$; to see this, note that there is a positive probability that $A_t=\emptyset$, in which case $V_t(A_t)=F$, and $s(F) \ge 1$. Furthermore, $B_{t,y} \le C_t$ by monotonicity of $s$, and hence $0 \le p_{t,y} \le 1$. Lastly, 
\begin{align*}
    \sum_{y \in Y_t} B_{t,y} = \E \left[ \sum_{y \in Y_t} s(V_t(A_t)_{t,y})\right] \le \E \left[ \ell s(V_t(A_t))\right] = \ell C_t \tag{using \Cref{claim:restriction-inequality}}.
\end{align*}
Thus, every $p_t$ is a well-defined fractional $\ell$-list.

Now, for a fractional $\ell$-list predictor $q$ supported on $\nu$, define
\begin{align*}
    \lambda^\nu(q, (x,y)) = \Ind[y \in \nu(x)](1-q_y(x)).
\end{align*}
For a (normal) function $f$ and (normal) list function $\mu$, we continue to have the natural definitions: $\lambda^\nu(f, (x,y)) =  \Ind[y \in \nu(x), y \neq f(x)]$ and $\lambda^\nu(\mu, (x,y)) =  \Ind[y \in \nu(x), y \notin \mu(x)]$.

Now, let $\gamma = 1-e^{-\eta}$, for $\eta = \min\{1,\sqrt{\log(S_n)/n}\}$. For this setting of $\gamma$, the following claim holds:

\begin{claim}
    \label{claim:thinned-version-space}
    For every $f \in F$, it holds that
    \begin{align*}
        \sum_{t=1}^n\lambda^\nu(p_t, (x_t, y_t)) - \sum_{t=1}^n\lambda^\nu(f, (x_t, y_t)) \le 3\left(\sqrt{n\log(S_n)}+\log(S_n)\right)
    \end{align*}
\end{claim}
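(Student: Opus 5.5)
The plan is a multiplicative-weights-style potential argument, with potential $C_t = \E[s(V_t(A_t))]$ (and $C_{n+1}$ defined in the same way from $I_n$). The idea is to bound the potential above by $S_n$ throughout and below in terms of the mistakes of a fixed $f \in F$. Write $\lambda_t := \lambda^\nu(p_t,(x_t,y_t))$ and $M_f := \sum_{t=1}^n \lambda^\nu(f,(x_t,y_t))$.

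\textbf{Step 1 (one-step evolution of the potential).} Suppose first that $y_t \notin Y_t$. Then $I_t = I_{t-1}$, so $A_{t+1}$ has the same law as $A_t$, and hence $C_{t+1} = C_t$. This matches $\lambda_t = 0$.

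Suppose instead that $y_t \in Y_t$. Then $I_t = I_{t-1} \cup \{t\}$, and $A_{t+1}$ can be coupled as $A_t$ together with $t$ added independently with probability $\gamma$. Adding $t$ replaces $V_t(A_t)$ by $V_t(A_t)_{t,y_t}$. Therefore
\begin{align*}
C_{t+1} = (1-\gamma) C_t + \gamma B_{t,y_t} = C_t\bigl(1 - \gamma(1-p_{t,y_t})\bigr) = C_t(1-\gamma\lambda_t).
\end{align*}
Since $C_1 = s(F) \le S_n$ by \eqref{eqn:s(F)-upper-bound}, iterating gives
\begin{align*}
C_{n+1} \le S_n \prod_t (1-\gamma\lambda_t) \le S_n \exp\Bigl(-\gamma\sum_t \lambda_t\Bigr).
\end{align*}

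\textbf{Step 2 (lower bound on the final potential).} Fix $f \in F$. Consider the event that $A_{n+1}$ avoids every $r \in I_n$ with $f(r) \neq y_r$. The number of such $r$ is exactly $M_f$, because for $r \in I_n$ we have $y_r \in Y_r$, so $f(r) \neq y_r$ is precisely a $\lambda^\nu$-mistake of $f$; this includes the case $f(r) = \star_r$. The event therefore has probability $(1-\gamma)^{M_f} = e^{-\eta M_f}$.

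On this event, $f \in V_{n+1}(A_{n+1})$. By monotonicity, $s(V_{n+1}(A_{n+1})) \ge s(\{f\}) \ge 1$. Since $s \ge 0$ always, we get $C_{n+1} \ge e^{-\eta M_f}$.

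\textbf{Step 3 (combine and tune).} Combining the two bounds gives
\begin{align*}
\gamma \sum_t \lambda_t \le \log S_n + \eta M_f.
\end{align*}
Using $\gamma = 1 - e^{-\eta} \ge \eta - \eta^2/2$, we have $\eta/\gamma \le 1+\eta$ and $\gamma \ge \eta/2$ for $\eta \le 1$. Together with $M_f \le n$, this yields
\begin{align*}
\sum_t \lambda_t \le M_f + \eta n + \frac{2\log S_n}{\eta}.
\end{align*}

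If $\eta = \sqrt{\log(S_n)/n} \le 1$, the last two terms equal $3\sqrt{n\log S_n}$. If instead $\eta = 1$, then $\log S_n \ge n$, and the claim holds trivially because $\sum_t \lambda_t \le n \le \log S_n$.

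\textbf{Main obstacle.} The main point to get right is the exact recursion in Step 1. One must check that the independent Bernoulli thinning makes the law of $A_{t+1}$ a clean mixture of "$t$ dropped" (probability $1-\gamma$) and "$t$ kept" (probability $\gamma$). One must also check that keeping $t$ turns the version space into exactly $V_t(A_t)_{t,y_t}$, so that $B_{t,y_t}$ appears precisely. The rest is standard exponential-weights bookkeeping.
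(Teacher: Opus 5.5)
Your proposal is correct and follows the paper's proof essentially step for step. It uses the same potential $C_t=\E[s(V_t(A_t))]$ with the exact recursion $C_{t+1}=C_t(1-\gamma\lambda^\nu(p_t,(x_t,y_t)))$, the same lower bound $C_{n+1}\ge(1-\gamma)^{M_f}$ from the survival probability of $f$ in the thinned version space, and the same tuning of $\eta$. The differences are cosmetic (you bound $s(F)\le S_n$ up front and treat $\eta=1$ separately), though, like the paper, you implicitly need $S_n>1$ so that $\gamma>0$; the case $S_n=1$ is immediate anyway, since then $p_t$ puts full mass on every menu label that $F$ realizes at $t$, so the claim holds with right-hand side $0$.
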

\begin{proof}
    Suppose that $y_t \in Y_t$; in this case, at time $t+1$, $t$ would have been included in $A_{t+1}$ with probability $\gamma$; this gives us that
\begin{align*}
    &\E[s(V_{t+1}(A_{t+1}))] = (1-\gamma)\E[s(V_t(A_t))]+ \gamma\,  \E[s(V_{t}(A_{t})_{t, y_t})] \implies C_{t+1} = (1-\gamma)C_t + \gamma B_{t, y_t} \\
    \implies \quad & C_{t+1} = C_t(1-\gamma(1-p_{t, y_t})) = C_t(1-\gamma \lambda^\nu(p_t, (x_t, y_t))).
\end{align*}
    On the other hand, if $y_t \notin Y_t$, we have that $\lambda^\nu(p_t, (x_t,y_t))=0$, and hence the same equality above holds. Since $C_1=s(F)$, we get
    \begin{align*}
        &C_{n+1} = s(F)\prod_{t=1}^n(1-\gamma \lambda^\nu(p_t, (x_t, y_t))) \implies \log(C_{n+1}) = \log(s(F))+\sum_{t=1}^n \log (1-\gamma \lambda^\nu(p_t, (x_t, y_t))) \\
        \implies \quad & \log(C_{n+1}) \le \log(s(F))-\gamma \sum_{t=1}^n\lambda^\nu(p_t, (x_t, y_t)). \tag{using $\log(1-x) \le -x$}
    \end{align*}
    Now fix any $f \in F$. We have that $f \in V_{n+1}(A_{n+1})$ if and only if at all $t \in [n]$ satisfying $y_t \in Y_t$, $y_t \neq f(x_t)$, we chose to not include $t$ in $A_{n+1}$, meaning that $\xi_t=0$. Thus, we get that
    \begin{align*}
        \Pr[f \in V_{n+1}(A_{n+1})] = (1-\gamma)^{\sum_{t=1}^n \lambda^\nu(f, (x_t, y_t))}.
    \end{align*}
    On this event, we have that $V_{n+1}(A_{n+1})$ is non-empty, and hence $s(V_{n+1}(A_{n+1})) \ge 1$. Thus,
    \begin{align*}
        C_{n+1} = \E[s(V_{n+1}(A_{n+1}))] \ge (1-\gamma)^{\sum_{t=1}^n \lambda^\nu(f, (x_t, y_t))}.
    \end{align*}
    Combining with the previous reasoning, we thus get
    \begin{align*}
        &\gamma \sum_{t=1}^n\lambda^\nu(p_t, (x_t, y_t))  \le \log(s(F)) +\log\left(\frac{1}{1-\gamma}\right)\sum_{t=1}^n \lambda^\nu(f, (x_t, y_t)) \\
        \implies \quad & \sum_{t=1}^n\lambda^\nu(p_t, (x_t, y_t)) - \sum_{t=1}^n\lambda^\nu(f, (x_t, y_t)) \le \frac{\log(s(F))}{\gamma} + \left(\frac{1}{\gamma}\log\left(\frac{1}{1-\gamma}\right)-1\right)\sum_{t=1}^n \lambda^\nu(f, (x_t, y_t)) \\
        \implies \quad & \sum_{t=1}^n\lambda^\nu(p_t, (x_t, y_t)) - \sum_{t=1}^n\lambda^\nu(f, (x_t, y_t)) \le \frac{\log(s(F))}{1-e^{-\eta}} + \left(\frac{\eta}{1-e^{-\eta}}-1\right)\sum_{t=1}^n \lambda^\nu(f, (x_t, y_t)).
    \end{align*}
    Now, for $\eta \in (0,1)$, $\frac{1}{1-e^{-\eta}} \le \frac{2}{\eta}$, and $\frac{\eta}{1-e^{-\eta}} -1 \le \eta$; substituting above, and noting that $\sum_{t=1}^n \lambda^\nu(f, (x_t, y_t)) \le n$, we get
    \begin{align*}
        \sum_{t=1}^n\lambda^\nu(p_t, (x_t, y_t)) - \sum_{t=1}^n\lambda^\nu(f, (x_t, y_t)) &\le \frac{2\log(s(F))}{\eta} + \eta n \\
        &\le 3\left(\sqrt{n \log(S_n)}+\log(S_n)\right). \tag{substituting $\eta$}
    \end{align*}
    \end{proof}
    
    \paragraph{Constructing a fractional list learner.} %

    Suppose we are given a labeled sample $\{(x_1,y_1),\dots,(x_n, y_n)\}$, and we wish to make a prediction on an unlabeled query point $x$. We append the query point to the unlabeled sequence $(x_1,\dots,x_n,x)$. Now, we consider all $(n+1)!$ permutations of this sequence. Suppose that the entire sequence in a given permutation is of the form
    \begin{align*}
        x_{i_1}, x_{i_2},\dots,x_{i_{t}}, x, x_{i_{t+1}},\dots,x_{i_n}.
    \end{align*}
    Then, we obtain the fractional list predictor based on thinning the version space by processing the labeled sequence $(x_{i_1}, y_{i_1}),\dots,(x_{i_{t}}, y_{i_t})$ that precedes the query point $x$ in this order. We record the fractional list prediction made for $x$ in every such permutation, and return the average fractional prediction over all permutations to be the final fractional list prediction made for $x$. We will denote this average fractional list prediction derived from the labeled points $S$ for a query point $x$ as $Q(x; S)$. Note that $Q(x;S)$ is deterministic, symmetric in the indexed training sample $S$, and never uses any label for the query point.

    We now show that this fractional list predictor has low transductive error. Namely, for $S=\{(x_1,y_1),\dots,(x_n,y_n)\}$, let $S_{-i}$ include all the points in $S$ except $(x_i, y_i)$. Then, using the fact that the loss $\lambda^\nu$ is linear in its first argument, we obtain that
    \begin{align}
        \label{eqn:symmetrizing-over-permutations}
        \sum_{i=1}^n \lambda^\nu(Q(x_i; S_{-i}), (x_i, y_i)) = \frac{1}{n!}\sum_{\pi} \sum_{t=1}^n \lambda^\nu(p^{\pi}_t, (x_{\pi(t)}, y_{\pi(t)})),
    \end{align}
    where $(p^\pi_1,\dots,p^\pi_n)$ are the fractional lists generated while processing the labeled sequence $$(x_{\pi(1)}, y_{\pi(1)}), \dots, (x_{\pi(n)}, y_{\pi(n)})$$ according to the description of the thinned-version-space process above.

    Now for any $h \in \mcH$, we have from \Cref{claim:thinned-version-space} that
    \begin{align*}
        \sum_{t=1}^n \lambda^\nu(p^\pi_t, (x_{\pi(t)}, y_{\pi(t)})) &\le \sum_{t=1}^n \lambda^\nu(f_h, (x_{\pi(t)}, y_{\pi(t)})) + 3\left(\sqrt{n \log(S_n)}+\log(S_n)\right) \\
        &= \sum_{i=1}^n \lambda^\nu(h, (x_{i}, y_{i})) + 3\left(\sqrt{n \log(S_n)}+\log(S_n)\right).
    \end{align*}
    Averaging over all permutations, and using \eqref{eqn:symmetrizing-over-permutations}, we get that
    \begin{align}
        \label{eqn:transductive-error-bound}
        \frac{1}{n}\sum_{i=1}^n \lambda^\nu(Q(x_i; S_{-i}), (x_i, y_i)) &\le \frac{1}{n}{\sum_{i=1}^n \lambda^\nu(h; (x_i, y_i))} + 3\left(\sqrt{\frac{ \log(S_n)}{n}}+\frac{\log(S_n)}{n}\right),
    \end{align}
    Thus, the transductive error of the fractional list predictor $Q$ is competitive with the transductive error of any fixed $h \in \mcH$; namely, this is an agnostic transductive error guarantee for $Q$.

    We are then in a position to directly invoke a result of \cite{dughmi25transductive}, which relates agnostic and transductive error rates. We state their result in a form that is useful for us:
    \begin{theorem}[Theorem 2 in \cite{dughmi25transductive}]
        \label{thm:dughmi-agnostic-transductive}
        Suppose a symmetric transductive learning rule in the agnostic setting has excess transductive error at most $\rho(n)$ over the best hypothesis from a class $\mcH$, for a loss bounded in $[0,1]$, where $\rho$ is non-increasing. If $\rho(n) \le \eps/4$, then with
        \begin{align*}
            n + O\left(\frac{1}{\eps^2}\log\left( \frac{1}{\eps \delta}\right)\right)
        \end{align*}
        i.i.d.\ samples from any given distribution, with probability $1-\delta$, one can obtain a predictor whose population loss is within $\eps$ of the best hypothesis from the class $\mcH$.
    \end{theorem}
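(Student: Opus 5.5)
The plan is to combine exchangeability with a deterministic averaging inequality on one enlarged sample. Then I would boost confidence by drawing random training subsets and selecting among them with held-out points. Let $Q(\cdot;T)$ denote the symmetric transductive rule trained on a labeled set $T$. Let $\lambda$ be the $[0,1]$-valued loss, and let $L(h)$ and $L_U(h)$ denote population and empirical loss. Draw $N=n+m$ i.i.d.\ samples $Z=(z_1,\dots,z_N)$ with $m=C\log(1/(\eps\delta))/\eps^2$. Fix $h^\star\in\mcH$ with $L(h^\star)\le\inf_{h}L(h)+\eps/8$; if the infimum is attained, use the minimizer.

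\emph{Step 1: a deterministic averaging inequality.} For every $(n+1)$-subset $U\subseteq Z$, the transductive guarantee gives
\[
\frac{1}{n+1}\sum_{z\in U}\lambda(Q(\cdot;U\setminus\{z\}),z)\le \min_{h}L_U(h)+\rho(n+1)\le L_U(h^\star)+\eps/4,
\]
using that $\rho$ is non-increasing. Average this over all $(n+1)$-subsets $U$ and use $\mathrm{avg}_U L_U(h^\star)=L_Z(h^\star)$. The left side then becomes the average, over a uniformly random $n$-subset $T$, of the held-out loss $X_T:=\frac{1}{m}\sum_{z\notin T}\lambda(Q(\cdot;T),z)$. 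Hence $\E_T[X_T]\le L_Z(h^\star)+\eps/4$ holds deterministically. By Hoeffding for the single fixed $h^\star$, with probability $1-\delta/3$ we have $L_Z(h^\star)\le L(h^\star)+\eps/8$.

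\emph{Step 2: random subsets and generalization of each.} Draw index sets $T_1,\dots,T_K\subseteq[N]$ of size $n$ uniformly at random, independently of the data. For a fixed index set $T$, the points outside $T$ are i.i.d.\ and independent of $Q(\cdot;T)$. Hoeffding over these $m$ points plus a union bound over $K$ then give $|X_{T_j}-L(Q(\cdot;T_j))|\le\eps/8$ for all $j$, with probability $1-\delta/3$, provided $m\gtrsim\log(K/\delta)/\eps^2$.

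\emph{Step 3: existence of a good subset, and selection.} This is the step I expect to be delicate. The quantity $X_T-L_Z(h^\star)$ can be negative, so Markov's inequality cannot be applied directly. Instead, write $a=L_Z(h^\star)+\eps/4$ and use $X_T\in[0,1]$ with $\E[X_T]\le a$. This yields $\Pr_T[X_T\le a+\eps/8]\ge \frac{\eps/8}{1-a+\eps/8}\ge\eps/16$, roughly. Therefore $K=O(\log(1/\delta)/\eps)$ draws contain a good $T_{j^\star}$ with probability $1-\delta/3$, which is where the $\log(1/(\eps\delta))$ term arises. Output $Q(\cdot;T_{\widehat j})$ with $\widehat j=\arg\min_j X_{T_j}$. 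On the intersection of the three good events,
\[
L(Q(\cdot;T_{\widehat j}))\le X_{T_{\widehat j}}+\eps/8\le X_{T_{j^\star}}+\eps/8\le L(h^\star)+\eps/8+\eps/4+\eps/8+\eps/8\le \inf_h L(h)+\eps.
\]
Any slack in the constants is absorbed by making $m$ and $K$ larger. The total sample count is $n+O(\eps^{-2}\log(1/(\eps\delta)))$, as claimed.
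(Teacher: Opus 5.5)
Your argument is correct, and the paper offers nothing to compare it against: the paper does not prove this statement but imports it from \cite{dughmi25transductive}. It remarks only that the cited proof uses boundedness of the loss, symmetry of the rule, and monotonicity of $\rho$, which are exactly the ingredients you use. Your count of $K=O(\log(1/\delta)/\eps)$ candidate training subsets, validated on $O(\log(K/\delta)/\eps^2)$ held-out points, is what produces the $\log(1/(\eps\delta))$ term in the statement.

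The following steps all check out:
\begin{itemize}
\item the double-counting identity in Step 1, since pairs $(T,z)$ with $|T|=n$, $z\notin T$ correspond bijectively to pairs $(U,z)$ with $z\in U$, and $\binom{N}{n+1}(n+1)=\binom{N}{n}m$;
\item the conditional independence used in Step 2;
\item the fact that the Step 3 existence event holds for every realization of $Z$;
\item the final tally, which actually gives $\inf_h L(h)+3\eps/4$.
\end{itemize}

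One step needs repair. In Step 3, the displayed bound $\Pr_T[X_T\le a+\eps/8]\ge \frac{\eps/8}{1-a+\eps/8}$ is false in general. Take $a=0.9$ and $\eps/8=0.01$, and let $X_T$ equal $0$ with probability $0.012$ and $0.9101$ otherwise. Its mean is below $a$, yet $\Pr[X_T\le 0.91]=0.012<0.01/0.11$.

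Also, contrary to your remark, Markov's inequality does apply directly, to the nonnegative variable $X_T$ itself:
\[
\Pr_T[X_T>a+\eps/8]\le \frac{a}{a+\eps/8},
\]
so
\[
\Pr_T[X_T\le a+\eps/8]\ge \frac{\eps/8}{a+\eps/8}\ge \frac{\eps}{16},
\]
using $a\le 1+\eps/4$. This is the lower bound you actually need, so the rest of the proof goes through unchanged.
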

    \cite{dughmi25transductive} state their original theorem for the standard multiclass loss; however, their proof for the agnostic result above only uses boundedness of the loss, symmetry of the transductive learning rule and non-increasingness of the excess transductive error $\rho(n)$. Therefore, their proof applies verbatim to the loss $\lambda^\nu \in [0,1]$, and \eqref{eqn:transductive-error-bound} is precisely its transductive premise.

    In order to precisely pin down $\rho(n)$, recall that
    \begin{align*}
        S_n = \sum_{j=0}^{\min(n,\dnat)} \binom{n}{j}{\binom{p}{\ell+1}}^{j} \implies \frac{\log S_n}{n} \le \frac{\dnat\log(en\binom{p}{\ell+1}/\dnat)}{n} := g(n).
    \end{align*}
    Thus, from the transductive excess error bound \eqref{eqn:transductive-error-bound}, we can chose $\rho(n)$ to be the non-increasing function
    \begin{align*}
        \rho(n) = \begin{cases}
            1 & \text{if $n < d$}, \\
            \min(1, 3(\sqrt{g(n)}+g(n))) & \text{otherwise}.
        \end{cases}
    \end{align*}
    Solving for $\rho(n)\le \eps/4$ and instantiating \Cref{thm:dughmi-agnostic-transductive} immediately gives us the following:
    \begin{theorem}[Fractional List Learner]
        \label{thm:fractional-list-learner}
        Let $\nu$ be a menu of size at most $p$ fixed independently of the training data. There is a learner that uses
        \begin{align*}
            n = \widetilde{O} \left(\frac{\ell\dnat + \log(1/\delta)}{\eps^2}\right)
        \end{align*}
        i.i.d.\ samples from any distribution $\mcD$, and outputs a fractional list $\ell$-list predictor $q$ supported on $\nu$, such that with probability at least $1-\delta$, it holds that $L^\nu_\mcD(q) \le \inf_{h \in \mcH}L^\nu_\mcD(h)+\eps$.
    \end{theorem}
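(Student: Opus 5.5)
The plan is to obtain \Cref{thm:fractional-list-learner} by plugging the transductive guarantee \eqref{eqn:transductive-error-bound} into \Cref{thm:dughmi-agnostic-transductive}, and then solving for the number of samples.

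First, I will check the premises of \Cref{thm:dughmi-agnostic-transductive} for the rule $S \mapsto Q(\cdot\,;S)$ with loss $\lambda^\nu$.
\begin{itemize}
    \item The rule is symmetric in the training sample, since it averages over all permutations.
    \item The loss $\lambda^\nu(q,(x,y)) = \Ind[y\in\nu(x)](1-q_y(x))$ lies in $[0,1]$. This uses $0\le p_{t,y}\le 1$, and averaging over permutations preserves it.
    \item The menu $\nu$ is fixed independently of the fresh samples that the transductive learner will consume.
\end{itemize}
Inequality \eqref{eqn:transductive-error-bound} holds for every $h\in\mcH$ and every labeled sequence. Hence the excess leave-one-out (transductive) error over $\inf_h$ is at most $3(\sqrt{\log(S_n)/n}+\log(S_n)/n)$, which is at most $\rho(n)$ as defined. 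I also need $\rho$ to be non-increasing. The function $g(n)=\dnat\log(en\binom{p}{\ell+1}/\dnat)/n$ is decreasing once $n\ge \dnat$, because $\log(cn)/n$ decreases for $cn\ge e$. Padding $\rho$ with $1$ for small $n$ then keeps it monotone. In the case $\eta=1$, i.e.\ $\log S_n \ge n$, the bound is trivially at least $1$, so capping at $1$ is harmless.

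Next, I solve $\rho(n)\le\eps/4$. It suffices to have $g(n)\le c\eps^2$ for a small constant $c$, since then $3(\sqrt{g}+g)\le 6\sqrt g\le \eps/4$. Using $\log\binom{p}{\ell+1}\le(\ell+1)\log p$, we get
\[
g(n) \le \frac{\dnat\big(\log(en/\dnat)+(\ell+1)\log p\big)}{n}.
\]
So $n=\widetilde O(\ell\dnat/\eps^2)$ suffices, where the $\widetilde O$ absorbs the factors $\log p$ and $\log(1/\eps)$ coming from the self-referential $\log n$ term. This follows by the standard argument that $n\ge C a\log(a)$ implies $a\log n/n\le 1$. \Cref{thm:dughmi-agnostic-transductive} then adds $O(\eps^{-2}\log(1/(\eps\delta)))$ samples. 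This yields a predictor $q$ with $L^\nu_\mcD(q)\le \inf_h L^\nu_\mcD(h)+\eps$ with probability $1-\delta$, for a total of $\widetilde O((\ell\dnat+\log(1/\delta))/\eps^2)$ samples.

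The remaining point is that the output of the Dughmi et al.\ conversion is itself a fractional $\ell$-list supported on $\nu$. Their construction outputs either $Q(\cdot;S')$ on some subsample or an average of such predictors. Fractional $\ell$-lists supported on $\nu$ are closed under convex combinations, because the constraints $0\le q_y\le1$ and $\sum_y q_y\le \ell$ are linear. So whichever form the output takes, it stays in the class. I expect this step to be the main obstacle. Verifying that their proof really uses only boundedness, symmetry and monotonicity, and that its output lies in this convex class rather than being an arbitrary improper predictor, is the delicate part. If it fails, I would instead take the output to be the average of $Q(\cdot;S_{\le t})$ over prefixes, mirroring the online-to-batch averaging used earlier.
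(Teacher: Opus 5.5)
Your proposal is correct and follows the paper's own route: the paper proves this theorem by encoding the agnostic transductive bound \eqref{eqn:transductive-error-bound} for the symmetrized thinned-version-space predictor $Q$ as the non-increasing rate $\rho(n)$ via $g(n)$, solving $\rho(n)\le\eps/4$, and invoking \Cref{thm:dughmi-agnostic-transductive}. Your extra checks (monotonicity of $g$ for $n\ge\dnat$, the estimate $\log\binom{p}{\ell+1}\le(\ell+1)\log p$, and closure of fractional $\ell$-lists supported on $\nu$ under convex combinations, which settles the output form) spell out details the paper treats as immediate, so the prefix-averaging fallback is not needed.
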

    \paragraph{Rounding the fractional list learner.} We now wish to convert any fractional $\ell$-list $q(x)$ output by the learner to an actual list of at most $\ell$ labels, while not degrading error. We could sample every label $y$ with probability $q(x)_y$; however, this might yield more than $\ell$ labels.

    To overcome this, we implement the following rounding: for any $x$, consider $\nu(x)$ written down in some fixed order $\nu(x)=(y_1,\dots,y_r)$. Suppose that the fractional predictor assigns masses
    \begin{align*}
        0 \le q_{y_j}(x) \le 1, \qquad \sum_{j=1}^r q_{y_j}(x) \le \ell.
    \end{align*}
    Place the following intervals on the line:
    \begin{align*}
        J_{y_j}(x) := \left[\sum_{s < j}q_{y_s}(x), \sum_{s \le j}q_{y_s}(x)\right).
    \end{align*}
    That is, the intervals are disjoint, $J_{y_j}(x)$ has length exactly $q_{y_j}(x)$, and all intervals lie in 
    \begin{align*}
        \left[0, \sum_{y \in \nu(x)}q_y(x)\right).
    \end{align*}
    We now draw $u \in (0,1)$ uniformly at random, and consider the $\ell$ points
    \begin{align*}
        u, u+1, \dots, u+\ell-1.
    \end{align*}
    The rounded list $\mu_u(x)$ contains the label $y$ precisely when one of these points falls in $J_y(x)$, i.e.,
    \begin{align}
        \label{eqn:rounding}
        \mu_u(x) := \left\{y \in \nu(x): u+j \in J_y(x) \text{ for some $j \in \{0,1,\dots,\ell-1\}$}\right\}.
    \end{align}
    Since the intervals are pairwise disjoint, the rounded list $\mu_u(x)$ has at most $\ell$ labels always.

    We now argue that each label $y \in \nu(x)$ has the desired marginal. Suppose $J_y(x)=[b, b+q_{y}(x))$. Observe that the points $u,u+1,\dots,u+\ell-1$ hit this interval only when $u$ belongs to the interval $J_y(x)$ modulo 1. Further, since $|J_y(x)| = q_y(x) \le 1$, reduction modulo 1 preserves its length; even if $J_y(x)$ crosses an integer, it gets split across two integers (e.g., $[1.7, 2.3)$ gets split into $[0.7, 1) \cup [0, 0.3)$). Thus, we have that
    \begin{align*}
        \Pr_{u}[y \in \mu_u(x)] = q_y(x).
    \end{align*}
    Thus, when $y \in \nu(x)$, we have that
    \begin{align*}
        \E_u[\Ind[y \notin \mu_u(x)]] = 1-q_y(x) = \lambda^\nu(q, (x,y)).
    \end{align*}
    Averaging over $(x,y) \sim \mcD$, we get
    \begin{align*}
        \E_u[L^\nu_\mcD(\mu_u)] &= \E_{u}\E_{(x,y) \sim \mcD}[\Ind[y \in \nu(x), y \notin \mu_u(x)]] \\
        &=\E_{(x,y) \sim \mcD}\E_u[\Ind[y \in \nu(x), y \notin \mu_u(x)]] \\
        &= \E_{(x,y) \sim \mcD} [\lambda^\nu(q,(x,y))] = L^\nu_{\mcD}(q).
    \end{align*}
    In particular, this means that for some $u \in (0,1)$, it holds that
    \begin{align*}
        L^\nu_\mcD(\mu_u) \le L^\nu_{\mcD}(q).
    \end{align*}
    In order to determine the value of $u$ to use, we draw a validation sample $V \sim \mcD^v$, and apply a uniform convergence bound.

    Concretely, for any $u \in (0,1)$, define the loss function
    \begin{align*}
        g_u((x,y)) = \lambda^\nu(\mu_u, (x,y)).
    \end{align*}
    Let
    \begin{align}
        \widehat{L}^\nu_{V}(g_u) = \frac{1}{v}\sum_{(x,y) \in V} g_u((x,y)) .
    \end{align}
    We will choose $$\widehat{u}=\argmin_{u \in (0,1)} \widehat{L}^\nu_{V}(g_u),$$
    and set $\widehat{\mu}=\mu_{\widehat{u}}$.
    
    Note that for any fixed $(x,y)$, if $y\notin \nu(x)$, then $g_u((x,y))=0$ for all $u$. Otherwise, $g_u((x,y))=1 \iff u \notin J_y(x) \text{ (modulo 1)}$. As reasoned above, the set $J_y(x) \text{ (modulo 1)}$ is one (circular) interval within $(0,1)$; in particular, $g_u((x,y))$ changes its value only at its (at most two) endpoints.

    The $v$ validation points correspondingly induce $O(v)$ such intervals (with $O(v)$ many endpoints). Thus, we have a bound on the growth function: as $u$ ranges over $(0,1)$, we have that the number of different binary patterns that $g_u$ can realize on any fixed $v$ points is at most $O(v)$. By a standard uniform convergence bound, choosing $v = O\left(\frac{\log(1/\eps)+\log(1/\delta)}{\eps^2}\right)$, and taking a union bound over the failure of the fractional list predictor, we obtain that
    \begin{align*}
        L^\nu_\mcD(\widehat{\mu}) \le \inf_{h \in \mcH}L^\nu_\mcD(h) + \eps.
    \end{align*}

    To summarize, Step 3 shows how we can draw a sample of total size $\widetilde{O} \left(\frac{\ell\dnat + \log(1/\delta)}{\eps^2}\right)$ (corresponding to the samples used by \Cref{thm:fractional-list-learner} to construct a fractional list learner, and the validation samples used above to round it) and obtain an $\ell$-list predictor that has small agnostic error with respect to the $L^\nu_\mcD$ loss.
    
    \paragraph{Putting it all together.} 
    We will draw independent batches of $n_1+T+n_3$ samples for the purposes of Steps 1, 2 and 3.
    
    Let $h^\star \in \mcH$ satisfy that $\err_{\mcD}(h^\star) \le \inf_{h \in \mcH} \err_\mcD(h) + \eps/4$. Condition on the good event in Step 1 over the sample $S_1 \sim \mcD^{n_1}$, which happens with probability at least $1-\delta/3$, to obtain a list cover $\mcF(S_1)$ with the guarantee that there exists $\mu^\star \in \mcF(S_1)$ for which
    \begin{align}
        \Pr[h^\star(x)=y, \, y \notin \mu^\star(x)] \le O \left(\frac{\dds \log^2(n_1) + \log(1/\delta)}{n_1} \right). \label{eqn:eqn1}
    \end{align}
    Next, condition further on the good event in Step 2 over the sample $S_2 \sim \mcD^{T}$ and the multiplicative weights procedure, which happens with probability at least $1-\delta/3$, and results in a list hypothesis $\nu$ with the guarantee
    \begin{align}
        \Pr[y \in \mu^\star(x), \, y \notin \nu(x)] \le O\left(\frac{\log(|\mcF(S_1)|)+\log(1/\delta)}{T}\right). \label{eqn:eqn2}
    \end{align}
    A union bound gives us that with probability at least $1-2\delta/3$ over Steps 1 and 2,
    \begin{align}
        \Pr[h^\star(x)=y, \, y \notin \nu(x)] &= \Pr[h^\star(x)=y, \, y \notin \nu(x), \, y \notin \mu^\star(x)] + \Pr[h^\star(x)=y, \, y \notin \nu(x), \, y \in \mu^\star(x)] \nonumber \\
        &\le \Pr[h^\star(x)=y, \, y \notin \mu^\star(x)]  + \Pr[y \notin \nu(x), \, y \in \mu^\star(x)] \nonumber \\
        &\le O \left(\frac{\dds \log^2(n_1) + \log(1/\delta)}{n_1} + \frac{\log(|\mcF(S_1)|)+\log(1/\delta)}{T} \right),\label{eqn:penultimate-step}
    \end{align}
    where in the last inequality, we combined \eqref{eqn:eqn1} and \eqref{eqn:eqn2}.
    
    Finally, condition on the good event in Step 3, which happens with probability at least $1-\delta/3$ over the draw of $S_3 \sim \mcD^{n_3}$, where $n_3 = \widetilde{O} \left(\frac{\ell\dnat + \log(1/\delta)}{\eps^2}\right)$, and results in an $\ell$-list predictor $\widehat{\mu}$ that satisfies
    \begin{align*}
        L^\nu_\mcD(\widehat{\mu}) \le \inf_{h \in \mcH}L^\nu_\mcD(h) + \eps/4.
    \end{align*}
    This further implies, by a similar chain of inequalities as in \cite[Theorem 3.5]{cohen2025sample}, that %
    
    \begin{align*}
        \err_{\mcD}(\widehat{\mu}) - \err_{\mcD}(h^\star) &\le \Pr[y \notin \nu(x), h^\star(x)=y] + \eps/4.
    \end{align*}
    Taking a union bound over the event in \eqref{eqn:penultimate-step}, we get that with probability at least $1-\delta$ over all the steps, it holds that
    \begin{align*}
        \err_{\mcD}(\widehat{\mu}) - \err_{\mcD}(h^\star) &\le O \left(\frac{\dds \log^2(n_1) + \log(1/\delta)}{n_1} + \frac{\log(|\mcF(S_1)|)+\log(1/\delta)}{T}\right) + \eps/4 \\
        &= O \left(\frac{\dds \log^2(n_1) + \log(1/\delta)}{n_1} + \frac{ \dds \log^2(n_1) +\log(1/\delta)}{T}\right) + \eps/4.
    \end{align*}
    We set $n_1$, $T$ such that each of the terms in the parentheses above is at most $\eps/4$. This requires setting
    \begin{align*}
        &n_1 = \widetilde{O}\left(\frac{\dds+\log(1/\delta)}{\eps}\right), \quad
        T = \widetilde{O}\left(\frac{\dds+\log(1/\delta)}{\eps}\right). %
    \end{align*}
    Recalling that $\err_{\mcD}(h^\star) \le \inf_{h \in \mcH}\err_{\mcD}(h) + \eps/4$, we then get
    \begin{align*}
        \err_{\mcD}(\widehat{\mu}) \le \inf_{h \in \mcH}\err_{\mcD}(h) + \eps,
    \end{align*}
    which is the required agnostic list learning guarantee. The total sample complexity is
    \begin{align*}
        n_1 + T + n_3 = \widetilde{O}\left(\frac{\dds}{\eps} + \frac{\ell d^\ell_{\mathrm{Nat}} + \log(1/\delta)}{\eps^2}\right).
    \end{align*}
\end{proof}

\subsubsection{Agnostic List Learning Lower Bound}
\label{sec:agnostic-list-learning-lower-bound}

In this section, we separately show lower bounds of $\Omega(\log(1/\delta)/\eps^2)$ and $\Omega(\ell \dnat/\eps^2)$ on the agnostic sample complexity of $\ell$-list learning. Together with our improved lower bound of $\Omega(\dds/\eps)$ from \Cref{thm:realizable-list-learning-lower-bound} for the realizable setting, this implies that the agnostic sample complexity of $\ell$-list learning is lower-bounded as
\begin{align}
    \label{eqn:final-agnostic-list-learning-lower-bound}
    \Omega\left(\frac{\dds}{\eps}+\frac{\ell \dnat + \log(1/\delta)}{\eps^2}\right).
\end{align}
The lower bound constructions require carefully extending the standard KL-divergence-based machinery for establishing agnostic sample complexity lower bounds to the list learning setting. Recall that the benchmark in agnostic list learning is still only the best single-label assigning hypothesis from the class. Namely, even if the problem becomes easier when we transition from standard multiclass to list learning, the benchmark for agnostic learning remains the same. This makes proving lower bounds for agnostic list learning more challenging.

\begin{theorem}[Agnostic List Learning Lower Bound]
    \label{thm:agnostic-list-learning-lower-bound}
    The following two statements hold:
    \begin{enumerate}
        \item[(1)] For every $\ell \ge 1$, there is a hypothesis class $\mcH$ having $\dnat(\mcH)=\dds(\mcH)=1$ such that for every $\ell$-list learner $\mcA$ (possibly using randomness $r$), $(\eps,\delta) \in \left(0,\frac{1}{8}\right)^2$ and $m \le \frac{\log(1/8\delta)}{64\eps^2}$, there is a distribution $\mcD$ for which
        \begin{align*}
            \Pr_{S \sim \mcD^m,r}\left[\err_{\mcD}(\mcA(S,r)) > \inf_{h \in \mcH} \err_{\mcD}(h)+\eps \right] > \delta.
        \end{align*}
        \item[(2)] For every $\ell \ge 1$, there is a hypothesis class $\mcH$ having $\dnat(\mcH)=d$ such that for every $\ell$-list learner $\mcA$ (possibly using randomness $r$), $\eps \in \left(0,\frac{1}{64}\right)$ and $m \le \frac{\ell d}{32768\eps^2}$, there is a distribution $\mcD$ for which
        \begin{align*}
            \Pr_{S \sim \mcD^m,r}\left[\err_{\mcD}(\mcA(S,r)) > \inf_{h \in \mcH} \err_{\mcD}(h)+\eps \right] > \frac{1}{10}.
        \end{align*}
    \end{enumerate}
\end{theorem}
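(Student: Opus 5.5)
The plan is to treat the two parts separately. Each uses the standard two-point/Assouad machinery for agnostic lower bounds, that is, a KL-divergence bound plus a Bretagnolle--Huber or Le~Cam style inequality. The machinery is adapted so that the learner's \emph{hedging} advantage from outputting $\ell$ labels is neutralized.

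The guiding principle is to build every hard instance so that, at each relevant point $x$, the plausible labels are exactly $\ell+1$ candidates realized by the class, possibly together with an ``inert'' label not realized by any hypothesis. A list of size $\ell$ can then hedge over all candidates except one, so the only decision that matters is \emph{which candidate to drop}.

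The label distribution is tuned so that dropping the wrong candidate costs $\Omega(\eps)$ over the single-hypothesis benchmark. Concretely, a hidden ``good'' candidate gets slightly more mass, and mass is put on the inert label so that the benchmark error and the list error are comparable. Identifying the good candidate among $\ell+1$ symmetric ones from the samples then requires distinguishing distributions whose per-sample KL divergence is $O(\eps^2)$, and $O(\eps^2/\ell)$ per coordinate in part~(2).

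For part~(1), I would use a one-point class (plus, if needed, a point where all hypotheses agree) consisting of $\ell+1$ hypotheses taking $\ell+1$ distinct labels at a single point $x$. This class has $\dnat=\dds=1$. I would take two distributions $\mcD_+,\mcD_-$ that differ only in which of two candidate labels carries the extra $\Theta(\eps)$ mass. I would then check that any $\ell$-list that drops the favoured candidate of $\mcD_\pm$ incurs excess error $>\eps$ relative to the best constant hypothesis under that distribution, and that some candidate must be dropped. Since $\mathrm{KL}(\mcD_+^m\|\mcD_-^m)=O(m\eps^2)$, Bretagnolle--Huber gives that, under one of the two distributions, the learner drops the favoured label with probability at least $\tfrac14 e^{-O(m\eps^2)}$. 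This exceeds $\delta$ when $m\le \log(1/8\delta)/(64\eps^2)$.

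For part~(2), I would tensorize the construction over $d$ points $x_1,\dots,x_d$. At each point the class realizes the product of $(\ell+1)$-sized label sets, so $\dnat=d$. The marginal over points is uniform and the hidden parameter is $\sigma\in[\ell+1]^d$. The gain of $\ell$ in the bound has to come from the per-point label distributions: I would perturb $\ell+1$ labels around a common baseline mass $\approx 1/(\ell+1)$ of the non-inert part, by amounts $\pm\eps'$ with $\eps'\asymp\eps$. The per-sample KL at a point is then $O(\ell\,\eps^2)$ spread over $\ell+1$ labels, so each individual label comparison is only resolved after $\Omega(\ell/\eps^2)$ visits to that point, that is, after $\Omega(\ell d/\eps^2)$ samples in total. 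An Assouad-type argument over the $d$ coordinates, applied via a Fano/Assouad bound on the $(\ell+1)$-ary hypercube, would show that the learner drops a wrong candidate at a constant fraction of the points. Summing the per-point excess error $\Omega(\eps)$ then gives expected excess error $>2\eps$. A Markov-type conversion, as in the realizable proof of \Cref{thm:realizable-list-learning-lower-bound}, then turns this into failure probability $>1/10$.

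The main obstacle is the instance design itself, namely choosing per-point label masses so that three things hold at once:
\begin{itemize}
\item the benchmark (a single label per point) is not trivially beaten by hedging;
\item a wrong exclusion costs $\Theta(\eps)$ of excess error;
\item the KL between neighbouring parameters is only $O(\eps^2/\ell)$ per relevant label, rather than $O(\eps^2)$.
\end{itemize}
My first attempt would give the hypothesis-realized label $\sigma_i$ mass $\tfrac12+\eps$ split as a boosted share, with the remaining mass spread over the other $\ell$ candidates and the inert label. I would then directly compute list versus benchmark error for every possible $\ell$-subset. If hedging still wins, I would instead put most mass on the inert label, so that both list and benchmark errors are dominated by the same constant and only the dropped candidate matters.
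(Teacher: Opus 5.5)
There is a genuine gap, and it is in the instance design you flag as the main obstacle. The family you propose cannot give a $1/\eps^2$ rate once $\ell\ge 2$, whatever label masses you choose. That family is: a finite domain, $\ell+1$ realized labels per point, a fixed conditional label law at each point, and a benchmark predicting one label per point.

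\emph{Part (1).} The two-point test fails outright. If $\mcD_+,\mcD_-$ differ only in which of two candidates is boosted, a fixed list containing both has non-positive excess error under both, with zero samples. That "some candidate must be dropped" does not help, because the learner drops an uncontested one. More generally, let $i^\star$ be the best label at a point, with mass $q^\star$. A list containing $i^\star$ is never worse than the benchmark there. The list of the other $\ell$ candidates is worse by more than $\eps$ only if their total mass is below $q^\star-\eps$, so $q^\star>\eps$ and $q^\star$ exceeds $\ell$ times their average mass. Now take the learner that keeps the $\ell$ candidates with the largest empirical counts. It fails only if the count of $i^\star$ is at most the average count of the others. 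Apply Bernstein to $Z=\Ind[y=i^\star]-\frac{1}{\ell}\Ind[y\in\text{others}]$, which has mean $\ge q^\star/2$ and variance $\le 2q^\star$. This bounds the failure probability by $e^{-c\,mq^\star}\le e^{-c\,m\eps}$. So your part-(1) class is agnostically $\ell$-list learnable from $O(\log(1/\delta)/\eps)$ samples, and the bound you aim for is false for it.

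\emph{Part (2).} Running the same rule pointwise on your product class gives expected excess $O(d/m)$, i.e.\ $O(d/\eps)$ samples at constant confidence. So the $\Omega(\ell d/\eps^2)$ bound is also false for that class.

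\emph{Inert labels.} The inert-label patch does not help. The inert label is just an element of $\mcY$, so the learner may put it in its list. Spreading its mass over many inert labels leaves the argument above untouched: the per-sample KL between neighbouring parameters is $\Omega(\eps)$, not $O(\eps^2)$.

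\emph{The missing idea.} The comparator must itself be able to cover $\ell$ labels. This needs an infinite domain and a rich class, with labels frozen per point.
\begin{itemize}
\item \emph{Class and distribution for part (1).} The paper takes $\mcH$ to be all functions into $\mcY_0$ together with all functions into $\mcY_1$. These label sets are disjoint with $|\mcY_0|=|\mcY_1|=\ell$, so $\dnat=\dds=1$. $\mcD_F$ is uniform over $M\gg m/\eps$ points whose labels $F(x)$ are drawn once from $p_\theta$; each label in $\mcY_\theta$ gets mass $\frac{1+4\eps}{2\ell}$.
\item \emph{Why hedging is neutralized.} The comparator $h_{\theta,F}$ copies $F$ wherever $F(x)\in\mcY_\theta$, so it is as good as the list $\mcY_\theta$. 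An unseen point then contributes expected excess exactly $\frac{4\eps}{\ell}|\mu_T(x)\cap\mcY_{1-\theta}|\ge 0$.
\item \emph{KL bound.} The hidden bit is an aggregate $\pm 2\eps$ shift around $1/2$, so the KL is at most $64m\eps^2$.
\item \emph{Finishing part (1).} Read off $\widehat\theta$ from which of $\mcY_0,\mcY_1$ the learner's lists favour on unseen points, and apply Bretagnolle--Huber. Hoeffding over the unseen points, rather than a reverse-Markov step, converts this into a failure probability.
\item \emph{Part (2).} The factor $\ell$ comes not from an $(\ell+1)$-ary choice per point but from $d\ell$ independent bits. The domain is blocks $[d]\times\N$, the labels are $[\ell]\times\{\pm1\}$, and hypotheses fix one sign per pair $j$ per block. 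Each bit moves a label mass of order $1/\ell$ by $\Theta(\eps/\ell)$, giving per-bit KL $O(m\eps^2/(d\ell))$. Each wrong bit costs $\Theta(\eps/\ell)$ excess on the unseen points of its block. Assouad over $\{-1,1\}^{d\ell}$ then gives $\Omega(\ell d/\eps^2)$.
\end{itemize}
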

\begin{proof}   
    We first prove part (1) of the theorem.
    
    \paragraph{Necessity of $\Omega(\log(1/\delta)/\eps^2)$.} Let $\mcX = \N$. Let $\mcY_0, \mcY_1$ be disjoint label sets with $|\mcY_0|=|\mcY_1|=\ell$, and set $\mcY = \mcY_0 \sqcup \mcY_1$. We set $\mcH$ to be the class that includes all functions from $\mcX \to \mcY_0$ as well as all functions from $\mcX \to \mcY_1$.
    We can readily see that $\dds(\mcH)=\dnat(\mcH)=1$.

    For a hidden bit $\theta \in \{0,1\}$, which we will draw uniformly at random, define a distribution over $\mcY$ as
    \begin{align*}
        p_\theta(y) = \begin{cases}
            \frac{1+4\eps}{2\ell} & \text{if $y \in \mcY_\theta$}, \\
            \frac{1-4\eps}{2\ell} & \text{otherwise}.
        \end{cases}
    \end{align*}
    Thus, the distribution $p_\theta$ favors labels in $\mcY_\theta$. Now, let $M$ be a suitably large integer to be chosen later. Conditioned on $\theta$, draw $F(1),\dots,F(M) \sim p_\theta$ i.i.d., and let $\mcD_{F}$ be the uniform distribution over
    \begin{align*}
        \{(x,F(x)): x \in [M]\}.
    \end{align*}
    For any given $(\theta, F)$, there is a hypothesis $h_{\theta, F} \in \mcY_{\theta}^\mcX$ that agrees with $F(x)$ whenever $F(x) \in \mcY_\theta$, and outputs any other fixed label from $\mcY_\theta$ otherwise. We have that
    \begin{align}
        \label{eqn:comparator-hypothesis-error}
        \inf_{h \in \mcH}\err_{\mcD_{F}}(h) \le \err_{\mcD_{F}}(h_{\theta, F}) = \E_{(x,F(x)) \sim \mcD_{F}}[\Ind[F(x) \notin \mcY_\theta]].
    \end{align}
    So, consider drawing $\theta$ uniformly at random from $\{0,1\}$, then drawing $F(1),\dots,F(M) \sim p_{\theta}$, and then drawing a training sample $S = \{(x_1,y_1),\dots,(x_m,y_m)\} \sim \mcD_{F}^m$. Fix $\mcA$ to be any arbitrary learner, possibly using randomness $r$. We will assume without loss of generality (e.g., by deduplication + padding), that the predictor $\mcA(S,r)$ output by the learner always predicts exactly $\ell$ distinct labels for any $x$; the error of any such possibly deduplicated + padded predictor can only be smaller than the original predictor.
    
    Let $T=(S, r)$ denote the full training ``transcript'' of the given learner $\mcA$, which includes the training sample $S$ and its internal randomness $r$. We will first show that the distributions of the transcript $T$ separately under $\theta=0$ and $\theta=1$ are very close. Namely, let $P^\theta(T)$ be the distribution of the transcript $T$ conditioned on the hidden bit being $\theta$. Since $r$ is independent of the training sample, we have that
    \begin{align*}
        \KL(P^0(T) || P^1(T)) = \KL(P^0(S) || P^1(S)).
    \end{align*}
    Now, condition on any fixed sequence $(x_1,\dots,x_m)$ of unlabeled samples, and consider the conditional distribution of the labels $y_1,\dots,y_m$. The label sequence is determined by the draws $F(x)$ for all the \textit{distinct} values of $x$ in the unlabeled sequence; furthermore, conditioned on the unlabeled sequence, these are independent draws from $p_\theta$. So, denoting by $m'$ the number of distinct values of $x$ in the unlabeled sequence, and observing that the unlabeled samples have the same distribution under both $\theta=0$ and $\theta=1$, we have that
    \begin{align*}
        \KL(P^0(S) || P^1(S)) &= \E_{x_1,\dots,x_m}\left[\KL(P^0(y_1,\dots,y_m|x_1,\dots,x_m) || P^1(y_1,\dots,y_m|x_1,\dots,x_m))\right] \\ 
        &= \E_{x_1,\dots,x_m}\left[ m' \cdot \KL(p_0 || p_1) \right] = \E[m'] \cdot \KL(p_0 || p_1) \le m \cdot \KL(p_0 || p_1) \\
        &= 4m\eps \log\left(\frac{1+4\eps}{1-4\eps}\right) \\
        &\le 64m\eps^2. \tag{since $\eps < 1/8,  \log\left(\frac{1+4\eps}{1-4\eps}\right) \le 16 \eps$}
    \end{align*}
    Thus, we have that
    \begin{align}
        \label{eqn:kl-upper-bound}
        \KL(P^0(T) || P^1(T)) \le 64m\eps^2.
    \end{align}
    Now let $\widehat{\theta}$ be any test based on $T$ for determining the hidden bit $\theta$. By the standard two-point testing lower bound, we have that
    \begin{align*}
        \frac{1}{2}\left(P^0(\widehat{\theta}=1)+P^1(\widehat{\theta}=0)\right) &\ge \frac{1-\mathrm{TV}(P^0(T), P^1(T))}{2} \\
        &\ge \frac{1}{4}\exp\left(-\KL(P^0(T) || P^1(T))\right). \tag{Bretagnolle–Huber inequality}
    \end{align*}
    Since $\theta$ is a uniformly random bit, the quantity on the left-hand side above is $\Pr[\widehat{\theta} \neq \theta]$, where the probability is over the entire random experiment. Hence, combining with \eqref{eqn:kl-upper-bound}, we get that
    \begin{align}
        \label{eqn:pr-test-wrong}
        \Pr[\widehat{\theta} \neq \theta] \ge \frac{1}{4}\exp(-64m\eps^2).
    \end{align}
    Thus, we have a lower bound on the error of any estimator $\widehat{\theta}$ for $\theta$. 

    We will now see how we can lower-bound the excess error of the predictor $\mcA(S,r)=\mcA(T)=\mu_T$ output by the learner over the error of the comparator hypothesis $h_{\theta, F}$ defined in \eqref{eqn:comparator-hypothesis-error}, by relating it to the error of a specific estimator $\widehat{\theta}$ that we will define ahead. 
    
    Namely, for any $x \in [M]$, consider the random variable
    \begin{align*}
        &Z_x := \Ind[F(x) \notin \mu_T(x)] - \Ind[F(x) \neq h_{\theta, F}(x)] = \Ind[F(x) = h_{\theta, F}(x)] - \Ind[F(x) \in \mu_T(x)], \\
        &Z := \frac{1}{M}\sum_{x=1}^M Z_x = \err_{\mcD_{F}}(\mu_T) - \err_{\mcD_{F}}(h_{\theta, F}).
    \end{align*}
    Now let $O \subseteq [M]$ be the observed samples in $S$ and let $U=[M] \setminus O$ be the unobserved samples. Define
    \begin{align*}
        C_0(T) := \sum_{x \in U} |\mu_T(x) \cap \mcY_0|, \qquad C_1(T) := \sum_{x \in U} |\mu_T(x) \cap \mcY_1|,
    \end{align*}
    Since we assumed that $\mu_T$ always predicts exactly $\ell$ distinct labels, we have that
    \begin{align}
        \label{eqn:C-0-C-1-sum}
        C_0(T) + C_1(T) = \ell |U|.
    \end{align}
    Then consider the test
    \begin{align*}
        \widehat{\theta} = \argmax_{s \in \{0,1\}} C_s(T).
    \end{align*}
    This test determines the hidden bit based on whichever of the label sets $\mcY_0$ or $\mcY_1$ is seen more in the predictions of $\mcA$ on the unseen samples. We expect this to be a decent estimator for $\theta$ if the learner $\mcA$ learns well, since labels from $\mcY_\theta$ are likelier to be seen under $\mcD_{F}$.

    Let us now condition on $\theta, T$, which completely determines the predictor $\mu_T$, the observed data $O$ and the unobserved data $U$. Note that while the labels on $O$ get determined, the labels on $U$ remain random; in fact, they are mutually independent draws from $p_\theta$.

    Let us also further condition on the event that $\widehat{\theta} \neq \theta$. Then, by definition of the estimator and \eqref{eqn:C-0-C-1-sum}, it must be the case that
    \begin{align}
        \label{eqn:unobserved-error-large}
        \sum_{x \in U} |\mu_T(x) \cap \mcY_{1-\theta}| \ge \frac{\ell |U|}{2}.
    \end{align}
    Now let us analyze the expected value of $Z_x$ for $x \in U$. We have that
    \begin{align*}
        \E[Z_x|\theta,T,\widehat{\theta} \neq \theta] &= \Pr[F(x) = h_{\theta, F}(x)] - \Pr[F(x) \in \mu_T(x)] \\
        &= \ell \cdot \frac{1+4\eps}{2\ell} - \left(\frac{1+4\eps}{2\ell}|\mu_T(x) \cap \mcY_{\theta}| + \frac{1-4\eps}{2\ell}|\mu_T(x) \cap \mcY_{1-\theta}|\right) \\
        &= \frac{1+4\eps}{2} - \left(\frac{1+4\eps}{2\ell}(\ell-|\mu_T(x) \cap \mcY_{1-\theta}|)) + \frac{1-4\eps}{2\ell}|\mu_T(x) \cap \mcY_{1-\theta}|\right) \\
        &= \frac{4\eps}{\ell}|\mu_T(x) \cap \mcY_{1-\theta}|.
    \end{align*}
    We therefore obtain that
    \begin{align*}
        \E[Z|\theta,T,\widehat{\theta} \neq \theta] &= \frac{1}{M} \left(\sum_{x \in U}\E[Z_x|\theta,T,\widehat{\theta} \neq \theta] + \sum_{x \in O}\E[Z_x|\theta,T,\widehat{\theta} \neq \theta]\right) \\
        &=\frac{1}{M} \left(\frac{4\eps}{\ell}\sum_{x \in U}|\mu_T(x) \cap \mcY_{1-\theta}| + \sum_{x \in O}\E[Z_x|\theta,T,\widehat{\theta} \neq \theta]\right) \\
        &\ge \frac{1}{M}\left(2\eps|U|-|O|\right) \tag{using \eqref{eqn:unobserved-error-large} and $Z_x \ge -1$} \\
        &= \frac{1}{M}\left(|U|(2\eps+1)-M\right) \tag{since $|O|=M-|U|$} \\
        &\ge \frac{1}{M}\left((M-m)(2\eps+1)-M\right) = 2\eps\left(1-\frac{m}{M}\right)-\frac{m}{M} \tag{since $|U| \ge M-m$}.
    \end{align*}
    If $M \ge \frac{8m}{\eps}$, then the last quantity above is at least $3\eps/2$ in the range $\eps \in (0,1/8)$, giving that
    \begin{align}
        \label{eqn:conditional-exp-lb}
        \E[Z|\theta,T,\widehat{\theta} \neq \theta] \ge \frac{3\eps}{2}.
    \end{align}
    But conditioned on $\theta, T, \widehat{\theta} \neq \theta$, we can also apply Hoeffding's inequality to the mutually independent random variables $Z_x$ for $x \in U$, each of which is bounded in the range $[-1,1]$, to obtain
    \begin{align*}
        \Pr\left[\sum_{x \in U}\left(Z_x - \E[Z_x]\right) \le -M\eps/2 ~\Big|~ \theta, T,  \widehat{\theta} \neq \theta\right] \le \exp\left(-\frac{M^2\eps^2}{8|U|}\right) \le \exp\left(-\frac{M\eps^2}{8}\right).
    \end{align*}
    Next, observe that conditioned on $\theta, T, \widehat{\theta} \neq \theta$, 
    \begin{align*}
        Z \le \eps \implies Z - \E[Z] \le -\eps/2 \implies \sum_{x \in [M]}\left(Z_x - \E[Z_x]\right) \le -M\eps/2 \implies \sum_{x \in U}\left(Z_x - \E[Z_x]\right)\le -M\eps/2,
    \end{align*}
    where in the first implication, we used the lower bound on the conditional expectation from \eqref{eqn:conditional-exp-lb}, and in the last implication, we used that conditioned on $\theta$ and $T$, the random variable $Z_x$ for $x \in O$ is determined.

    Therefore, we get that
    \begin{align*}
        \Pr[Z \le \eps | \theta, T, \widehat{\theta} \neq \theta] \le \Pr\left[\sum_{x \in U}\left(Z_x - \E[Z_x]\right) \le -M\eps/2 ~\Big|~ \theta, T,  \widehat{\theta} \neq \theta\right] \le \exp\left(-\frac{M\eps^2}{8}\right).
    \end{align*}
    If $M \ge \frac{8\log(4)}{\eps^2}$, then we get that
    \begin{align*}
        \Pr[Z \le \eps | \theta, T, \widehat{\theta} \neq \theta] \le \frac{1}{4}.
    \end{align*}
    Averaging over $\theta, T$, we obtain
    \begin{align*}
        \Pr[Z \le \eps | \widehat{\theta} \neq \theta] \le \frac{1}{4} \implies \Pr[Z > \eps | \widehat{\theta} \neq \theta] \ge \frac{3}{4}.
    \end{align*}
    Finally, this gives
    \begin{align*}
        \Pr[Z > \eps] &\ge \Pr[\widehat{\theta} \neq \theta] \cdot \Pr[Z > \eps | \widehat{\theta} \neq \theta] \ge \frac{3}{16}\exp(-64m\eps^2). \tag{using \eqref{eqn:pr-test-wrong}}
    \end{align*}
    The probability in the left-hand side is over all the prevalent randomness $\theta, F, S, r$ in the experiment. But now observe that
    \begin{align*}
        Z > \eps \implies \err_{\mcD_{F}}(\mu_T) - \err_{\mcD_{F}}(h_{\theta, F}) > \eps \implies \err_{\mcD_{F}}(\mu_T) - \inf_{h \in \mcH}\err_{\mcD_{F}}(h) > \eps, \tag{using \eqref{eqn:comparator-hypothesis-error}}
    \end{align*}
    and thus, we obtain that
    \begin{align*}
        \Pr_{\theta, F, S, r}\left[\err_{\mcD_{F}}(\mu_T) - \inf_{h \in \mcH}\err_{\mcD_{F}}(h) > \eps\right] \ge \frac{3}{16}\exp(-64m\eps^2).
    \end{align*}
    In particular, this implies the existence of an $F^\star$ such that
    \begin{align*}
        \Pr_{S, r}\left[\err_{\mcD_{F^\star}}(\mu_T) - \inf_{h \in \mcH}\err_{\mcD_{F^\star}}(h) > \eps\right] \ge \frac{3}{16}\exp(-64m\eps^2).
    \end{align*}
    So, if $m \le \frac{\log(1/8\delta)}{64\eps^2}$, we get that
    \begin{align*}
        \Pr_{S, r}\left[\err_{\mcD_{F^\star}}(\mu_T) - \inf_{h \in \mcH}\err_{\mcD_{F^\star}}(h) > \eps\right] \ge 3\delta/2 > \delta,
    \end{align*}
    as desired. The final choice of $M$ for everything above to go through can be $M = \max(8\log(4)/\eps^2, 8m/\eps)$.

    We now move on to prove part (2) of the theorem.
    
    \paragraph{Necessity of $\Omega(\ell \dnat / \eps^2)$.} Let $\mcX = [d] \times \N$, and $\mcY = [\ell] \times \{-1,1\}$. We think of $(j, +1)$ and $(j, -1)$ as an ``opposite pair''.
    
    Now, for $\theta \in \{-1,1\}^\ell$, define
    \begin{align}
        \label{eqn:def-T-theta}
        A_\theta = \{(j,\theta_j): j \in [\ell]\}.
    \end{align}
    That is, for each $j$, $A_{\theta}$ picks out exactly one out of the pair $(j, +1)$ and $(j, -1)$, based on $\theta_j$.
    
    The hypothesis class $\mcH$ comprises all the functions $h: \mcX \to \mcY$, such that for every ``block'' $b \in [d]$, there exists $\theta^{(b)} \in \{-1,1\}^\ell$ satisfying
    \begin{align*}
        h(\{b\} \times \N) \subseteq A_{\theta^{(b)}}.
    \end{align*}
    Namely, for each block $b$, a hypothesis must make one global sign choice $\theta^{(b)}_j$ to choose from $(j, +1)$ and $(j, -1)$; it cannot use both of these within the block.

    We first argue\footnote{In fact, it can also be checked that $\dds(\mcH)=\ell d$.} that $\dnat(\mcH)=d$. Consider the points $x_1,\dots,x_d = (1,1),(2,1),\dots, (d,1)$ (i.e., one point per block). Then, we can observe there is a hypothesis in the class that realizes every label sequence in 
    \begin{align*}
        \{(1,+1), (2,+1),\dots,(\ell, +1), (1,-1)\}^{\times d}
    \end{align*}
    on $x_1,\dots,x_d$ (since each of these points belong to different blocks), and hence, $\dnat(\mcH) \ge d$. Now consider any sequence of $d+1$ distinct points in $\mcX$. By the pigeonhole principle, this sequence must include two points $x=(b,t)$ and $x'=(b,t')$ from the same block $b$. Suppose that the sequence is $\ell$-Natarajan shattered, and the list of $(\ell+1)$ distinct labels that witnesses the shattering for $x$ is $(j_1, s_1),\dots, (j_{\ell+1}, s_{\ell+1})$. Then, observe that the list for $x'$ cannot contain $(j_i, -s_i)$ for any $i \in [\ell+1]$, since no hypothesis can realize both $(j_i, s_i)$ and $(j_i, -s_i)$ on different elements of $\mcX$ within the same block. In particular, this precludes $(j_1,-s_1),\dots,(j_{\ell+1}, -s_{\ell+1})$, all of which are distinct, from belonging to the list for $x'$. But then, there are only $2\ell - (\ell+1) = \ell-1$ remaining available labels for $x'$, contradicting that the list for $x'$ is of size $\ell+1$. We conclude that $\dnat(\mcH) \le d$.
    
    We will now specify a random experiment similar to the one used for proving the lower bound of $\Omega(\log(1/\delta)/\eps^2)$ above; however, this experiment will be a slightly more involved, ``tensorized'' form of the experiment considered there. Fix $\mcA$ to be any arbitrary learner, possibly using randomness $r$. Similar to before, we will assume without loss of generality that the predictor output by the learner always predicts exactly $\ell$ distinct labels for any $x$.

    To set up the experiment, for any $\theta \in \{-1,1\}^\ell$, define a distribution $p_{\theta}$ over $\mcY$ as
    \begin{align*}
        p_{\theta}((j, s)) = \begin{cases}
            \frac{1+32\eps}{2\ell} & \text{if $s=\theta_{j}$}, \\
            \frac{1-32\eps}{2\ell} & \text{otherwise}.
        \end{cases}
    \end{align*}
    Thus, among $(j,+1)$ and $(j, -1)$, $p_\theta$ prefers $(j, \theta_j)$.
    These probabilities sum up to $1$ over all $j$ and $s$, and since $\eps \le 1/64$, it holds that $\frac{1-32\eps}{2\ell} > 0$. %
    Let $M$ be a suitably large integer to be chosen later. We can now specify the random experiment:
    \begin{enumerate}
        \item[(a)] Draw $d\ell$ hidden bits 
        \begin{align*}
            \Theta = (\Theta_{b, j})_{b \in [d], j \in [\ell]}
        \end{align*}
        uniformly at random from $\{-1,1\}^{d\ell}$.
        \item[(b)] Conditioned on $\Theta$, and denoting $\Theta_b = (\Theta_{b,j})_{j \in [\ell]}$, draw labels $F((b,t))$ independently for every $x=(b,t) \in [d] \times [M]$ from $p_{\Theta_b}$. Thus, all the labels for a fixed block $b$ are i.i.d.\ draws from $p_{\Theta_b}$.
        \item[(c)] Conditioned on $F$, let $\mcD_F$ be the uniform distribution over the set
        \begin{align*}
            \{((b,t), F((b,t))): b \in [d], t \in [M]\}.
        \end{align*}
        \item[(d)] Draw a training sample $S = \{(x_1,y_1),\dots,(x_m,y_m)\} \sim \mcD_F^m$, and run the learner $\mcA$, using internal randomness $r$, on this sample.
    \end{enumerate}

    We will now specify a benchmark hypothesis that we will use to compare the predictor output by the learner. For any realization of $(\Theta, F)$, define the hypothesis $h_{\Theta, F} \in \mcH$ as follows: for any $x=(b,t)$ from block $b$, where $t \in [M]$, it uses precisely $\Theta_b$ as its reference: if $F((b,t)) \in A_{\Theta^b}$, then $h_{\Theta, F}((b,t))=F((b,t))$; otherwise, $h_{\Theta,F}((b,t))$ is some arbitrary fixed element in $A_{\Theta_b}$. For $t > M$, we similarly fix $h_{\Theta, F}$ to be some arbitrary fixed element in $A_{\Theta_b}$.

    We have that 
    \begin{align*}
        \inf_{h \in \mcH}\err_{\mcD_{F}}(h) \le \err_{\mcD_{F}}(h_{\Theta, F}) = \E_{((b,t),F((b,t))) \sim \mcD_{F}}[\Ind[F((b,t)) \notin A_{\Theta_b}]].
    \end{align*}

    Now, let $T=(S, r)$ denote the transcript of the given learner $\mcA$, which includes the training sample $S$ and its internal randomness $r$. The transcript $T$ completely determines the list predictor $\mu_T = \mcA(S,r)$ output by the learner. We will now analyze the distribution of the transcript $T$ separately when the underlying hidden bits are fixed at $\Theta$ and $\Theta^{\oplus (b,j)}$, where $\Theta^{\oplus(b,j)}$ flips the bit at position $(b,j)$ in $\Theta$. 
    
    Namely, let $P^\Theta(T)$ be the distribution of the transcript $T$ conditioned on the hidden bits being $\Theta$. Since $r$ is independent of the training sample, we have that
    \begin{align*}
        \KL(P^\Theta(T) || P^{\Theta^{\oplus (b,j)}}(T)) = \KL(P^\Theta(S) || P^{\Theta^{\oplus (b,j)}}(S)).
    \end{align*}
    Now, condition on any fixed sequence $(x_1,\dots,x_m)$ of unlabeled samples, and consider the conditional distribution of the labels $y_1,\dots,y_m$. The label sequence is determined by the draws $F(x)$ for all the \textit{distinct} values of $x=(b', t')$ in the unlabeled sequence; furthermore, conditioned on the unlabeled sequence, these are independent draws from $p_{\Theta_{b'}}$. Moreover, since $\Theta$ and $\Theta^{\oplus(b,j)}$ differ only at position $(b,j)$, the label distributions in the two cases are identical, except for when $x=(b,t)$. So, let us denote by $m'(b)$ the number of distinct values of $t$ for which $x=(b,t)$ exists in the unlabeled sequence. Note that $\E[m'(b)] \le m/d$, since the probability of observing an $x$ from block $b$ in the sample is exactly $1/d$. Observing further that the unlabeled samples have the same distribution under both $\Theta$ and $\Theta^{\oplus (b,j)}$, we get that
    \begin{align*}
        \KL(P^\Theta(S) || P^{\Theta^{\oplus (b,j)}}(S)) &= \E_{x_1,\dots,x_m}\left[\KL(P^\Theta(y_1,\dots,y_m|x_1,\dots,x_m) || P^{\Theta^{\oplus (b,j)}}(y_1,\dots,y_m|x_1,\dots,x_m))\right] \\ 
        &= \E_{x_1,\dots,x_m}\left[ m'(b) \cdot \KL(p_{\Theta_b} || p_{\Theta^{\oplus (b,j)}_b}) \right] = \E[m'(b)] \cdot \KL(p_{\Theta_b} || p_{\Theta^{\oplus (b,j)}_b}) \\
        &\le \frac{m}{d} \cdot \KL(p_{\Theta_b} || p_{\Theta^{\oplus (b,j)}_b})\\
        &= \frac{m}{d} \cdot \left(\left(\frac{1+32\eps}{2\ell}\right)\log\left(\frac{1+32\eps}{1-32\eps}\right)+\left(\frac{1-32\eps}{2\ell}\right)\log\left(\frac{1-32\eps}{1+32\eps}\right)\right) \\
        &= \frac{32m\eps}{d\ell}\log\left(\frac{1+32\eps}{1-32\eps}\right) \\
        &\le \frac{4096 m\eps^2}{d\ell}. \tag{since $\eps < 1/64,  \log\left(\frac{1+32\eps}{1-32\eps}\right) \le 128 \eps$}
    \end{align*}
    Thus, we have that
    \begin{align}
        \label{eqn:pre-assouad-kl-bound}
        \KL(P^\Theta(T) || P^{\Theta^{\oplus (b,j)}}(T)) \le \frac{4096m\eps^2}{d\ell}.
    \end{align}
    We will now require the following standard form of Assouad's lemma:

    \begin{lemma}[Assouad's Lemma (Theorem 2.12 in \cite{tsybakov2009introduction})]
        Let $\{P_\Theta: \Theta \in \{-1,1\}^N\}$ be a family of distributions. If $\Theta$ is uniform on $\{-1,1\}^N$, then every estimator $\widehat{\Theta}$ satisfies
        \begin{align*}
            \E_{\Theta}[d_H(\widehat{\Theta}, \Theta)] \ge \frac{N}{2}\left( 1-\max_{\Theta, i} \mathrm{TV}(P_\Theta, P_{\Theta^{\oplus i}})\right),
        \end{align*}
        where $d_H$ is the Hamming distance.
    \end{lemma}
    Invoking Assouad's lemma in our calculations above, any estimator $\widehat{\Theta}$ for $\Theta$ satisfies
    \begin{align}
        \E_{\Theta}[d_H(\widehat{\Theta}, \Theta)] &\ge \frac{d\ell}{2}\left( 1-\max_{\Theta, (b,j)} \mathrm{TV}(P^\Theta, P^{\Theta^{\oplus (b,j)}})\right) \ge \frac{d\ell}{4}\exp\left( -\max_{\Theta, (b,j)}\KL(P^\Theta || P^{\Theta^{\oplus (b,j)}})\right). \tag{Bretagnolle–Huber inequality} \\
        &\ge \frac{d\ell}{4}\exp\left(-\frac{4096m\eps^2}{d\ell}\right), \label{eqn:assouad-kl-bound}
    \end{align}
    where in the last inequality we used \eqref{eqn:pre-assouad-kl-bound}. Now, if $m \le \frac{d\ell}{32768\eps^2}$, we get that
    \begin{align*}
        \E_{\Theta}[d_H(\widehat{\Theta}, \Theta)] \ge \frac{d\ell}{4}\exp(-1/8) \ge \frac{7}{32}d\ell.
    \end{align*}
    This implies that
    \begin{align}
        \label{eqn:assouad-high-prob-event}
        \Pr\left[d_H(\widehat{\Theta}, \Theta) \ge \frac{3}{32}d\ell\right] \ge \frac{4}{29}.
    \end{align}
    
    Next, we proceed to lower-bound the excess error of the predictor $\mu_T=\mcA(S,r)$ output by the learner over the error of the benchmark hypothesis $h_{\Theta, F}$, by relating it to a specific estimator $\widehat{\Theta}$ that we will construct ahead.

    For any $x=(b,t)$, consider the random variable
    \begin{align*}
        &Z_x := \Ind[F((b,t)) \notin \mu_T(x)] - \Ind[F((b,t)) \neq h_{\Theta, F}(x)] = \Ind[F((b,t)) \in A_{\Theta_{b}}] - \Ind[F((b,t)) \in \mu_T(x)], \\
        &Z := \frac{1}{dM}\sum_{b=1}^{d}\sum_{t=1}^M Z_x = \err_{\mcD_{F}}(\mu_T) - \err_{\mcD_{F}}(h_{\Theta, F}).
    \end{align*}

    For any block $b \in [d]$, let $O_b$ be the distinct values of $(b,t)$ such that $(b,t)$ is observed in the sample $S$, and let $U_b = (\{b\} \times [M]) \setminus O_b$ be the unobserved values from this block. For $j \in [\ell]$, define:
    \begin{align*}
        C^{(b,j)}_{+1} := \sum_{x \in U_b} \Ind[(j, +1) \in \mu_T(x)], \qquad C^{(b,j)}_{-1} := \sum_{x \in U_b} \Ind[(j, -1) \in \mu_T(x)].
    \end{align*}
    Define the following estimator for $\Theta_{b,j}$:
    \begin{align*}
        \widehat{\Theta}_{b, j} = \argmax_{s \in \{+1, -1\}} C^{(b,j)}_{s}. 
    \end{align*}

    Let us now condition on $\Theta, T$, which completely determines the predictor $\mu_T$, the observed data $O=\cup_b O_b$ the unobserved data $U=\cup_b U_b$ and the estimator $\widehat{\Theta}$. Note that while the labels on $O$ get determined, the labels on any $x=(b,t) \in U$ remain random; in fact, they are mutually independent draws from $p_{\Theta_b}$.

    Since we assumed that $\mu_T$ always predicts exactly $\ell$ distinct labels, we have that
    \begin{align}
        &\sum_{j \in [\ell]}\sum_{x \in U_b} \left( \Ind[(j, -\Theta_{b,j}) \in \mu_T(x)] + \Ind[(j, \Theta_{b,j}) \in \mu_T(x)]\right) = \ell |U_b| \nonumber \\
        \implies & 2\sum_{j \in [\ell]}\sum_{x \in U_b} \Ind[(j, -\Theta_{b,j}) \in \mu_T(x)] = \sum_{j \in [\ell]}\left(\sum_{x \in U_b} \Ind[(j, -\Theta_{b,j}) \in \mu_T(x)] + |U_b|-\sum_{x \in U_b}\Ind[(j, \Theta_{b,j}) \in \mu_T(x)]\right). \label{eqn:massive-parenthesis}
    \end{align}
    Define
    \begin{align*}
        H_b := |\{j \in [\ell]: \widehat{\Theta}_{b,j} \neq \Theta_{b,j}\}|, \qquad H = d_H(\widehat{\Theta}, \Theta) := \sum_{b=1}^d H_b.
    \end{align*}
    Whenever $\widehat{\Theta}_{b,j} \neq \Theta_{b,j}$, by definition of the estimator, it holds that
    \begin{align*}
        \sum_{x \in U_b} \Ind[(j, -\Theta_{b,j}) \in \mu_T(x)] \ge \sum_{x \in U_b} \Ind[(j, \Theta_{b,j}) \in \mu_T(x)].
    \end{align*}
    Otherwise, observe that the expression in the parenthesis \eqref{eqn:massive-parenthesis} is always non-negative. We therefore get
    \begin{align}
        &2\sum_{j \in [\ell]}\sum_{x \in U_b} \Ind[(j, -\Theta_{b,j}) \in \mu_T(x)] \ge |U_b|H_b \implies \sum_{x \in U_b}|\mu_T(x)  \cap A_{-\Theta_b}| \ge \frac{|U_b|H_b}{2} \nonumber \\
        \implies\quad& \sum_{b=1}^d\sum_{x \in U_b}|\mu_T(x)  \cap A_{-\Theta_b}| \ge \sum_{b=1}^d\frac{|U_b|H_b}{2} \ge  \frac{M-m}{2}\sum_{b=1}^d H_b = \frac{H(M-m)}{2}. \label{eqn:main-majority-inequality}
    \end{align}
    where in the last inequality, we used that $|U_b|=M-|O_b| \ge M-m$.
    
    Recall from \eqref{eqn:assouad-high-prob-event} that $H$ is least $3d\ell/32$ with constant probability. Let us analyze the expected value of $Z_{x}$ for $x=(b,t) \in U_b$, conditioned on 
    $$E := (\Theta, T, H \ge 3d\ell/32).$$
    The labels on the unobserved values of $x$ are still mutually independent draws conditioned on $E$. Namely, for any unobserved $x=(b,t) \in U_b$, we have
    \begin{align*}
        \E[Z_x | E] &= \Pr[F((b,t)) \in A_{\Theta_{b}}] - \Pr[F((b,t)) \in \mu_T(x)] \\
        &= \ell \cdot \frac{1+32\eps}{2\ell} - \left (\frac{1+32\eps}{2\ell}\right)|\mu_{T}(x) \cap A_{\Theta_b}| - \left(\frac{1-32\eps}{2\ell}\right)|\mu_{T}(x) \cap A_{-\Theta_b}| \\
        &= \frac{1+32\eps}{2} - \left(\frac{1+32\eps}{2\ell}\right)(\ell -|\mu_{T}(x) \cap A_{-\Theta_b}|) - \left(\frac{1-32\eps}{2\ell}\right)|\mu_{T}(x) \cap A_{-\Theta_b}| \\
        &= \frac{32\eps}{\ell}|\mu_{T}(x) \cap A_{-\Theta_b}|,
    \end{align*}
    and hence,
    \begin{align*}
        \E[Z | E] &= \frac{1}{dM}\sum_{b=1}^{d}\left(\sum_{x \in U_b} \E[Z_x | E] + \sum_{x \in O_b} \E[Z_x | E]\right) \\
        &\ge \frac{1}{dM}\left(\frac{32\eps}{\ell}\sum_{b=1}^d\sum_{x \in U_b}|\mu_{T}(x) \cap A_{-\Theta_b}|-m\right) \tag{since $Z_x \ge -1, |O| \le m$}\\
        &\ge %
        \frac{16\eps H}{d\ell}\left(1-\frac{m}{M}\right)-\frac{m}{dM}\tag{using \eqref{eqn:main-majority-inequality}} \\
        &\ge \frac{3\eps}{2}\left(1-\frac{m}{M}\right) - \frac{m}{dM} \tag{since $H \ge 3d\ell/32$ on $E$}.
    \end{align*}
    Choosing $M \ge \frac{100m}{\eps}$ ensures that the last quantity above is at least $1.3\eps$ for $\eps \le 1/64$, giving that
    \begin{align}
        \label{eqn:lb-conditional-exp-assouad}
        \E[Z | \Theta, T, H \ge 3d\ell/32] \ge 1.3\eps.
    \end{align}
    But conditioned on $E$, we can also apply Hoeffding's inequality to the mutually independent random variables $Z_x$ for $x \in U$, each of which is bounded in the range $[-1,1]$, to obtain
    \begin{align*}
        \Pr\left[\sum_{x \in U}\left(Z_x - \E[Z_x]\right) \le -0.3dM\eps ~\Big|~ E\right] \le \exp\left(-\frac{d^2M^2\eps^2}{25|U|}\right) \le \exp\left(-\frac{dM\eps^2}{25}\right). \tag{since $|U| \le dM$}
    \end{align*}
    Next, observe that conditioned on $E$, 
    \begin{align*}
        Z \le \eps \implies Z - \E[Z] \le -0.3\eps &\implies \sum_{x \in [d] \times [M]}\left(Z_x - \E[Z_x]\right) \le -0.3dM\eps  \\
        &\implies \sum_{x \in U}\left(Z_x - \E[Z_x]\right)\le -0.3dM\eps,
    \end{align*}
    where in the first implication, we used the lower bound on the conditional expectation from \eqref{eqn:lb-conditional-exp-assouad}, and in the last implication, we used that conditioned on $\Theta$ and $T$, the random variable $Z_x$ for $x \in O$ is determined. Therefore, we get that
    \begin{align*}
        \Pr[Z \le \eps | \Theta, T, H\ge 3d\ell/32] \le  \Pr\left[\sum_{x \in U}\left(Z_x - \E[Z_x]\right) \le -0.3dM\eps ~\Big|~ E\right] \le \exp\left(-\frac{dM\eps^2}{25}\right).
    \end{align*}
    If $M \ge \frac{25\log(4)}{d\eps^2}$, we get
    \begin{align*}
        \Pr[Z \le \eps | \Theta, T, H\ge 3d\ell/32] \le \frac{1}{4}.
    \end{align*}
    Averaging over $\Theta, T$, we get
    \begin{align*}
        \Pr[Z \le \eps | H\ge 3d\ell/32]  \le \frac{1}{4} \implies \Pr[Z > \eps | H\ge 3d\ell/32] \ge \frac{3}{4}.
    \end{align*}
    Finally, this gives
    \begin{align*}
        \Pr[Z > \eps] \ge \Pr[H\ge 3d\ell/32] \cdot \Pr[Z > \eps | H\ge 3d\ell/32] \ge \frac{4}{29}\cdot \frac{3}{4} > \frac{1}{10}.
    \end{align*}
    The probability in the left-hand side is over all the prevalent randomness $\Theta, F, S, r$ in the experiment. But now observe that
    \begin{align*}
        Z > \eps \implies \err_{\mcD_{F}}(\mu_T) - \err_{\mcD_{F}}(h_{\Theta, F}) > \eps \implies \err_{\mcD_{F}}(\mu_T) - \inf_{h \in \mcH}\err_{\mcD_{F}}(h) > \eps,
    \end{align*}
    and thus, we obtain that
    \begin{align*}
        \Pr_{\Theta, F, S, r}\left[\err_{\mcD_{F}}(\mu_T) - \inf_{h \in \mcH}\err_{\mcD_{F}}(h) > \eps\right] > \frac{1}{10}.
    \end{align*}
    In particular, this implies the existence of an $F^\star$ such that
    \begin{align*}
        \Pr_{S, r}\left[\err_{\mcD_{F^\star}}(\mu_T) - \inf_{h \in \mcH}\err_{\mcD_{F^\star}}(h) > \eps\right] > \frac{1}{10}.
    \end{align*}
    The lower bound above goes through as long as $m \le \frac{d\ell}{32768\eps^2}$ and $M = \max(25\log(4)/d\eps^2, 100m/\eps)$. This concludes the proof.
\end{proof}